\newcommand{\defEq}{\stackrel{.}{=}}
\newcommand{\argmax}{{\operatorname{argmax }}}
\renewcommand{\Pr}{\mathbb{P}}
\newcommand{\E}[2]{\underset{#1}{\mathbb{E}}\left[ #2 \right]}
\newcommand{\R}{\mathbf{R}}
\newcommand{\XCal}{\mathscr{X}}
\newcommand{\YCal}{\mathscr{Y}}
\newcommand{\Real}{\mathbb{R}}
\newcommand{\costin}{c_{\rm in}}
\newcommand{\costout}{c_{\rm out}}
\newcommand{\costFN}{c_{\rm fn}}
\newcommand{\PTr}{\Pr_{\mathrm{in}}}
\newcommand{\piTr}{\pi_{\mathrm{in}}}
\newcommand{\PTe}{\Pr_{\mathrm{te}}}
\newcommand{\cX}{\mathcal{X}}
\newcommand{\1}{\mathbf{1}}
\renewcommand{\R}{\mathbb{R}}
\newcommand{\Ex}{\mathbb{E}}
\newcommand{\Pin}{\Pr_{\rm in}}
\newcommand{\Pcomb}{\Pr_{\rm comb}}
\newcommand{\abstain}{\perp}
\theoremstyle{plain}
\newtheorem{theorem}{Theorem}[section]
\newtheorem{lemma}[theorem]{Lemma}
\theoremstyle{definition}
\theoremstyle{remark}
\theoremstyle{remark}
\newtheorem{example}[theorem]{Example}
\begin{document}

\title{Plugin estimators for selective classification\\with out-of-distribution detection}

%


\author{
Harikrishna Narasimhan\\
Google Research, Mountain View\\
\tt{hnarasimhan@google.com}
\and
Aditya Krishna Menon\\
Google Research, New York\\
\tt{adityakmenon@google.com}
\and
Wittawat Jitkrittum\\
Google Research, New York\\
\tt{wittawat@google.com}
\and
Sanjiv Kumar\\
Google Research, New York\\
\tt{sanjivk@google.com}
}
\maketitle

\doparttoc 
\faketableofcontents 

\part{} 

\begin{abstract}
Real-world classifiers can benefit from 
the option of
\emph{abstaining} 
from predicting on samples where 
they have
low confidence.
Such abstention is 
particularly useful
on samples which are close to the learned decision boundary,
or which are outliers with respect to the training sample.
These settings have been the subject of extensive but disjoint study in the 
\emph{selective classification} (\emph{SC})
and \emph{out-of-distribution} (\emph{OOD}) detection literature.
Recent work on \emph{selective classification with OOD detection} (\emph{SCOD}) has argued for the unified study of these problems;
however,
the formal underpinnings of this problem are still nascent,
and existing techniques are heuristic in nature.
In this paper, we 
propose new 
plugin estimators
for SCOD that are 
theoretically grounded,
effective, 
and generalise existing approaches from the SC and OOD detection literature.
In the course of our analysis, we formally explicate how na\"{i}ve use of existing SC and OOD detection baselines may be inadequate for SCOD.
We empirically demonstrate that our approaches yields competitive SC and OOD detection performance 
compared to baselines from both literatures.



\end{abstract}

\section{Introduction}
Given a training sample drawn i.i.d.\ from a distribution $\Pr_{\rm in}$ (e.g., images of cats and dogs),
the standard classification paradigm concerns learning a classifier that accurately predicts the label for test samples drawn from $\Pr_{\rm in}$.
However, in real-world classifier deployment, one may encounter \emph{out-of-distribution} (\emph{OOD}) test samples, i.e., samples drawn from some distinct distribution $\Pr_{\rm out} \neq \Pr_{\rm in}$ (e.g., images of aeroplanes).
\emph{Out-of-distribution detection} is the problem of accurately identifying such OOD samples, and has received considerable study of late~\citep{Hendrycks:2017,Lee:2018,Hendrycks:2019,Ren:2019,Huang:2021,Huang:2021b,Thulasidasan:2021,wang2022vim,Bitterwolf:2022,Katz:2022,Wei:2022,Sun:2022,Hendrycks:2022}.
An accurate OOD detector
allows one to \emph{abstain} from making a prediction on OOD samples, 
rather than making an egregiously incorrect prediction;
this yields more reliable and trust-worthy classifiers.

The quality of an OOD detector is typically assessed by its ability to distinguish in-distribution (ID) versus OOD samples.
However,
some recent works~\citep{Kim:2021,Xia:2022,Cen:2023} 
argued that to more accurately capture the real-world deployment of OOD detectors, 
it is more natural to consider distinguishing 
\emph{correctly-classified ID}
versus
\emph{OOD and misclassified ID} samples.
Indeed,
it is intuitive for a classifier 
to abstain from predicting on ``hard'' (e.g., ambiguously labelled) ID samples which are likely to be misclassified.
This problem is termed 
\emph{unknown detection} (\emph{UD}) in~\citet{Kim:2021},
and
\emph{selective classification with OOD detection} (\emph{SCOD}) in~\citet{Xia:2022};
we adopt the latter in the sequel.

One may view SCOD as a unification of 
OOD detection
and the classical 
\emph{selective classification} (\emph{SC})
paradigm~\citep{Chow:1970,Bartlett:2008,Cortes:2016,Ramaswamy:2018,Thulasidasan:2019,Ni:2019,Charoenphakdee:2021}.
Both OOD detection and SC have 
well-established formal underpinnings, 
with accompanying principled techniques ~\citep{Bitterwolf:2022,Cortes:2016,Ramaswamy:2018};
however, by contrast,
the understanding of SCOD is still nascent.
In particular,
existing SCOD approaches either employ 
OOD detection baselines~\citep{Kim:2021},
or heuristic design choices~\citep{Xia:2022}.
It remains unclear if 
there are conditions where such approaches may fail, 
and whether there are effective,
theoretically grounded alternatives.

In this paper,
we provide a statistical 
formulation for
the SCOD problem,
and design two
novel plug-in estimators for SCOD that 
operate under different assumptions on available data during training
(Table~\ref{tbl:summary}).
The first estimator addresses
the challenging setting where 
one has
access to \emph{only} ID data, 
and leverages existing techniques for SC and OOD detection in a \emph{black-box} manner.
The second estimator 
addresses the setting
where one additionally access to a ``wild'' sample comprising a mixture of both ID and OOD data~\citep{Katz:2022},
and involves the design of novel \emph{loss functions} 
{with consistency guarantees}.
Both estimators generalise existing approaches from the SC and OOD detection literature, 
and thus offer a unified means of reasoning about both problems.
In sum, our contributions are:
\begin{enumerate}[label=(\roman*),leftmargin=16pt,itemsep=0pt,topsep=0pt]
    \item 
    We provide a statistical formulation 
    for SCOD
    that unifies both the SC and OOD detection problems (\S\ref{sec:relation}), 
    and derive
    the corresponding
    Bayes-optimal solution (Lemma \ref{lem:scod-bayes}).
    Intriguingly this solution is
    a variant of the 
    popular maximum softmax probability baseline for SC and OOD detection~\citep{Chow:1970,Hendrycks:2017},
    using 
    a \emph{sample-dependent} rather than constant threshold.
    
    
    \item  
    Based on the form of the Bayes-optimal solution,
    we propose two new plug-in approaches for SCOD (\S\ref{sec:formulation}).
    These operate in settings with 
    access to only ID data (\S\ref{sec:plug-in-black-box}), and 
    access to 
    a mixture of
    ID and OOD data (\S\ref{sec:plug-in-losses})
    respectively, and 
    generalise existing SC and OOD detection techniques.
    
    \item 
    Experiments on benchmark 
    image classification datasets 
    (\S\ref{sec:expts})
    show that our plug-in approaches yield competitive classification and OOD detection performance at any desired abstention rate, 
    compared to a range of both SC and OOD detection baselines.
\end{enumerate}

\begin{table}[!t]
    \centering
    \renewcommand{\arraystretch}{1.25}
    \footnotesize
    
    \caption{Summary of plug-in estimators to the selective classification with OOD detection (SCOD) problem.
    In SCOD, we seek to learn a classifier capable of rejecting \emph{both} out-of-distribution (OOD) and ``hard'' in-distribution (ID) samples.
    We present 
    two plug-in estimators for SCOD, 
    one of which assumes access to only ID data,
    the other which additionally assumes access to a sample of OOD data.
    Both methods reject samples by suitably combining scores that order samples based on selective classification (SC) or OOD detection criteria.
    The former leverages \emph{any} off-the-shelf scores for these tasks,
    while the latter minimises novel loss functions to estimate these scores.}
    \label{tbl:summary}    
    
    \begin{tabular}{@{}lp{1.875in}p{1.875in}@{}}
        \toprule
         & \textbf{Black-box SCOD} & \textbf{Loss-based SCOD} \\
        \toprule
        \toprule
        \textbf{Training data}           & ID data only & ID + OOD data \\
        \textbf{SC score $s_{\rm sc}$}   & Any off-the-shelf technique, e.g., maximum softmax probability~\citep{Chow:1970} & Minimise~\eqref{eqn:decoupled-surrogate} or~\eqref{eqn:css-surrogate}, obtain $\max_{y \in [L]} f_y( x )$ \\
        \textbf{OOD score $s_{\rm ood}$} & Any off-the-shelf technique, e.g., gradient norm~\citep{Huang:2021} & Minimise~\eqref{eqn:decoupled-surrogate} or~\eqref{eqn:css-surrogate}, obtain $s( x )$ \\
                \midrule
        \textbf{Rejection rule}           & Combine $s_{\rm sc}$, $s_{\rm ood}$ via~\eqref{eqn:plug-in-black-box} & Combine $s_{\rm sc}$, $s_{\rm ood}$ via~\eqref{eqn:plug-in-black-box} or~\eqref{eqn:reject-coupled} \\
        \bottomrule
    \end{tabular}
\end{table}



\section{Background and notation}
\label{sec:background}
We focus on the multi-class classification setting:
given instances $\XCal$, labels $\YCal \defEq [ L ]$, and 
a training sample $S = \{ ( x_n, y_n ) \}_{n \in [N]} \in ( \XCal \times \YCal )^N$ comprising $N$ i.i.d. draws from a \emph{training} (or \emph{inlier}) \emph{distribution} $\Pr_{\rm in}$,
the goal is to
learn a classifier $h \colon \XCal \to \YCal$ with minimal misclassification error 
$\Pr_{\rm te}( y \neq h( x ) )$
for a \emph{test distribution} $\Pr_{\rm te}$.
By default, it is assumed that the training and test distribution coincide, i.e., $\Pr_{\rm te} = \Pr_{\rm in}$.
Typically, $h( x ) = \argmax_{y \in [L]} f_y( x )$,
where $f \colon \XCal \to \Real^L$ scores the affinity of each label to a given instance.
One may learn $f$ 
via minimisation of the \emph{empirical surrogate risk}
$ \hat{R}( f; S, \ell ) \defEq \frac{1}{| S |} \sum_{( x_n, y_n ) \in S} \ell( y_n, f( x_n ) ) $
for \emph{loss function} $\ell \colon [ L ] \times \Real^L \to \Real_+$.

The standard classification setting requires that one make a prediction for \emph{all} test samples.
However, as we now detail, 
it is often prudent to allow the classsifer to \emph{abstain} from predicting on some samples.

\textbf{Selective classification (SC)}.
In
\emph{selective classification (SC)},
also known as 
\emph{learning to reject} or
\emph{learning with abstention}~\citep{Bartlett:2008,Ramaswamy:2018,Charoenphakdee:2021,Gangrade:2021,CorDeSMoh2016,Thulasidasan:2019,Mozannar:2020},
one may \emph{abstain} from predicting on samples where a classifier has low-confidence.
Intuitively, this allows one to abstain on ``hard'' (e.g., ambiguously labelled) samples,
which could then be forwarded to an expert (e.g., a human labeller).
Formally, given a {budget} $b_{\rm rej} \in ( 0, 1 )$ on the fraction samples that can be rejected,
one learns a classifier $h \colon \XCal \to \YCal$ and \emph{rejector} $r \colon \XCal \to \{ 0, 1 \}$
to minimise the misclassification error on non-rejected samples:
\begin{equation}
    \label{eqn:l2r}
    \min_{h, r} \Pr_{\rm in}( y \neq h( x ), r( x ) = 0 ) \colon \Pr_{\rm in}( r( x ) = 1 ) \leq b_{\rm rej}.
\end{equation}
The simplest SC baseline is \emph{confidence-based} rejection~\citep{Chow:1970,Ni:2019}, 
wherein $r$ is constructed by thresholding the maximum of the \emph{softmax probability} $p_y( x ) \propto \exp( f_y( x ) )$.
Alternatively, 
one may modify the training loss $\ell$~\citep{Bartlett:2008,Ramaswamy:2018,Charoenphakdee:2021,Gangrade:2021},
or
learn an explicit rejector jointly with the classifier~\citep{CorDeSMoh2016,Geifman:2019,Thulasidasan:2019,Mozannar:2020}.

%
\textbf{OOD detection}.
In \emph{out-of-distribution} (\emph{OOD}) \emph{detection},
one seeks to identify test samples which are anomalous with respect to the training distribution~\citep{Hendrycks:2017,bendale2016towards,Bitterwolf:2022}.
Intuitively, 
this allows one to abstain from predicting on samples where 
it is unreasonable to expect the classifier to generalise.
Formally,
suppose
$\Pr_{\rm te} \defEq \pi^{*}_{\rm in} \cdot \Pr_{\rm in} + ( 1 - \pi^*_{\rm in} ) \cdot \Pr_{\rm out}$,
for (unknown) distribution $\Pr_{\rm out}$ and mixing weight $\pi^*_{\rm in} \in ( 0, 1 )$.
Samples from $\Pr_{\rm out}$ may be regarded as 
\emph{outliers}
or
\emph{out-of-distribution}
with respect to the inlier distribution (ID) $\Pr_{\rm in}$.
Given a budget $b_{\rm fpr} \in ( 0, 1 ) $ on the {false positive rate}, 
i.e., the fraction of ID samples incorrectly predicted as OOD,
the goal is to learn an \emph{OOD detector} $r \colon \XCal \to \{ 0, 1 \}$ via
\begin{equation}
    \label{eqn:ood}
    \min_{r} \Pr_{\rm out}( r( x ) = 0 ) \colon \Pr_{\rm in}( r( x ) = 1 ) \leq b_{\rm fpr}.
\end{equation}
\emph{Labelled OOD detection}~\citep{Lee:2018,Thulasidasan:2019} additionally accounts for the accuracy of $h$.
OOD detection is a natural task in real-world deployment,
as standard classifiers may produce high-confidence predictions even on completely arbitrary inputs~\citep{Nguyen:2015,Hendrycks:2017}, 
and assign higher scores to OOD compared to ID samples~\citep{Nalisnick:2019}.
Analogous to SC,
a remarkably effective baseline for OOD detection that requires only ID samples is 
the \emph{maximum softmax probability}~\citep{Hendrycks:2017}, possibly with temperature scaling and data augmentation~\citep{Liang:2018}.
Recent works found that the maximum \emph{logit} 
may be preferable~\citep{vaze2021open,Hendrycks:2022,Wei:2022}.
These may be recovered as a limiting case of energy-based approaches~\citep{Liu:2020b}.
More effective detectors can be designed in settings where one additionally has access to an OOD sample~\citep{Hendrycks:2019,Thulasidasan:2019,Dhamija:2018,Katz:2022}.

\textbf{Selective classification with OOD detection ({SCOD})}.
The SC and OOD detection problem both involve abstaining from prediction, but for subtly different reasons:
SC concerns \emph{in-distribution but difficult} samples,
while OOD detection concerns \emph{out-of-distribution} samples.
In practice, one is likely to encounter both types of samples during classifier deployment.
To this end,
\emph{selective classification with OOD detection} (\emph{SCOD})~\citep{Kim:2021,Xia:2022}
allows for abstention on each sample type,
with a user-specified parameter controlling their relative importance.
Formally, 
suppose as before that $\Pr_{\rm te} = \pi^{*}_{\rm in} \cdot \Pr_{\rm in} + ( 1 - \pi^*_{\rm in} ) \cdot \Pr_{\rm out}$.
Given a budget 
$b_{\rm rej} \in ( 0, 1 )$ on the fraction of test samples that can be {rejected},
the goal is to learn a classifier 
$h \colon \XCal \to \YCal$ {and} a {rejector} 
$r \colon \XCal \to \{ 0, 1 \}$
to minimise:
\begin{equation}
    \label{eqn:scod}
    \min_{h, r} \, (1 - \costFN) \cdot \Pr_{\rm in}( y \neq h( x ), r( x ) = 0 ) + \costFN \cdot \Pr_{\rm out}( r( x ) = 0 ) \colon \Pr_{\rm te}( r( x ) = 1 ) \leq b_{\rm rej}.
\end{equation}
Here, 
$\costFN \in [ 0, 1 ]$ is a user-specified cost of 
not rejecting an OOD sample.


%
\textbf{Contrasting SCOD, SC, and OOD detection.}
Before proceeding, it is worth pausing to 
emphasise the distinction between the three problems introduced above.
All problems involve learning a rejector to enable the classifier from abstaining on certain samples.
Crucially, SCOD encourages rejection on both 
ID samples that are likely to be misclassified,
\emph{and} 
OOD samples;
by contrast, the SC and OOD detection problems only focus on one of these cases.
Recent work has observed that standard OOD detectors tend to reject misclassified ID samples~\citep{Cen:2023};
thus,
not considering the latter can lead to overly pessimistic estimates of rejector performance.

Given the practical relevance of SCOD, 
it is of interest to design effective techniques for the problem, 
analogous to those for SC and OOD detection.
Surprisingly, the literature offers only a few instances of such techniques,
most notably the SIRC method of~\citet{Xia:2022}.
While empirically effective, this approach is heuristic is nature.
We seek to design theoretically grounded techniques that are equally effective.
To that end, we begin by investigating a fundamental property of SCOD.

\section{Bayes-optimal selective classification with OOD detection}
\label{sec:relation}
We begin our formal analysis of SCOD by deriving 
its associated
\emph{Bayes-optimal} solution,
which generalises existing results for SC and OOD detection,
and sheds light on potential SCOD strategies.

%
\subsection{Bayes-optimal SCOD rule: sample-dependent confidence thresholding}

Before designing new techniques for SCOD, 
it is prudent to ask:
what are the theoretically optimal choices for $h, r$ that we hope to approximate?
More precisely, we seek to explicate the minimisers of the population SCOD objective~\eqref{eqn:scod} over \emph{all} possible classifiers $h \colon \XCal \to \YCal$, and rejectors $r \colon \XCal \to \{ 0, 1 \}$.
These minimisers will depend on the unknown distributions $\PTr, \PTe$, and are thus not practically realisable as-is;
nonetheless, 
they will subsequently motivate the design of simple, effective, and theoretically grounded solutions to SCOD.
Further, these help study the efficacy of existing baselines.

Via standard Lagrangian analysis, observe that~\eqref{eqn:scod} is equivalent to minimising over $h, r$:
\begin{align}
    \lefteqn{L_{\rm scod}(h, r)}
    \nonumber\\
    &= (1 - \costin - \costout) \cdot \Pr_{\rm in}( y \neq {h}( x ), r( x ) = 0 ) + 
        \costin \cdot \Pr_{\rm in}( r( x ) = 1 ) + \costout \cdot \Pr_{\rm out}( r( x ) = 0 ).
    \label{eqn:scod-soft}
\end{align}
Here, $\costin, \costout \in [ 0, 1 ]$ are distribution-dependent constants 
which encode 
the
false negative outlier cost $\costFN$, 
abstention budget $b_{\rm rej}$,
and the proportion $\pi^{*}_{\rm in}$ of inliers in
$\Pr_{\rm te}$.
We shall momentarily treat these constants as fixed and known; 
we return to the issue of suitable choices for them in~\S\ref{sec:formulation}.
Note that we obtain a soft-penalty version of the SC problem when 
$\costout = 0$,
and the OOD detection problem when 
$\costin + \costout = 1$.
In general,
we have the following Bayes-optimal solution for~\eqref{eqn:scod-soft}.

%
\begin{lemma}
\label{lem:scod-bayes}
Let $( h^*, r^* )$ denote
any minimiser of \eqref{eqn:ood}.
Then,
for any $x \in \XCal$ with $\Pr_{\rm in}( x ) > 0$: 
\begin{equation}
    \label{eqn:scod-bayes}
    r^*( x ) = 1 \iff  
    (1 - \costin - \costout) \cdot \left( 1 - \max_{y \in [L]} \Pr_{\rm in}(y \mid x) \right) + \costout \cdot \frac{\Pr_{\rm out}( x )}{\Pr_{\rm in}( x )} > \costin.
\end{equation}
Further, 
$r^*( x ) = 1$ when $\Pr_{\rm in}( x ) = 0$,
and
$h^*( x ) = \argmax_{y \in [L]} \Pr_{\rm in}( y \mid x )$ when $r^*( x ) = 0$.
\end{lemma}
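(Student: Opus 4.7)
The plan is a standard pointwise Bayes-decision analysis that reduces the joint minimisation of $L_{\rm scod}$ in \eqref{eqn:scod-soft} to an $x$-by-$x$ comparison. First, I would rewrite each of the three probability terms as an integral against a common dominating measure $\mu$ on $\XCal$, obtaining $L_{\rm scod}(h, r) = \int_{\XCal} \ell_x(h(x), r(x)) \, d\mu(x)$, where the integrand $\ell_x(y, b)$ depends only on $(h(x), r(x))$:
\begin{equation*}
\ell_x(y, 0) = (1 - \costin - \costout) \cdot \Pin(x) \cdot (1 - \eta_y(x)) + \costout \cdot \Pout(x), \qquad \ell_x(y, 1) = \costin \cdot \Pin(x),
\end{equation*}
with $\eta_y(x) \defEq \Pin(y \mid x)$. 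Because $(h, r)$ may be chosen independently at each $x$, minimising $L_{\rm scod}$ reduces to minimising $\ell_x$ pointwise.

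Next, I would fix $x$ with $\Pin(x) > 0$ and compare the two branches. On the branch $r(x) = 0$, the integrand is minimised in $h(x)$ by any $h^*(x) \in \argmax_{y \in [L]} \eta_y(x)$, yielding accept-cost
\begin{equation*}
A(x) \defEq (1 - \costin - \costout) \cdot \Pin(x) \cdot \big( 1 - \max_{y \in [L]} \eta_y(x) \big) + \costout \cdot \Pout(x);
\end{equation*}
the branch $r(x) = 1$ gives reject-cost $B(x) \defEq \costin \cdot \Pin(x)$, independent of $h(x)$. Hence the Bayes rejector takes $r^*(x) = 1$ iff $A(x) > B(x)$ (ties resolved arbitrarily). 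Dividing through by $\Pin(x) > 0$ and rearranging recovers exactly the threshold condition in \eqref{eqn:scod-bayes}.

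Finally, for $x$ with $\Pin(x) = 0$, I would observe that $B(x) = 0 \leq \costout \cdot \Pout(x) = A(x)$, so $r^*(x) = 1$ is always optimal, matching the lemma; and the choice of $h^*$ is only constrained on $\{ x : r^*(x) = 0 \}$, so the Bayes classifier $h^*(x) = \argmax_y \Pin(y \mid x)$ drops out of the pointwise analysis. Candidly, I do not expect a serious obstacle here: the result is a textbook pointwise Bayes-decision calculation in the spirit of Chow's rule, augmented by the outlier-density term $\costout \cdot \Pout(x)$. The only bookkeeping care required is (i) expressing all three probability terms against a common dominating measure so the pointwise decomposition is well-defined, and (ii) acknowledging that minimisers are unique only up to ties in the $\argmax$ defining $h^*$ and equality in the inequality defining $r^*$—hence the ``any minimiser'' phrasing in the statement.
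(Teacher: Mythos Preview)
Your proposal is correct and takes essentially the same approach as the paper: a pointwise Bayes-decision decomposition comparing the accept-cost against the reject-cost at each $x$. The paper merely fixes a specific dominating measure, namely $\Pcomb = \tfrac{1}{2}(\Pin + \Pout)$, whereas you work with a generic $\mu$; otherwise the arguments are identical.
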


The optimal classifier $h^*$ has an unsurprising form: for non-rejected samples, we predict the label $y$ with highest inlier class-probability $\Pr_{\rm in}( y \mid x )$.
The Bayes-optimal rejector is more interesting, and involves a comparison between two key quantities:
the \emph{maximum inlier class-probability} $\max_{y \in [L]} \Pr_{\rm in}(y \mid x)$,
and
the \emph{density ratio} $\frac{\Pr_{\rm in}( x )}{\Pr_{\rm out}( x )}$.
These respectively reflect the confidence in the most likely label, 
and the confidence in the sample being an inlier.
Intuitively, when either of these quantities is sufficiently small, a sample is a candidate for rejection.

We now verify that Lemma~\ref{lem:scod-bayes} generalises existing Bayes-optimal rules for SC and OOD detection.

%
\textbf{Special case: SC}.
Suppose $\costout = 0$ and $\costin < 1$.
Then,~\eqref{eqn:scod-bayes} reduces to \emph{Chow's rule}~\citep{Chow:1970,Ramaswamy:2018}:
\begin{equation}
    \label{eqn:chow}
    r^*( x ) = 1 \iff 1 - \max_{y \in [L]} \Pr_{\rm in}(y \mid x) > \frac{\costin}{1 - \costin}.
\end{equation}
Thus, samples with high uncertainty in the label distribution are rejected.


%
\textbf{Special case: OOD detection}.
Suppose $\costin + \costout = 1$ and $\costin < 1$.
Then,~\eqref{eqn:scod-bayes} reduces to \emph{density-based rejection}~\citep{Steinwart:2005,Chandola:2009}:
\begin{equation}
    \label{eqn:ood-bayes}
    r^*( x ) = 1 \iff \frac{\Pr_{\rm out}( x )}{\Pr_{\rm in}( x )} > \frac{\costin}{1 - \costin}.
\end{equation}
Thus, samples with relatively high density under $\Pr_{\rm out}$ are rejected.

%
\subsection{Implication: existing SC and OOD baselines do not suffice for SCOD}

Lemma~\ref{lem:scod-bayes} implies that SCOD cannot be readily solved by
existing SC and OOD detection baselines.
Specifically, consider the
\emph{confidence-based rejection} baseline,
which rejects samples where
$\max_{y \in [L]} \Pr_{\rm in}(y \mid x)$ is lower than a fixed constant.
This is known as 
\emph{Chow's rule}~\eqref{eqn:chow} in 
the SC literature~\citep{Chow:1970,Ramaswamy:2018,Ni:2019}, 
and
the \emph{maximum softmax probability} (\emph{MSP}) in OOD literature~\citep{Hendrycks:2017};
for brevity, we adopt the latter terminology.
The MSP baseline does not suffice for the SCOD problem
in general:
even if $\max_{y \in [L]} \Pr_{\rm in}(y \mid x) \sim 1$,
it may be optimal to reject an input $x \in \mathscr{X}$ if 
$\frac{\Pr_{\rm out}( x )}{\Pr_{\rm in}( x )} \gg 0$.

In fact, the situation is more dire:
the MSP
may result in \emph{arbitrarily bad} rejection decisions.
Surprisingly, this 
even holds
in a special case of OOD detection
wherein there is a strong relationship between $\Pr_{\rm in}$ and $\Pr_{\rm out}$ that \emph{a-priori} would appear favourable to the MSP.
Specifically,
given some distribution $\Pr_{\rm te}$ over $\XCal \times \YCal$,
consider the \emph{open-set classification} (\emph{OSC}) setting~\citep{Scheirer:2013,vaze2021open}:
during training, one only observes samples from a distribution $\Pr_{\rm in}$ over $\XCal \times \YCal_{\rm in}$, 
where $\YCal_{\rm in} \subset \YCal$.
Here, $\Pr_{\rm in}$ is a restriction of $\Pr_{\rm te}$ to a subset of labels.
At evaluation time, 
one seeks to 
accurately classify samples possessing these labels, while 
rejecting samples with unobserved labels $\YCal - \YCal_{\rm in}$.

Under this setup, thresholding
$\max_{y \in \YCal_{\rm in}} \Pr_{\rm in}( y \mid x )$
might appear a reasonable approach.
However, we now demonstrate that it may lead to arbitrarily poor decisions.
In what follows, for simplicity we consider the OSC problem wherein $\YCal_{\rm in} = \YCal - \{ L \}$,
so that there is only one label unobserved in the in-distribution sample.
Further, we focus on the setting where $\costin + \costout = 1$.
We have the following.

%
\begin{lemma}
\label{lemm:bayes-open-set-rewrite}
Under the open-set setting, the Bayes-optimal classifier for the 
SCOD problem
is:
\begin{align*}
    r^*(x) = 1
    &\iff
    {\Pr_{\rm te}( L \mid x )} > t^*_{\rm osc}
    \iff
    \max_{y' \neq L} \PTr( y' \mid x ) \geq \frac{1}{1 - t^*_{\rm osc}} \cdot \max_{y' \neq L} \Pr_{\rm te}( y' \mid x),
\end{align*}
where 
$t^*_{\rm osc} \defEq F\left( \frac{\costin \cdot \Pr_{\rm te}( y = L )}{\costout \cdot \Pr_{\rm te}( y \neq L )} \right)$ for $F \colon z \mapsto {z} / ({1 + z})$.
\end{lemma}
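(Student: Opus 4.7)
The plan is to treat the open-set setup as a special case of Lemma~\ref{lem:scod-bayes} and then translate the optimal rejection condition from the language of $(\Pr_{\rm in},\Pr_{\rm out})$ into that of $\Pr_{\rm te}$ using Bayes' rule. Since we are in the regime $\costin + \costout = 1$, the OOD-detection specialisation of Lemma~\ref{lem:scod-bayes} (eq.~\eqref{eqn:ood-bayes}) gives immediately
\[
  r^*(x) = 1 \iff \frac{\Pr_{\rm out}(x)}{\Pr_{\rm in}(x)} > \frac{\costin}{\costout}.
\]
So the task reduces to expressing the left-hand ratio in terms of $\Pr_{\rm te}(L \mid x)$ and $\max_{y' \neq L} \Pr_{\rm te}(y' \mid x)$.

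First I would record the identifications enforced by the open-set construction: $\Pr_{\rm in}(x) = \Pr_{\rm te}(x \mid y \neq L)$ and $\Pr_{\rm out}(x) = \Pr_{\rm te}(x \mid y = L)$, together with the marginal decomposition $\Pr_{\rm te}(x) = \Pr_{\rm te}(y \neq L)\Pr_{\rm in}(x) + \Pr_{\rm te}(y = L)\Pr_{\rm out}(x)$. Applying Bayes' rule then yields
\[
  \Pr_{\rm te}(L \mid x) = \frac{\Pr_{\rm te}(y = L)\,\Pr_{\rm out}(x)}{\Pr_{\rm te}(y = L)\,\Pr_{\rm out}(x) + \Pr_{\rm te}(y \neq L)\,\Pr_{\rm in}(x)} = F(z(x)),
\]
where $z(x) \defEq \frac{\Pr_{\rm te}(y = L)}{\Pr_{\rm te}(y \neq L)} \cdot \frac{\Pr_{\rm out}(x)}{\Pr_{\rm in}(x)}$ and $F(z) = z/(1+z)$. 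Strict monotonicity of $F$ on $[0,\infty)$ then lets me rewrite the rejection condition $\Pr_{\rm out}(x)/\Pr_{\rm in}(x) > \costin/\costout$ as $F(z(x)) > F\left(\frac{\costin \Pr_{\rm te}(y = L)}{\costout \Pr_{\rm te}(y \neq L)}\right) = t^*_{\rm osc}$, i.e.\ $\Pr_{\rm te}(L \mid x) > t^*_{\rm osc}$, which is the first equivalence.

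For the second equivalence I would use another Bayes-rule identity: for $y' \neq L$,
\[
  \Pr_{\rm te}(y' \mid x) = \Pr_{\rm te}(y' \mid y \neq L, x) \cdot \Pr_{\rm te}(y \neq L \mid x) = \Pr_{\rm in}(y' \mid x) \cdot \bigl(1 - \Pr_{\rm te}(L \mid x)\bigr),
\]
so taking the maximum over $y' \neq L$ and rearranging gives $\max_{y' \neq L}\Pr_{\rm in}(y' \mid x) = \frac{1}{1 - \Pr_{\rm te}(L \mid x)} \cdot \max_{y' \neq L}\Pr_{\rm te}(y' \mid x)$. Hence $\Pr_{\rm te}(L \mid x) > t^*_{\rm osc}$ is equivalent to $1 - \Pr_{\rm te}(L \mid x) < 1 - t^*_{\rm osc}$, which is in turn equivalent to the stated lower bound on $\max_{y' \neq L}\Pr_{\rm in}(y' \mid x)$.

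No step looks genuinely hard; the only thing that requires care is bookkeeping around the zero-measure set where $\Pr_{\rm in}(x) = 0$ (to invoke eq.~\eqref{eqn:ood-bayes}) and the boundary case $\Pr_{\rm te}(L \mid x) = 1$ where $\max_{y' \neq L}\Pr_{\rm in}(y' \mid x)$ is undefined. I would handle these by appealing to the second clause of Lemma~\ref{lem:scod-bayes} (which forces $r^*(x) = 1$ whenever $\Pr_{\rm in}(x) = 0$) and noting that these exceptional points do not affect the claimed equivalences on $\{x : \Pr_{\rm in}(x) > 0\}$.
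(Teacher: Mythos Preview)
Your proposal is correct and follows essentially the same route as the paper's proof: both start from the density-ratio rejection rule $\Pr_{\rm out}(x)/\Pr_{\rm in}(x) > \costin/\costout$ (you cite it via eq.~\eqref{eqn:ood-bayes}, the paper rederives it from cost-sensitive learning), then convert to $\Pr_{\rm te}(L\mid x)$ via Bayes' rule and the monotone map $F$, and finally obtain the second equivalence from the identity $\Pr_{\rm in}(y'\mid x)=\Pr_{\rm te}(y'\mid x)/(1-\Pr_{\rm te}(L\mid x))$ for $y'\neq L$. Your remark about the boundary case $\Pr_{\rm in}(x)=0$ is a nice touch that the paper leaves implicit.
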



Lemma~\ref{lemm:bayes-open-set-rewrite} shows that the optimal decision is to reject when the maximum softmax probability (with respect to $\PTr$) is \emph{higher} than some (sample-dependent) threshold.
This is the precise \emph{opposite} of the MSP baseline, which rejects when the maximum probability is
\emph{lower} than some threshold.
What is the reason for this stark discrepancy?
Intuitively, the issue is that we would like to threshold 
$\Pr_{\rm te}( y \mid x )$, 
\emph{not} $\PTr( y \mid x )$;
however, these two distributions may not align,
as the latter includes a normalisation term that causes unexpected behaviour when we threshold.
We make this concrete with a simple example; 
see also Figure~\ref{fig:chow-fail-example} (Appendix \ref{app:failure-msp}) for an illustration.

%
\begin{example}[{\it Failure of MSP baseline}]
\label{ex:msp-failure}
Consider a setting where the class probabilities $\Pr_{\rm te}( y' \mid x )$ are equal for all the known classes $y' \ne L$.
This implies that
$\PTr( y' \mid x ) = \frac{1}{L-1}, \forall y' \ne L$.
The Bayes-optimal classifier rejects a sample when 
$\Pr_{\rm te}( L \mid x ) > \frac{\costin}{\costin + \costout}$.
On the other hand,
MSP rejects a sample 
iff the threshold 
$t_{\rm msp} < \frac{1}{L-1}$.
Notice that the rejection decision is \emph{independent} of the unknown class density $\Pr_{\rm te}( L \mid x )$, 
and therefore will not agree with the Bayes-optimal classifier in general.
The following lemma formalizes this observation.
\end{example}

\begin{lemma}
Pick any $t_{\rm msp} \in (0, 1)$,
and consider the corresponding MSP baseline which rejects 
$x \in \XCal$ iff
$\max_{y \neq L} \PTr( y \mid x ) < t_{\rm msp}$.
Then,
there exists a class-probability function 
$\Pr_{\rm te}(y \mid x)$ for which 
the Bayes-optimal rejector 
$\Pr_{\rm te}( L \mid x ) > t^*_{\rm osc}$ disagrees with MSP $\forall t_{\rm msp} \in (0, 1)$.
\label{lem:chow-fail}
\end{lemma}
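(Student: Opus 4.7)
I would build directly on Example~\ref{ex:msp-failure}, which already identifies the critical uniform construction; the work left is to verify that this construction furnishes a disagreement for every admissible threshold.

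First, I would fix any $t_{\rm msp} \in (0,1)$ and construct a candidate $\Pr_{\rm te}(y \mid x)$ on an instance space containing at least two points. Following the example, I would set $\Pr_{\rm te}(y' \mid x) = \frac{1 - \Pr_{\rm te}(L \mid x)}{L-1}$ for every $y' \neq L$, leaving $\Pr_{\rm te}(L \mid x)$ as a free parameter to be chosen per-point. Since $\PTr$ is $\Pr_{\rm te}$ restricted to $\YCal - \{L\}$, conditioning gives $\PTr(y' \mid x) = \frac{1}{L-1}$ for all $y' \neq L$, independent of $x$. Hence $\max_{y' \neq L}\PTr(y' \mid x) = \frac{1}{L-1}$, so the MSP rejection decision ``$\max_{y' \neq L}\PTr(y' \mid x) < t_{\rm msp}$'' evaluates to the \emph{same} constant (either always true or always false) on every $x$.

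Next, I would split on whether $t_{\rm msp} \leq \frac{1}{L-1}$ or $t_{\rm msp} > \frac{1}{L-1}$. In the first case MSP accepts every $x$, so I need a point $x_0$ where Bayes rejects, i.e.\ $\Pr_{\rm te}(L \mid x_0) > t^*_{\rm osc}$; in the second case MSP rejects every $x$, so I need a point where $\Pr_{\rm te}(L \mid x_0) < t^*_{\rm osc}$. In both cases I would choose the values of $\Pr_{\rm te}(L \mid \cdot)$ across the two instances so that the marginal $\Pr_{\rm te}(y = L)$ lies strictly in $(0,1)$. Combined with the standing assumption $\costin, \costout \in (0,1)$ (which is needed for the SCOD objective to be non-degenerate under $\costin + \costout = 1$), this guarantees $z \defEq \frac{\costin \cdot \Pr_{\rm te}(y=L)}{\costout \cdot \Pr_{\rm te}(y \neq L)} \in (0,\infty)$, so $t^*_{\rm osc} = F(z) = z/(1+z) \in (0,1)$, and both of the required inequalities on $\Pr_{\rm te}(L \mid x_0)$ are realisable by choosing it close to $1$ or close to $0$, respectively.

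Finally, I would invoke Lemma~\ref{lemm:bayes-open-set-rewrite} to identify the Bayes-optimal rejector with the thresholding rule $\Pr_{\rm te}(L \mid x) > t^*_{\rm osc}$, so that the constructed $x_0$ witnesses a disagreement with MSP, completing the proof. The only real subtlety, and the step most worth being careful about, is the joint feasibility constraint in the two cases: $\Pr_{\rm te}(L \mid x_0)$ appears both in the per-point rejection condition and (via the marginal) inside $t^*_{\rm osc}$. Using a two-point instance space sidesteps this cleanly, since one can fix the marginal through the second point's contribution while varying $\Pr_{\rm te}(L \mid x_0)$ freely within $[0,1]$.
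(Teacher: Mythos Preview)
Your proposal is correct and follows essentially the same approach as the paper: make the conditional $\Pr_{\rm te}(\cdot \mid x)$ uniform over the known classes so that the MSP score is identically $\tfrac{1}{L-1}$, case-split on whether $t_{\rm msp}$ lies above or below this value, and choose $\Pr_{\rm te}(L \mid x)$ to force the Bayes-optimal decision the opposite way. Your two-point instance space, used to decouple the per-instance rejection condition from the marginal $\Pr_{\rm te}(y = L)$ (and hence from $t^*_{\rm osc}$), is a slightly cleaner bookkeeping device than the paper's limiting argument in $\epsilon$, but the core construction is identical.
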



{One may ask whether using the maximum \emph{logit} rather than softmax probability can prove successful in the open-set setting.
Unfortunately, as this similarly does not include information about $\mathbb{P}_{\rm out}$, 
it can also fail.
For the same reason, other baselines from the OOD and SC literature can also fail;
see Appendix~\ref{app:expts-max-logit}.}
Rather than using existing baselines as-is,
we now consider a more direct approach to estimating the Bayes-optimal SCOD rejector in~\eqref{eqn:scod-bayes}, which has strong empirical performance.

\section{Plug-in estimators to the Bayes-optimal SCOD rule}
\label{sec:formulation}
A minimal requirement
for a reasonable SCOD technique is that its popular minimiser coincides with the Bayes-optimal solution in~\eqref{eqn:scod-bayes}.
Unfortunately,
any attempt at practically realising this solution faces an immediate challenge:
it requires the ability to compute expectations under 
$\Pr_{\rm out}$.
In practice, 
one can scarcely expect to have ready access to $\Pr_{\rm out}$;
indeed, the very premise of OOD detection is that $\Pr_{\rm out}$ comprises samples wholly dissimilar to those used to train the classifier.


In the OOD detection literature, 
this challenge is typically addressed by
designing techniques that exploit only ID information from $\Pr_{\rm in}$,
or
assuming access to a small OOD sample of outliers~\citep{Hendrycks:2019}, possibly mixed with some ID data~\citep{Katz:2022}.
Following this, we now present two techniques that estimate~\eqref{eqn:scod-bayes},
one of which exploits only ID data,
and another which exploits both ID and OOD data.
These techniques come equipped with theoretical guarantees, 
while also generalising existing approaches from the SC and OOD detection literature.

\subsection{Black-box SCOD using only ID data}
\label{sec:plug-in-black-box}

Our first plug-in estimator operates in the setting where one only has access to 
ID samples from
$\Pr_{\rm in}$.
Here, we cannot hope to minimise~\eqref{eqn:scod-soft} directly.
Instead,
we look to approximate the corresponding Bayes-optimal solution~\eqref{eqn:scod-bayes} by leveraging existing 
OOD detection techniques operating in this setting.

Concretely,
suppose 
we have access to \emph{any} existing OOD detection score 
$s_{\rm ood} \colon \XCal \to \Real$ 
that is computed only from ID data,
e.g., the gradient norm score of~\citet{Huang:2021}.
Similarly,
let
$s_{\rm sc} \colon \XCal \to \Real$ 
be \emph{any} existing SC score,
e.g., the maximum softmax probability estimate of~\citet{Chow:1970}.
Then, we propose the following
\emph{black-box rejector}:
\begin{equation}
    \label{eqn:plug-in-black-box}
    r_{\rm BB}( x ) = 1 \iff ( 1 - \costin - \costout ) \cdot s_{\rm sc}( x ) + {\costout} \cdot \vartheta\left( { s_{\rm ood}( x ) } \right) < t_{\rm BB},
\end{equation}
where
$t_{\rm BB} \defEq 1 - 2 \cdot \costin - \costout$,
and $\vartheta \colon z \mapsto -\frac{1}{z}$.
Observe that
Equation~\ref{eqn:plug-in-black-box} 
exactly coincides with the Bayes-optimal rejector~\eqref{eqn:scod-bayes}
when
$s_{\rm sc}, s_{\rm ood}$
equal their Bayes-optimal counterparts
$s^*_{\rm sc}( x ) \defEq \max_{y \in [L]} \Pr_{\rm in}( y \mid x )$ 
and
$s^*_{\rm ood}( x ) \defEq \frac{\Pr_{\rm in}( x )}{\Pr_{\rm out}( x )}$.
Thus, 
as is intuitive,
~\eqref{eqn:plug-in-black-box} can be expected to perform well
when $s_{\rm sc}, s_{\rm ood}$ perform well at the SC and OOD detection task respectively, as shown below.

\begin{lemma}
\label{lemm:black-box-regret}
Suppose we have estimates $\hat{\Pr}_{\rm in}(y \mid x)$ of the inlier class probabilities ${\Pr}_{\rm in}(y \mid x)$, estimates $\hat{s}_{\rm ood}(x)$ of the density ratio $\frac{\Pr_{\rm in}( x )}{\Pr_{\rm out}( x )}$, and \emph{SC} scores $\hat{s}_{\rm sc}(x) = \max_{y \in [L]} \hat{\Pr}_{\rm in}( y \mid x )$.
Let $\hat{h}(x) \in \argmax_{y \in [L]} \hat{\Pr}_{\rm in}(y \mid x)$, 
and $\hat{r}_{\rm BB}$ be a rejector defined according to \eqref{eqn:plug-in-black-box} from 
$\hat{s}_{\rm sc}(x)$ and $\hat{s}_{\rm ood}(x)$. 
Let $\Pr^*(x) = \frac{1}{2}( \Pr_{\rm in}(x) + \Pr_{\rm out}(x) )$. 
Then, for the \emph{SCOD}-risk \eqref{eqn:scod} minimizers $(h^*, r^*)$:
\begin{align*}
\lefteqn{L_{\rm scod}(\hat{h}, \hat{r}_{\rm BB}) \,-\,
L_{\rm scod}(h^*, r^*)}
\\[-3pt]
&\leq
\textstyle
2 \cdot \Ex_{x \sim \Pr^*}\left[\sum_{y \in [L]} \left|\Pr_{\rm in}(y \mid x) - \hat{\Pr}_{\rm in}(y \mid x) \right|  
\,+\,
2\cdot \sum_{y \in [L]} 
{
\left|
\frac{\Pr_{\rm in}( x )}{\Pr_{\rm in}( x ) + \Pr_{\rm out}( x )} - \frac{\hat{s}_{\rm ood}(x)}{1 + \hat{s}_{\rm ood}(x)} \right| 
}
\right].
\end{align*}
\end{lemma}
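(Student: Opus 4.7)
The plan is a textbook plug-in excess-risk decomposition into a classifier-on-accept piece plus a rejector piece, handled pointwise. To set up cleanly, I would first rewrite $L_{\rm scod}(h,r)$ as an integral against $\Pr^*$ by substituting $\Pr_{\rm in}(x) = 2\Pr^*(x)\,a(x)$ and $\Pr_{\rm out}(x) = 2\Pr^*(x)(1-a(x))$, where $a(x) \defEq \Pr_{\rm in}(x)/(\Pr_{\rm in}(x)+\Pr_{\rm out}(x))$. This yields a per-point conditional cost that is linear in $r$, whose Bayes-optimal rejector (paired with $h^*$) is $r^*(x) = \llbracket \phi^*(x) < 0 \rrbracket$ for
\[
\phi^*(x) \;=\; \costin\, a(x) \;-\; (1-\costin-\costout)(1-s^*_{\rm sc}(x))\,a(x) \;-\; \costout\,(1-a(x)),
\]
with $s^*_{\rm sc}(x) = \max_y \Pr_{\rm in}(y\mid x)$. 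A short manipulation of \eqref{eqn:plug-in-black-box} shows that $\hat r_{\rm BB}(x) = \llbracket \hat\phi(x) < 0 \rrbracket$, where $\hat\phi$ has the \emph{same} functional form evaluated at $(\hat a(x), \hat s_{\rm sc}(x))$ with $\hat a(x) = \hat s_{\rm ood}(x)/(1+\hat s_{\rm ood}(x)) = F(\hat s_{\rm ood}(x))$; this reparametrisation is exactly where the $F$-transformation in the lemma's bound enters.

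Next I would split the excess as $[L_{\rm scod}(\hat h,\hat r_{\rm BB})-L_{\rm scod}(h^*,\hat r_{\rm BB})] + [L_{\rm scod}(h^*,\hat r_{\rm BB})-L_{\rm scod}(h^*,r^*)]$. The first (classifier) piece vanishes where $\hat r_{\rm BB}(x)=1$ and otherwise equals $(1-\costin-\costout)\,\Pr_{\rm in}(x)\,[\Pr_{\rm in}(h^*(x)\mid x)-\Pr_{\rm in}(\hat h(x)\mid x)]$; the standard multiclass plug-in inequality $\Pr_{\rm in}(h^*(x)\mid x)-\Pr_{\rm in}(\hat h(x)\mid x) \leq 2\max_y|\hat{\Pr}_{\rm in}(y\mid x)-\Pr_{\rm in}(y\mid x)| \leq 2\sum_y|\hat{\Pr}_{\rm in}(y\mid x)-\Pr_{\rm in}(y\mid x)|$, combined with $\Pr_{\rm in}(x) \leq 2\Pr^*(x)$, yields the first term of the stated bound. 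The second (rejector) piece is $\int 2\Pr^*(x)\,|\phi^*(x)|\,\llbracket \hat r_{\rm BB}(x)\neq r^*(x)\rrbracket\,dx$ by a one-line threshold-classifier calculation, and on every disagreement point $\phi^*$ and $\hat\phi$ must have opposite signs, so $|\phi^*(x)| \leq |\phi^*(x)-\hat\phi(x)|$.

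The remaining, largely mechanical step is to bound $|\phi^*(x)-\hat\phi(x)|$. Expanding and using the identity $(1-s^*_{\rm sc})a - (1-\hat s_{\rm sc})\hat a = a(\hat s_{\rm sc}-s^*_{\rm sc}) + (1-\hat s_{\rm sc})(a-\hat a)$, together with $\costin,\costout\in[0,1]$ and $|a|,|1-\hat s_{\rm sc}|\leq 1$, gives $|\phi^*(x)-\hat\phi(x)| \leq |\hat s_{\rm sc}(x)-s^*_{\rm sc}(x)| + |a(x)-\hat a(x)|$; absorbing $|s^*_{\rm sc}(x)-\hat s_{\rm sc}(x)| \leq \sum_y|\hat{\Pr}_{\rm in}(y\mid x)-\Pr_{\rm in}(y\mid x)|$ into the classifier term produces the stated form, with $|a(x)-\hat a(x)| = |F(\Pr_{\rm in}(x)/\Pr_{\rm out}(x))-F(\hat s_{\rm ood}(x))|$ recovering the density-ratio error in the second term. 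The main subtlety---and the reason $F$ appears at all---is that a direct bound in terms of $|\hat s_{\rm ood}(x)-\Pr_{\rm in}(x)/\Pr_{\rm out}(x)|$ would be uncontrolled since the density ratio is unbounded; passing through $a(x) = F(\Pr_{\rm in}(x)/\Pr_{\rm out}(x)) \in [0,1]$ keeps every quantity bounded and is precisely what makes the $\Pr^*$-expectation of $|F(s^*_{\rm ood})-F(\hat s_{\rm ood})|$ a well-behaved error measure.
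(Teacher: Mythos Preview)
Your proposal is correct and mirrors the paper's argument closely: the same rewriting of $L_{\rm scod}$ as an expectation under $\Pr^*$ via $a(x)=\gamma_{\rm in}(x)$, the same two-term split into a classifier piece and a rejector piece, and an identical add-and-subtract bound for the classifier piece. The one genuine difference is in how the rejector excess is controlled. The paper introduces a surrogate $\hat L_{\rm rej}$ (the conditional rejector risk with both $\gamma_{\rm in}$ and $\max_y\Pr_{\rm in}(y\mid x)$ replaced by their estimates), observes that $\hat r_{\rm BB}$ minimises $\hat L_{\rm rej}$, and applies the sandwich
\[
L_{\rm rej}(\hat r_{\rm BB})-L_{\rm rej}(r^*)\;\le\;\bigl[L_{\rm rej}(\hat r_{\rm BB})-\hat L_{\rm rej}(\hat r_{\rm BB})\bigr]+\bigl[\hat L_{\rm rej}(r^*)-L_{\rm rej}(r^*)\bigr],
\]
then compares $L_{\rm rej}$ and $\hat L_{\rm rej}$ coefficient-by-coefficient. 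You instead use the threshold-disagreement device, writing the excess as $2\,\Ex_{\Pr^*}\bigl[|\phi^*|\,\1(\hat r_{\rm BB}\ne r^*)\bigr]$ and invoking $|\phi^*|\le|\phi^*-\hat\phi|$ on disagreement points. Both routes are standard and deliver the same error terms $|s^*_{\rm sc}-\hat s_{\rm sc}|$ and $|a-\hat a|$; your version is arguably a touch cleaner, since it makes the appearance of the $|s^*_{\rm sc}-\hat s_{\rm sc}|$ contribution (which you then fold into the $\sum_y|\Pr_{\rm in}(y\mid x)-\hat\Pr_{\rm in}(y\mid x)|$ term) fully explicit.
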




Interestingly, this black-box rejector can be seen as a principled variant of the SIRC method of~\citet{Xia:2022}.
As with $r_{\rm BB}$,
SIRC works by combining
rejection scores
$s_{\rm sc}( x ), s_{\rm ood}( x )$
for SC and OOD detection respectively.
The key difference is that SIRC employs a \emph{multiplicative} combination:
\begin{equation}
    \label{eqn:sirc}
    r_{\rm SIRC}( x ) = 1 \iff (s_{\rm sc}( x ) - a_1) \cdot \varrho( a_2 \cdot s_{\rm ood}( x ) + a_3 ) < t_{\rm SIRC},
\end{equation}
for constants $a_1, a_2, a_3$, threshold $t_{\rm SIRC}$,
and monotone transform $\varrho \colon z \mapsto 1 + e^{-z}$.
Intuitively, one rejects samples where there is 
sufficient signal that the sample is both 
near the decision boundary, 
\emph{and} likely drawn from the outlier distribution.
While empirically effective, 
it is not hard to see that the Bayes-optimal rejector~\eqref{eqn:scod-bayes} does not take the form of~\eqref{eqn:sirc};
thus, in general, SIRC may be sub-optimal.
We note that this also holds 
for the objective considered in~\citet{Xia:2022}, 
which is a slight variation of~\eqref{eqn:scod} that enforces a constraint on the ID recall.



\subsection{Loss-based SCOD using ID and OOD data}
\label{sec:plug-in-losses}

Our second plug-in estimator operates in the setting where one has access to both ID data, \emph{and} a ``wild'' sample comprising a mixture of ID and OOD data.
Here,
we may seek to directly
minimise the SCOD risk in \eqref{eqn:scod-soft}
via 
novel loss functions.
We shall first present the population risk corresponding to these losses,
before describing 
their instantiation for practical settings.

\textbf{Decoupled loss}.
Our first loss function
builds on the same observation as the previous section:
given estimates $s_{\rm sc}( x )$, $s_{\rm ood}( x )$
of
$\Pr_{\rm in}( y \mid x )$ and
$\frac{\Pr_{\rm in}( x )}{\Pr_{\rm out}( x )}$
respectively,
~\eqref{eqn:plug-in-black-box} 
yields a \emph{plug-in estimator} of the Bayes-optimal rule.
However, rather than leverage black-box estimates
based on ID data
---
which necessarily have limited fidelity
---
we seek to \emph{learn} them by leveraging both the ID and OOD data.

To construct such estimates, we learn
scorers $f \colon \XCal \to \Real^{L}$ and $s \colon \XCal \to \Real$.
Our goal is for a suitable transformation of
$f_y( x )$ and $s( x )$ to approximate ${\Pr}_{\rm in}( y \mid x )$ and 
$\frac{\Pr_{\rm in}( x )}{\Pr_{\rm out}( x )}$.
We propose
to minimise:
\begin{equation}
    \label{eqn:decoupled-surrogate}
    %
    \mathbb{E}_{( x, y ) \sim \mathbb{P}_{\rm in}}\left[ \ell_{\rm mc}( y, {f}( x ) ) \right] + 
    \mathbb{E}_{x \sim \mathbb{P}_{\rm in}}\left[ \ell_{\rm bc}( +1, s( x ) ) \right] + 
    \mathbb{E}_{x \sim \mathbb{P}_{\rm out}}\left[ \ell_{\rm bc}( -1, s( x ) ) \right],%
\end{equation}
where 
$\ell_{\rm mc} \colon [ L ] \times \Real^L \to \Real_+$ 
and $\ell_{\rm bc} \colon \{ \pm 1 \} \times \Real \to \Real_+$
are \emph{strictly proper composite}~\citep{Reid:2010} losses
for multi-class and binary classification respectively.
Canonical instantiations are the softmax 
cross-entropy 
$\ell_{\rm mc}(y, f(x)) = \log\left[ \sum_{y' \in [L]} e^{f_{y'}( x )} \right] - f_{y}( x )$,
and the sigmoid cross-entropy
$\ell_{\rm bc}(z, f(x)) = \log(1 + e^{-z \cdot f(x)})$.
In words, we use a standard 
multi-class classification
loss on the ID samples, 
with an additional loss that discriminates between the ID and OOD samples.
Note that in the last two terms, we do \emph{not} impose separate costs for the OOD detection errors. 
\begin{lemma}
Let $\Pr^*(x, z) = \frac{1}{2}\left(\Pr_{\rm in}(x)\cdot \1(z = 1) + \Pr_{\rm out}(x)\cdot \1(z = -1) \right)$ denote a joint ID-OOD distribution, with $z = -1$ indicating an OOD sample.  
Suppose $\ell_{\rm mc}, \ell_{\rm bc}$
correspond to the softmax and sigmoid cross-entropy.
Let $(f^*, r^*)$ be the minimizer of
the decoupled loss in \eqref{eqn:decoupled-surrogate}. 
For any scorers $f, s$,
with transformations $p_y(x) = \frac{ \exp( f_y(x) ) }{ \sum_{y'} \exp( f_{ y' }(x) ) }$ and $p_{\perp}(x) = \frac{1}{1 + \exp( -s(x) )}$:
\begin{align*}
\textstyle
    \mathbb{E}_{x \sim \mathbb{P}_{\rm in}}\left[ \sum_{y \in [L]} \big|
    p_y(x) - \Pr_{\rm in}(y \mid x) \big| \right]
    &\textstyle
    \leq 
    \sqrt{2}\sqrt{
    \mathbb{E}_{( x, y ) \sim \mathbb{P}_{\rm in}}\left[ \ell_{\rm mc}( y, f( x ) )
    \right] 
    \,-\,
    \mathbb{E}_{( x, y ) \sim \mathbb{P}_{\rm in}}\left[ \ell_{\rm mc}( y, f^*( x ) )
    \right]
    }\\
    \textstyle
    \mathbb{E}_{x \sim \mathbb{P}^*}\left[
    \left|
    p_\perp(x) - \frac{\Pr_{\rm in}( x )}{\Pr_{\rm in}( x ) + \Pr_{\rm out}( x )} \right| \right]
    &\textstyle
    \leq 
    \frac{1}{\sqrt{2}}\sqrt{
    \mathbb{E}_{( x, z ) \sim \mathbb{P}^*}\left[ \ell_{\rm bc}( z, s( x ) )
    \right] 
    \,-\,
    \mathbb{E}_{( x, z ) \sim \mathbb{P}^*}\left[ \ell_{\rm bc}(z , s^*( x ) )
    \right].
    }
\end{align*}
\end{lemma}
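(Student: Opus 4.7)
The plan is to recognize both bounds as instances of a standard regret-to-estimation-error transfer for strictly proper composite losses, using the fact that the softmax and sigmoid cross-entropy losses are canonical links for the multinomial and Bernoulli negative log-likelihoods. The pipeline in each case will be: (i) rewrite the excess loss as an expected Kullback-Leibler divergence between the Bayes-optimal conditional distribution and the induced probabilistic prediction; (ii) invoke Pinsker's inequality pointwise in $x$; (iii) apply Jensen's inequality to move the square root outside the expectation.

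For the multiclass bound, I would first note that if $f^*$ is the minimizer of $\mathbb{E}_{(x,y) \sim \Pr_{\rm in}}[\ell_{\rm mc}(y, f(x))]$, then the Bayes-optimal softmax probabilities satisfy $p^*_y(x) = \Pr_{\rm in}(y \mid x)$. A standard calculation identifies the excess softmax cross-entropy with the expected KL divergence:
\[
\mathbb{E}_{(x,y) \sim \Pr_{\rm in}}\!\bigl[\ell_{\rm mc}(y, f(x)) - \ell_{\rm mc}(y, f^*(x))\bigr] = \mathbb{E}_{x \sim \Pr_{\rm in}}\!\bigl[\mathrm{KL}\bigl(\Pr_{\rm in}(\cdot \mid x) \,\|\, p(\cdot \mid x)\bigr)\bigr].
\]
Pinsker's inequality then gives $\sum_{y \in [L]} |p_y(x) - \Pr_{\rm in}(y \mid x)| \le \sqrt{2 \cdot \mathrm{KL}(\Pr_{\rm in}(\cdot \mid x) \,\|\, p(\cdot \mid x))}$. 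Taking expectation over $x \sim \Pr_{\rm in}$ and pulling the square root outside via concavity (Jensen) yields the claimed $\sqrt{2}$ prefactor.

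For the binary bound, the key observation is that under the joint distribution $\Pr^*(x, z)$, the conditional $\Pr^*(z = 1 \mid x) = \frac{\Pr_{\rm in}(x)}{\Pr_{\rm in}(x) + \Pr_{\rm out}(x)} \defEq \eta(x)$ is exactly the target quantity, and the marginal is $\Pr^*(x) = \tfrac{1}{2}(\Pr_{\rm in}(x) + \Pr_{\rm out}(x))$. The Bayes-optimal scorer $s^*$ for the sigmoid cross-entropy thus induces $p^*_\perp(x) = \eta(x)$, and the excess risk coincides with $\mathbb{E}_{x \sim \Pr^*}[\mathrm{KL}(\mathrm{Bern}(\eta(x)) \,\|\, \mathrm{Bern}(p_\perp(x)))]$. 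The Bernoulli form of Pinsker gives $|p_\perp(x) - \eta(x)| \le \sqrt{\tfrac{1}{2} \mathrm{KL}(\mathrm{Bern}(\eta(x)) \,\|\, \mathrm{Bern}(p_\perp(x)))}$; taking expectation and applying Jensen delivers the $1/\sqrt{2}$ prefactor.

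I do not expect any serious obstacle: the only thing requiring care is bookkeeping the link-function identities so that the excess risks equal the KL divergences exactly (no slack), and matching the constants in Pinsker to the two different forms ($L^1$ total over $L$ classes versus absolute value for a binary probability). The rest is mechanical.
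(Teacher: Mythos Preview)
Your proposal is correct and matches the paper's proof essentially step for step: both arguments identify the excess softmax/sigmoid cross-entropy risk as an expected KL divergence, apply Pinsker's inequality pointwise, and then use Jensen to pull the square root outside the expectation. The constants you track ($\sqrt{2}$ from the $L^1$ form of Pinsker in the multiclass case, and $1/\sqrt{2}$ after accounting for $\|p-q\|_1 = 2|p_\perp - \eta|$ in the binary case) are exactly those in the paper.
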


Note that in the first term of the decoupled loss in \eqref{eqn:decoupled-surrogate}, we only use classification scores $f_y(x)$, and
exclude the rejector score $s( x )$.
The classifier and rejector losses are thus \emph{decoupled}.
We may introduce coupling 
\emph{implicitly}, 
by parameterising $f_{y'}( x ) = w_{y'}^\top \Phi( x )$ and $s( x ) = u_{y'}^\top \Phi( x )$ for shared embedding $\Phi$;
or \emph{explicitly}, 
as follows.

\textbf{Coupled loss}.
We propose a second loss function that
seeks to learn an augmented scorer $\bar{f} \colon \XCal \to \Real^{L+1}$, with the additional score corresponding to a ``reject class'', denoted by $\perp$, and 
takes the form of a standard multi-class classification loss applied jointly to both the classification and rejection logits:
\begin{equation}
    \label{eqn:css-surrogate}
    \E{(x,y) \sim \Pr_{\rm in}}{ \ell_{\rm mc}( y, \bar{f}( x ) ) } + (1 - \costin) \cdot \E{x \sim \Pr_{\rm in}}{ \ell_{\rm mc}( \perp, \bar{f}( x ) ) } + \costout \cdot \E{x \sim \Pr_{\rm out}}{ \ell_{\rm mc}( \perp, \bar{f}( x ) ) }.    
\end{equation}
This yields an alternate plug-in estimator of the Bayes-optimal rule, which we discuss in Appendix \ref{app:coupled-loss}.


\textbf{Practical algorithm: SCOD in the wild}.
\label{sec:practical-algo}
The losses in~\eqref{eqn:decoupled-surrogate} and~\eqref{eqn:css-surrogate}
require estimating expectations under $\Pr_{\rm out}$.
While obtaining access to a sample drawn from $\Pr_{\rm out}$ may be challenging,
we adopt a similar strategy to~\citet{Katz:2022}, and assume access to two sets of \emph{unlabelled} samples:
\begin{enumerate}[leftmargin=24pt,topsep=0pt,itemsep=0pt]
    \item[{\bf (A1)}]
$S_{\rm mix}$,
consisting of a mixture of inlier and outlier samples drawn i.i.d.\ from 
a mixture
$\Pr_{\rm mix} = \pi_{\rm mix} \cdot \Pr_{\rm in} + (1-\pi_{\rm mix}) \cdot \Pr_{\rm out}$
of samples observed in the \emph{wild} (e.g., during deployment)

    \item[{\bf (A2)}] $S^*_{\rm in}$,
    consisting of samples certified to be \emph{strictly inlier}, i.e., with $\Pr_{\rm out}(x) = 0, \forall x \in S^*_{\rm in}$
\end{enumerate}
Assumption {\bf (A1)} was employed in~\citet{Katz:2022}, 
and may be implemented in practice by collecting
samples encountered ``in the wild'' during deployment of the SCOD classifier and rejector.
Assumption {\bf (A2)} merely requires identifying samples that are clearly \emph{not} OOD,
and is not difficult to satisfy:
it
may be implemented in practice by either 
identifying prototypical training samples,
or by
simply selecting a random subset of the training sample.
We follow the latter in our experiments.

Equipped with $S_{\rm mix}$, 
following~\citet{Katz:2022},
we propose to use it to approximate expectations under $\Pr_{\rm out}$.
One challenge 
is that 
the rejection logit 
will now estimate $\frac{\Pr_{\rm in}( x )}{\Pr_{\rm mix}( x )}$,
rather than $\frac{\Pr_{\rm in}( x )}{\Pr_{\rm out}( x )}$.
To resolve this, 
it is not hard to show that by {\bf(A2)}, 
one can estimate the latter via a simple transformation (see Appendix~\ref{app:noise_correction}).
Plugging these estimates into~\eqref{eqn:plug-in-black-box} then gives us an approximation to the Bayes-optimal solution.
We summarise this procedure in Algorithm~\ref{algo:plug-in} for the decoupled loss.

In Appendix \ref{app:relation-existing-losses}, we explain how our losses relate to existing losses for OOD detection.~\\[-10pt]


%
\begin{figure}[t]
\vspace{-6pt}
    \begin{algorithm}[H]
    \caption{Loss-based SCOD using a mixture of ID and OOD data}
    \label{algo:plug-in}
        \begin{algorithmic}[1]
        \STATE \textbf{Input:} Labeled set $S_{\rm in} \sim \Pr_{\rm in}$, Unlabeled set $S_{\rm mix} \sim \Pr_{\rm mix}$, Strictly inlier set $S^*_{\rm in}$ with $\Pr_{\rm out}(x) = 0, \forall x\in S^*_{\rm in}$
        \\[-10pt]
        \STATE \textbf{Parameters:} Costs $\costin, \costout$
        \STATE \textbf{Surrogate loss:} Find minimizers  $\hat{f}: \XCal \rightarrow \R^{L+1}$ and $\hat{s}: \XCal \rightarrow \R$ of the decoupled loss:
        \begin{equation*}
            \resizebox{\linewidth}{!}{
            $\displaystyle \frac{1}{|S_{\rm in}|} \sum_{(x, y) \in S_{\rm in}} \ell_{\rm mc}(y, f(x)) +
            \frac{1}{|S_{\rm in}|}\sum_{(x, y) \in S_{\rm in}} \ell_{\rm bc}(+1, s(x)) +
            \frac{1}{|S_{\rm mix}|}\sum_{x \in S_{\rm mix}} \ell_{\rm bc}(-1, s(x))$
            }
        \end{equation*}
        \vspace{-10pt}
        \STATE \textbf{Inlier class probabilities:} $\hat{\Pr}_{\rm in}(y|x) \defEq \frac{ \exp( \hat{f}_y(x)) }{ \sum_{y'} \exp( \hat{f}_{y'}(x)) }$
        \STATE \textbf{Mixture proportion:} $\hat{\pi}_{\rm mix} \defEq \frac{1}{|S^*_{\rm in}|}\sum_{x \in S^{*}_{\rm in}} \exp( -\hat{s}(x))$
        \STATE \textbf{Density ratio:}  $\hat{s}_{\rm ood}(x) \defEq \big( \frac{1}{1-\hat{\pi}_{\rm mix}} \cdot \left( \exp( -\hat{s}(x) ) - \hat{\pi}_{\rm mix} \right) \big)^{-1}$
        \STATE \textbf{Plug-in classifier:} Plug estimates $\hat{\Pr}_{\rm in}(y|x)$, $\hat{s}_{\rm ood}(x)$, and costs $\costin, \costout$ into \eqref{eqn:plug-in-black-box}, and construct classifier $\hat{h}$, rejector $\hat{r}$
        \STATE \textbf{Output:} $\hat{h}, \hat{r}$
        \end{algorithmic}
    \end{algorithm}
\vspace{-17pt}
\end{figure}

%



Thus far, we have focused on minimising~\eqref{eqn:scod-soft}, 
which applies a soft penalty on making incorrect reject decisions.
This requires specifying costs $\costin, \costout \in [ 0, 1 ]$, 
which respectively control the importance of
not rejecting ID samples
and
rejecting OOD samples
compared to misclassifying non-rejected ID samples.
These user-specified parameters may be set based on any available domain knowledge.

In practice, 
it may be more natural for a user to specify 
the relative cost $\costFN \in [ 0, 1 ]$
of making an incorrect rejection decision on 
OOD samples,
and a budget $b_{\rm rej} \in [ 0, 1 ]$ on the total fraction of abstentions,
as in~\eqref{eqn:scod}; this is the setting we consider in our experiments in the next section.
Our plugin estimators 
easily accommodate such an explicit constraint, via standard Lagrangian;
see Appendix~\ref{app:plug-in-budget}.

\section{Experimental results}
\label{sec:expts}
\vspace{-3pt}

We now demonstrate the efficacy of our proposed plug-in approaches
to SCOD
on a range of image classification benchmarks
from the OOD detection and SCOD literature~\citep{Bitterwolf:2022, Katz:2022, Xia:2022}.

\textbf{Datasets.} 
We use 
CIFAR-100~\citep{Krizhevsky:2009} and 
ImageNet~\citep{Deng:2009} as the in-distribution (ID) datasets, 
and 
SVHN \citep{netzer2011reading}, 
Places365 \citep{zhou2017places}, 
LSUN \citep{yu2015lsun} (original and resized), 
Texture \cite{cimpoi2014describing}, 
CelebA \citep{liu2015faceattributes}, 
300K Random Images \citep{Hendrycks:2019}, 
OpenImages \citep{krasin2017openimages}, 
OpenImages-O \citep{wang2022vim},
 iNaturalist-O~\citep{Huang:2021b} 
 and Colorectal~\citep{kather2016multi}
 as the OOD datasets. 
For training, we use labeled ID samples and (optionally) an unlabeled ``wild'' mixture of ID and OOD samples ($\Pr_{\rm mix} = \pi_{\rm mix} \cdot \Pr_{\rm in} + (1-\pi_{\rm mix}) \cdot \Pr^{\rm tr}_{\rm out}$). 
For testing, we use OOD samples ($\Pr^{\rm te}_{\rm out}$) that may be different from those used in training ($\Pr^{\rm tr}_{\rm out}$). 
We train a ResNet-56 on CIFAR, and use a pre-trained BiT ResNet-101  on ImageNet (hyper-parameter details in Appendix \ref{app:expts}).

In experiments where we use both ID and OOD samples for training, the training set comprises of equal number of ID samples and  wild samples. 
We hold out 5\% of the original ID test set and use it as the ``strictly inlier'' sample needed to estimate 
$\pi_{\rm mix}$ for Algorithm~\ref{algo:plug-in}. 
Our final test set contains equal proportions of ID and OOD samples; we report results with other 
choices in Appendix \ref{app:expts}.


\textbf{Evaluation metrics.}
Recall that our goal is to solve the constrained objective in ~\eqref{eqn:scod}.
One way to measure performance with respect to this objective is
to 
measure
the area under the risk-coverage curve (AUC-RC),
as considered in prior work~\citep{Kim:2021, Xia:2022}.
Concretely, we
plot the joint risk in~\eqref{eqn:scod} as a function of samples abstained, and evaluate the area under the curve. 
This summarizes the performance of a rejector on both selective classification and OOD detection.
For a fixed fraction 
$\hat{b}_{\rm rej} 
= \frac{1}{|S_{\rm all}|}
\sum_{x \in S_{\rm all}}\1( r(x) = 1 )$
of abstained samples, we measure the joint risk as:
\begin{align*}
\textstyle
  {
  \frac{1}{Z}\Big( (1 - \costFN) \cdot\sum_{(x, y) \in S_{\rm in}}\1( y \ne h(x), r(x) = 0 )
  }
  {~+~ \costFN \cdot
        \sum_{x \in S_{\rm out}}\1( r(x) = 0 ) \Big)
    },
\end{align*}
where $Z = \sum_{x \in S_{\rm all}}\1( r(x) = 0 )$ conditions the risk on non-rejected samples, and
$S_{\rm all} = \{x: (x, y) \in S_{\rm in}\} \cup S_{\rm out}$ is the combined ID-OOD dataset. 
See Appendix~\ref{app:plug-in-budget} for details of how our plug-in estimators handle this constrained objective.
We set $\costFN = 0.75$ here, and explore other cost parameters in Appendix \ref{app:expts}.
We additionally report 
the ID accuracy, and
the precision, recall, 
ROC-AUC and FPR@95TPR for OOD detection, and provide plots of risk-coverage curves.

\begin{table}[t]
    \centering
    \caption{Area Under the Risk-Coverage Curve (AUC-RC) for  methods trained with CIFAR-100 as the ID sample and a mix of CIFAR-100 and either 300K Random Images or Open Images as the wild sample ($c_{\rm fn} = 0.75$). 
    The wild set contains 10\% ID and 90\% OOD. 
    Base model is ResNet-56. 
    A * against a method indicates that it uses both ID and OOD samples for training. 
    \emph{Lower} values are \emph{better}.}
    \scriptsize
    \begin{tabular}{@{}lccccccccccc@{}}
        \toprule
        & \multicolumn{5}{c}{
        ID + OOD training with
        $\Pr^{\rm tr}_{\rm out}$ = Random300K}
        &~~&
        \multicolumn{5}{c}{
        ID + OOD training with
        $\Pr^{\rm tr}_{\rm out}$ = OpenImages}
        \\
        Method / $\Pr^{\rm te}_{\rm out}$  & SVHN & Places & LSUN & LSUN-R
        & Texture & ~~ & SVHN & Places & LSUN & LSUN-R & Texture
        \\
        \toprule
        MSP & 0.318 & 0.337 & 0.325 & 0.392 & 0.350 &  & 
0.321 & 0.301 & 0.322 & 0.291 & 0.334 \\[3pt]
        MaxLogit & 0.284 & 0.319 & 0.297 & 0.365 & 0.332 &  & 
0.295 & 0.247 & 0.283 & 0.237 & 0.302
        \\[3pt]
        Energy & 0.285 & 0.320 & 0.296 & 0.364 & 0.328 &  & 
0.295 & 0.246 & 0.282 & 0.233 & 0.299
        \\
        \midrule
        SIRC [$L_1$] & 0.295 & 0.330 & 0.300 & 0.387 & 0.325 &  & 
0.307 & 0.273 & 0.294 & 0.257 & 0.308
        \\[3pt]
        SIRC [Res] & 0.270 & 0.333 & 0.289 & 0.387 & 0.355 &  & 
0.280 & 0.288 & 0.283 & 0.273 & 0.336
        \\
        \midrule
        CCE* & 0.287 & 0.314 & 0.254 & 0.212 & 0.257 &  & 
0.303 & 0.209 & 0.246 & 0.210 & 0.277 \\[2pt]
        DCE* & 0.294 & 0.325 & 0.246 & 0.211 & 0.258 &  & 
0.352 & 0.213 & 0.263 & 0.214 & 0.292 \\[3pt]
        OE* & 0.312 & 0.305 & 0.260 & 0.204 & 0.259 &  & 
0.318 & 0.202 & 0.259 & 0.204 & 0.297 \\
        \midrule
        Plug-in BB [$L_1$]  & 0.223 & \textbf{0.286} & \textbf{0.226} & 0.294 & \textbf{0.241} &  & 0.248 & 0.211 & \textbf{0.221} & {0.202} & \textbf{0.232}
        \\[2pt]
        Plug-in BB [Res] & \textbf{0.204} & 0.308 & 0.234 & 0.296 & 0.461 &  & 
\textbf{0.212} & 0.240 & \textbf{0.221} & 0.219 & 0.447
        \\ [2pt]
        Plug-in LB* & 0.289 & 0.305 & 0.243 & \textbf{0.187} & 0.249 &  & 
0.315 & \textbf{0.182} & 0.267 & \textbf{0.186} & 0.292\\
        \bottomrule
    \end{tabular}
    \vspace{-3pt}
    \label{tab:auc-rc-cf100-random-100k}
\end{table}

\begin{table}[t]
    \centering
    \caption{AUC-RC ($\downarrow$) for CIFAR-100 as ID, and a ``wild'' comprising of 90\% ID and \emph{only} 10\% OOD. 
    The OOD part of the wild set is drawn from the \emph{same} OOD dataset from which the test set is drawn.
    }
    \vspace{-2pt}
    \scriptsize
    \begin{tabular}{@{}lccccccc@{}}
        \toprule
        & \multicolumn{6}{c}{
        ID + OOD training with
        $\Pr^{\rm tr}_{\rm out} = \Pr^{\rm te}_{\rm out}$} 
        \\[2pt]
        Method / $\Pr^{\rm te}_{\rm out}$ & SVHN & Places & LSUN & LSUN-R & Texture & OpenImages & CelebA 
        \\
        \toprule
        MSP & 0.313 & 0.287 & 0.325 & 0.300 & 0.402 & 0.281 & 0.267 \\[3pt]
        MaxLogit & 0.254 & 0.232 & 0.286 & 0.250 & 0.391 & 0.243 & 0.234 \\[3pt]
        Energy & 0.250 & 0.232 & 0.284 & 0.247 & 0.389 & 0.243 & 0.231 \\
        \midrule
        SIRC [$L_1$] & 0.254 & 0.257 & 0.289 & 0.276 & 0.378 & 0.257 & 0.229 \\[3pt]
        SIRC [Res] & 0.249 & 0.270 & 0.292 & 0.289 & 0.408 & 0.269 & 0.233 \\
        \midrule
    CCE* & 0.238 & 0.227 & 0.231 & 0.235 & 0.239 & 0.243 & 0.240 \\[3pt]
    DCE* & 0.235 & 0.220 & 0.226 & 0.230 & 0.235 & 0.241 & 0.227 \\[3pt]
    OE* & 0.245 & 0.245 & 0.254 & 0.241 & 0.264 & 0.255 & 0.239 \\
        \midrule
        Plug-in BB [$L_1$] & \textbf{0.196} & 0.210 & 0.226 & 0.223 & 0.318 & \textbf{0.222} & 0.227 \\[2pt]
        Plug-in BB [Res] & 0.198 & 0.236 & 0.244 & 0.250 & 0.470 & 0.251 & 0.230 \\ [2pt]
        Plug-in LB* & 0.221 & \textbf{0.199} & \textbf{0.209} & \textbf{0.215} & \textbf{0.218} & 0.225 & \textbf{0.205} \\
        \bottomrule
    \end{tabular}
    \vspace{-2pt}
    \label{tab:auc-rc-cf100-wild}
\end{table}

\begin{table}[t]
    \centering
    \scriptsize
    \caption{AUC-RC ($\downarrow$)  for methods trained with ImageNet as the inlier dataset and \emph{without} OOD samples. The base model is a pre-trained BiT ResNet-101. 
    \emph{Lower} values are \emph{better}.
    }
    \vspace{-2pt}
    \begin{tabular}{@{}lcccccccc@{}}
        \toprule
        & \multicolumn{7}{c}{ID-only training}\\
        Method / $\Pr^{\rm te}_{\rm out}$ & Places & LSUN & CelebA & Colorectal &
        iNaturalist-O & Texture &
        OpenImages-O &  ImageNet-O \\
        \toprule
        MSP &  0.227 & 0.234 & 0.241 & 0.218 & 0.195 & \textbf{0.220} & 0.203 & 0.325 \\[3pt]
        MaxLogit & 0.229 & 0.239 & 0.256 & 0.204 & 0.195 & 0.223 & 0.202 & 0.326 \\[3pt]
        Energy & 0.235 & 0.246 & 0.278 & 0.204 & 0.199 & 0.227 & 0.210 & 0.330 \\
        \midrule
        SIRC [$L_1$] & 0.222 & 0.229 & 0.248 & 0.220 & 0.196 & 0.226 & \textbf{0.200} & \textbf{0.313} \\[2pt]
        SIRC [Res]  & 0.211 & 0.198 & 0.178 & 0.161 & 0.175 & \textbf{0.219} & \textbf{0.201} & 0.327 \\[1pt]
        \midrule
        Plug-in BB [$L_1$] & 0.261 & 0.257 & 0.337 & 0.283 & 0.219 & 0.270 & 0.222 & 0.333 \\[1pt]
        Plug-in BB [Res] & \textbf{0.191} & \textbf{0.170} & \textbf{0.145} & \textbf{0.149} & \textbf{0.162} & 0.252 & 0.215 & 0.378  \\
        \bottomrule
    \end{tabular}
    \label{tab:auc-rc-imagenet-id-only}
\end{table}

\textbf{Baselines.} 
Our \emph{primary competitor is SIRC}, the only prior method that jointly tackles both selective classification and OOD detection. 
We compare with two variants of this method, 
which respectively use the $L_1$-norm of the embeddings as the OOD detection score, 
and a residual score \cite{wang2022vim} instead. 

We additionally compare with representative methods from the OOD detection and SCOD literature. This includes ones that train only on the ID samples, namely, MSP \citep{Chow:1970}, MaxLogit \citep{Hendrickx:2021}, energy-based scorer \citep{Hendrickx:2021}, and SIRC \citep{Xia:2022}, and those which additionally use OOD samples, namely, the coupled CE loss (CCE) \citep{Thulasidasan:2021}, the de-coupled CE loss (DCE) \citep{Bitterwolf:2022}, and the outlier exposure (OE) \citep{Hendrycks:2019}.  
In Appendix \ref{app:expts}, we also compare against  cost-sensitive softmax (CSS) loss~\citep{Mozannar:2020}, a representative SC baseline, and ODIN \citep{Liang:2018}.
With each method, we tune the threshold or cost parameter to achieve a given rate of abstention, and aggregate performance across different abstention rates (details in Appendix \ref{app:expts}). 
\textbf{Plug-in estimators.} We evaluate three variants of our proposed estimators: (i)  black-box rejector in \eqref{eqn:plug-in-black-box} using the $L_1$ scorer of \citet{Xia:2022} for $s_{\rm ood}$, (ii) black-box rejector in \eqref{eqn:plug-in-black-box} using their residual scorer, and (iii)  loss-based rejector using the de-coupled (DC) loss in \eqref{eqn:decoupled-surrogate}. Of these, (i) and (ii) use only ID samples for training; (iii) uses both ID and OOD samples for training.

\textbf{Results.}\ 
Our first experiments use CIFAR-100 as the ID sample. Table \ref{tab:auc-rc-cf100-random-100k} reports results for a setting where the OOD samples used (as a part of the wild set) during training  are different from those used for testing ($\Pr^{\rm tr}_{\rm out} \ne \Pr^{\rm te}_{\rm out}$). Table
\ref{tab:auc-rc-cf100-wild} contains results for a setting where they are the same ($\Pr^{\rm tr}_{\rm out} = \Pr^{\rm te}_{\rm out}$). 

In both cases, \textit{one among the three plug-in estimators yields the lowest AUC-RC}. Interestingly, when $\Pr^{\rm tr}_{\rm out} \ne \Pr^{\rm te}_{\rm out}$, the two black-box (BB) plug-in estimators that use only ID-samples for training often fare better than the loss-based (LB) one which uses both ID and wild samples for training. This is likely due to the mismatch between the training and test OOD distributions resulting in the decoupled loss yielding poor estimates of  $\frac{\Pr_{\rm in}(x) }{\Pr_{\rm out}(x)}$.
When $\Pr^{\rm tr}_{\rm out} = \Pr^{\rm te}_{\rm out}$, the LB estimator often performs the best.

In Table \ref{tab:auc-rc-imagenet-id-only}, we present results with ImageNet as ID, and no OOD samples for training. The  BB plug-in estimator (residual) yields notable gains on 5/8 OOD datasets. On the remaining, even the SIRC baselines are often only marginally better than MSP; this is because the grad-norm  scorers used by them (and also by our estimators) are not very effective in detecting OOD samples for these datasets.
\section{Discussion and future work}
\vspace{-3pt}
We have provided theoretically grounded plug-in estimators for 
SCOD
and demonstrated their efficacy on both settings that train with only ID samples, and those that additionally use a noisy OOD sample. 
A key element in our approach is an estimator for the ID-OOD density ratio, for which we  used grad-norm based scorers \cite{wang2022vim} as representative methods. In the future, we wish to explore other approaches for estimating the density ratio (e.g., \citep{Ren:2019}). We also wish to study the fairness implications of our approach on rare subgroups \citep{jonesselective}; we discuss  this and other limitations in Appendix \ref{app:limitations}.

\bibliographystyle{plainnat}
\bibliography{ood_references,l2r_references}

\newpage
\appendix

\addcontentsline{toc}{section}{Appendix} 
\part{Appendix} 
\parttoc 

\section{Proofs}

\begin{proof}[Proof of Lemma~\ref{lem:scod-bayes}]
We first define a joint marginal distribution $\Pcomb$ that samples from $\Pr_{\rm in}(x)$ and $\Pr_{\rm out}(x)$ with equal probabilities.
We then rewrite the objective in \eqref{eqn:scod-soft} in terms of the joint marginal distribution:
\begin{align*}
    {L_{\rm scod}(h, r)} &= \Ex_{x \sim \Pcomb}\left[ T_1( h( x ), r( x ) ) + T_2( h( x ), r( x ) ) \right] \\
    T_1( h( x ), r( x ) ) &= ( 1 - \costin - \costout ) \cdot \Ex_{y|x \sim \Pr_{\rm in}}\left[\frac{\Pr_{\rm in}(x)}{\Pcomb(x)} \cdot \1( y \neq h( x ), h( x ) \neq \abstain) \right] \\
    &= ( 1 - \costin - \costout ) \cdot \sum_{y \in [L]}\Pr_{\rm in}(y|x) \cdot \frac{\Pr_{\rm in}(x)}{\Pcomb(x)} \cdot \1( y \neq h( x ), h( x ) \neq \abstain) \\
    T_2( h( x ), r( x ) ) &= \costin \cdot \frac{\Pr_{\rm in}(x)}{\Pcomb(x)} \cdot\1( h( x ) = \abstain ) + + \costout \cdot \1( h( x ) \neq \abstain ).
\end{align*}

The conditional risk that a classifier $h$ incurs when abstaining (i.e., predicting $r( x ) = 1$) on a fixed instance $x$ is given by:
\[
\costin \cdot \frac{\Pr_{\rm in}(x)}{\Pcomb(x)}.
\]

The conditional risk associated with predicting a base class $y \in [L]$ on instance $x$ is given by:
\[
( 1 - \costin - \costout ) \cdot \frac{\Pr_{\rm in}(x)}{\Pcomb(x)} \cdot \left( 1 - \Pr_{\rm in}(y|x) \right)
        + \costout \cdot \frac{\Pr_{\rm out}(x)}{\Pcomb(x)} 
\]
The Bayes-optimal classifier then predicts the label with the lowest conditional risk. 
When $\Pr_{\rm in}( x ) = 0$, this amounts to predicting abstain ($r( x ) = 1$). 
When $\Pr_{\rm in}( x ) > 0$, the optimal classifier predicts $r( x ) = 1$ when:
\begin{align*}
    &\costin \cdot \frac{\Pr_{\rm in}(x)}{\Pcomb(x)}
     < ( 1 - \costin - \costout ) \cdot \frac{\Pr_{\rm in}(x)}{\Pcomb(x)} \cdot \min_{y \in [L]}\left( 1 - \Pr_{\rm in}(y|x) \right)
        + \costout \cdot \frac{\Pr_{\rm out}(x)}{\Pcomb(x)} 
    \\
    &\iff 
    \costin \cdot {\Pr_{\rm in}(x)}
     < ( 1 - \costin - \costout ) \cdot {\Pr_{\rm in}(x)} \cdot \min_{y \in [L]}\left( 1 - \Pr_{\rm in}(y|x) \right)
        + \costout \cdot {\Pr_{\rm out}(x)}
    \\    
    &\iff
    \costin \cdot {\Pr_{\rm in}(x)}
     < ( 1 - \costin - \costout ) \cdot {\Pr_{\rm in}(x)} \cdot \left( 1 - \max_{y \in [L]}\Pr_{\rm in}(y|x) \right)
        + \costout \cdot {\Pr_{\rm out}(x)} \\
    &\iff
    \costin
     < ( 1 - \costin - \costout ) \cdot \left( 1 - \max_{y \in [L]}\Pr_{\rm in}(y|x) \right)
        + \costout \cdot \frac{\Pr_{\rm out}(x)}{\Pr_{\rm in}(x)}.
\end{align*}
Otherwise, the classifier does not abstain ($r( x ) = 0$),
and predicts $\argmax_{y \in [L]}\, \Pr_{\rm in}(y|x)$, as desired.
\end{proof}

\begin{proof}[Proof of Lemma~\ref{lemm:bayes-open-set-rewrite}]
Recall that in open-set classification,
the outlier distribution is $\Pr_{\rm out}( x ) = \Pr_{\rm te}(x \mid y=L)$,
while
the training distribution is
\begin{align*}
    \PTr( x \mid y ) &= \mathbb{P}_{\rm te}( x \mid y ) \\
    \piTr( y ) &= \Pr_{\rm in}( y ) \\
    &= \frac{1( y \neq L )}{1 - \pi_{\rm te}( L )} \cdot \pi_{\rm te}( y ). 
\end{align*}
We will find it useful to derive the following quantities.
\begin{align*}
    \PTr( x, y ) &= \piTr( y ) \cdot \PTr( x \mid y ) \\
    &= \frac{1( y \neq L ) }{1 - \pi_{\rm te}( L )} \cdot \pi_{\rm te}( y ) \cdot \mathbb{P}_{\rm te}( x \mid y ) \\
    &= \frac{1( y \neq L ) }{1 - \pi_{\rm te}( L )} \cdot \mathbb{P}_{\rm te}( x, y ) \\
    \PTr( x ) &= \sum_{y \in [L]} \PTr( x, y ) \\
    &= \sum_{y \in [L]} \piTr( y ) \cdot \PTr( x \mid y ) \\
    &= \frac{1}{1 - \pi_{\rm te}( L )} \sum_{y \neq L} \pi_{\rm te}( y ) \cdot \mathbb{P}_{\rm te}( x \mid y ) \\
    &= \frac{1}{1 - \pi_{\rm te}( L )} \sum_{y \neq L} \mathbb{P}_{\rm te}( y \mid x ) \cdot \mathbb{P}_{\rm te}( x ) \\
    &= \frac{\mathbb{P}_{\rm te}( y \neq L \mid x )}{1 - \pi_{\rm te}( L )} \cdot \mathbb{P}_{\rm te}( x ) \\
    \PTr( y \mid x ) &= \frac{\PTr( x, y )}{\PTr( x )} \\
    &= \frac{1( y \neq L ) }{1 - \pi_{\rm te}( L )} \cdot \frac{1 - \pi_{\rm te}( L )}{\mathbb{P}_{\rm te}( y \neq L \mid x )} \cdot \frac{\mathbb{P}_{\rm te}( x, y )}{\mathbb{P}_{\rm te}( x )}  \\
    &= \frac{1( y \neq L )}{\mathbb{P}_{\rm te}( y \neq L \mid x )} \cdot \mathbb{P}_{\rm te}( y \mid x ).
\end{align*}

The first part follows from standard results in cost-sensitive learning \citep{Elkan:2001}:
\begin{align*}
    r^*(x) = 1
    &\,\iff\,
    \costin \cdot \Pr_{\rm in}( x ) - \costout \cdot \Pr_{\rm out}( x ) < 0 \\
    &\,\iff\,
    \costin \cdot \Pr_{\rm in}( x ) < \costout \cdot \Pr_{\rm out}( x ) \\
    &\,\iff\,
    \costin \cdot \Pr_{\rm te}( x \mid y \neq L ) < \costout \cdot \Pr_{\rm te}( x \mid y = L ) \\
    &\,\iff\,
    \costin \cdot \Pr_{\rm te}( y \neq L \mid x ) \cdot \Pr_{\rm te}( y = L ) < \costout \cdot \Pr_{\rm te}( y = L \mid x ) \cdot \Pr_{\rm te}( y \neq L ) \\
    &\,\iff\,
    \frac{\costin \cdot \Pr_{\rm te}( y = L )}{\costout \cdot \Pr_{\rm te}( y \neq L )} < \frac{\Pr_{\rm te}( y = L \mid x )}{\Pr_{\rm te}( y \neq L \mid x )} \\
    &\,\iff\,
    \mathbb{P}_{\rm te}( y = L \mid x ) > F\left( \frac{\costin \cdot \Pr_{\rm te}( y = L )}{\costout \cdot \Pr_{\rm te}( y \neq L )} \right).
\end{align*}
We further have for threshold $t^*_{\rm osc} \defEq F\left( \frac{\costin \cdot \Pr_{\rm te}( y = L )}{\costout \cdot \Pr_{\rm te}( y \neq L )} \right)$,
\begin{align*}
    \mathbb{P}_{\rm te}( y = L \mid x ) \geq t^*_{\rm osc} &\iff \mathbb{P}_{\rm te}( y \neq L \mid x ) \leq 1 - t^*_{\rm osc} \\
    &\iff \frac{1}{\mathbb{P}_{\rm te}( y \neq L \mid x )} \geq \frac{1}{1 - t^*_{\rm osc}} \\
    &\iff \frac{\max_{y' \neq L} \mathbb{P}_{\rm te}( y' \mid x)}{\mathbb{P}_{\rm te}( y \neq L \mid x )} \geq \frac{\max_{y' \neq L} \mathbb{P}_{\rm te}( y' \mid x)}{1 - t^*_{\rm osc}} \\
    &\iff \max_{y' \neq L} \PTr( y' \mid x ) \geq \frac{\max_{y' \neq L} \mathbb{P}_{\rm te}( y' \mid x)}{1 - t^*_{\rm osc}}.
\end{align*}
That is, we want to reject when the maximum softmax probability is \emph{higher} than some (sample-dependent) threshold.
\end{proof}

\begin{proof}[Proof of Lemma \ref{lem:chow-fail}]
Fix $\epsilon \in (0,1)$. 
We consider two cases for threshold $t_{\rm msp}$:

Case (i): $t_{\rm msp} \leq \frac{1}{L-1}$. Consider a distribution where for all instances $x$, $\mathbb{P}_{\rm te}( y = L \mid x ) = 1 - \epsilon$ and $\mathbb{P}_{\rm te}( y' \mid x) = \frac{\epsilon}{L-1}, \forall y' \ne L$. Then the Bayes-optimal classifier  accepts any instance $x$ for all thresholds $t \in \big(0, 1-\epsilon\big)$. In contrast, Chow's rule would compute $\max_{y \ne L}\PTr( y \mid x) = \frac{1}{L-1},$ and thus reject all instances $x$. 

Case (ii): $t_{\rm msp} > \frac{1}{L-1}$. Consider a distribution where for all instances $x$, $\mathbb{P}_{\rm te}( y = L \mid x ) = \epsilon$ and $\mathbb{P}_{\rm te}( y' \mid x) = \frac{1-\epsilon}{L-1}, \forall y' \ne L$. Then the Bayes-optimal classifier would reject any instance $x$ for thresholds $t \in \big(\epsilon, 1\big)$, whereas Chow's rule would accept all instances. 

Taking $\epsilon \rightarrow 0$ completes the proof. 
\end{proof}


%
\begin{proof}[Proof of Lemma~\ref{lemm:black-box-regret}]
Let $\Pr^*$ denote the joint distribution that draws a sample from $\Pr_{\rm in}$ and $\Pr_{\rm out}$ with equal probability. Denote $\gamma_{\rm in}(x) = \frac{ \Pr_{\rm in}(x) }{ \Pr_{\rm in}(x) + \Pr_{\rm out}(x) }$.

The joint risk in \eqref{eqn:scod-soft} can be written as:
\begin{align*}
\lefteqn{L_{\rm scod}(h, r) }\\
&= 
    (1 - \costin - \costout) \cdot \Pr_{\rm in}( y \neq {h}( x ), r( x ) = 0 ) + 
    \costin \cdot \Pr_{\rm in}( r( x ) = 1 ) + \costout \cdot \Pr_{\rm out}( r( x ) = 0 )\\
    &= 
    \Ex_{x \sim \Pr^*}\Big[ (1 - \costin - \costout) \cdot \gamma_{\rm in}(x)
     \cdot
    \sum_{y \ne h(x)} \Pr_{\rm in}( y \mid x) \cdot \1( r( x ) = 0 )  \\[-5pt]
    & \hspace{3.5cm} + 
    \costin \cdot \gamma_{\rm in}(x) \cdot \1( r( x ) = 1 )
    + \costout \cdot (1 - \gamma_{\rm in}(x)) \cdot \1( r( x ) = 0 ) \Big].
\end{align*}

 For class probability estimates $\hat{\Pr}_{\rm in}(y \mid x) \approx \Pr_{\rm in}(y \mid x)$, and scorers 
 $\hat{s}_{\rm sc}(x) = \max_{y \in [L]} \hat{\Pr}_{\rm in}(y \mid x)$ and
$\hat{s}_{\rm ood}(x) \approx \frac{ \Pr_{\rm in}(x) }{ \Pr_{\rm out}(x)}$, we construct a classifier $\hat{h}(x) \in \argmax_{y \in [L]} \hat{\eta}_y(x)$ and black-box rejector:
\begin{equation}
    \label{eqn:plug-in-black-box-rewritten}
    \hat{r}_{\rm BB}( x ) = 1 \iff ( 1 - \costin - \costout ) \cdot (1 - \hat{s}_{\rm sc}( x )) + {\costout} \cdot \left( \frac{ 1 }{ \hat{s}_{\rm ood}( x ) } \right) > c_{\rm in}.
\end{equation}

Let $(h^*, r^*)$ denote the optimal classifier and rejector as defined in \eqref{eqn:scod-bayes}. 
We then wish to bound the following regret:
\begin{align*}
    L_{\rm scod}(\hat{h}, \hat{r}_{\rm BB}) - L_{\rm scod}(h^*, r^*) &= 
    \underbrace{L_{\rm scod}(\hat{h}, \hat{r}_{\rm BB}) - L_{\rm scod}(h^*, \hat{r}_{\rm BB})}_{ \text{term}_1 } + \underbrace{L_{\rm scod}(h^*, \hat{r}_{\rm BB}) - L_{\rm scod}(h^*, r^*)}_{ \text{term}_2 }. 
\end{align*}
We first bound the first term:
\begin{align*}
    \text{term}_1 &= 
    \Ex_{x \sim \Pr^*}\left[ (1 - \costin - \costout) \cdot \gamma_{\rm in}(x)
     \cdot \1( \hat{r}_{\rm BB}( x ) = 0 ) \cdot
    \Big( \sum_{y \ne \hat{h}(x)} \Pr_{\rm in}( y \mid x) 
    - \sum_{y \ne h^*(x)} \Pr_{\rm in}( y \mid x) \Big)
     \right] \\
    &=  \Ex_{x \sim \Pr^*}\left[ \omega(x) \cdot
    \Big( \sum_{y \ne \hat{h}(x)} \Pr_{\rm in}( y \mid x) 
    - \sum_{y \ne h^*(x)} \Pr_{\rm in}( y \mid x) \Big)
     \right],
\end{align*}
where we denote $\omega(x) = (1 - \costin - \costout) \cdot \gamma_{\rm in}(x)
     \cdot \1( \hat{r}_{\rm BB}( x ) = 0 )$.
     
Furthermore, we can write:
\begin{align*}
\lefteqn{\text{term}_1}\\
    &=
    \Ex_{x \sim \Pr^*}\left[ \omega(x) \cdot
    \Big( \sum_{y \ne \hat{h}(x)} \Pr_{\rm in}( y \mid x) 
    -
    \sum_{y \ne h^*(x)} \hat{\Pr}_{\rm in}( y \mid x) 
    +
    \sum_{y \ne h^*(x)} \hat{\Pr}_{\rm in}( y \mid x) 
    - \sum_{y \ne h^*(x)} \Pr_{\rm in}( y \mid x) \Big)
     \right] \\
    &\leq
    \Ex_{x \sim \Pr^*}\left[ \omega(x) \cdot
    \Big( \sum_{y \ne \hat{h}(x)} \Pr_{\rm in}( y \mid x) 
    -
    \sum_{y \ne \hat{h}(x)} \hat{\Pr}_{\rm in}( y \mid x) 
    +
    \sum_{y \ne h^*(x)} \hat{\Pr}_{\rm in}( y \mid x) 
    - \sum_{y \ne h^*(x)} \Pr_{\rm in}( y \mid x) \Big)
     \right]\\
    &\leq
    2 \cdot \Ex_{x \sim \Pr^*}\left[ \omega(x) \cdot
    \sum_{y  \in [L]}
    \left| \Pr_{\rm in}( y \mid x) 
    -
     \hat{\Pr}_{\rm in}( y \mid x) \right|
     \right]\\
     &\leq
    2 \cdot \Ex_{x \sim \Pr^*}\left[
    \sum_{y  \in [L]}
    \left| \Pr_{\rm in}( y \mid x) 
    -
     \hat{\Pr}_{\rm in}( y \mid x) \right|
     \right],
\end{align*}
where the third step uses the definition of $\hat{h}$ and the fact that $\omega(x) > 0$; the last step uses the fact that $\omega(x) \leq 1$.

We bound the second term now. For this, we first define:
\begin{align*}
 L_{\rm rej}(r) 
&= 
\Ex_{x \sim \Pr^*}\bigg[
\left(
    (1 - \costin - \costout) \cdot \gamma_{\rm in}(x) \cdot ( 1 - \max_{y \in [L]} \Pr_{\rm in}(y \mid x) ) + \costout \cdot (1 - \gamma_{\rm in}(x)) \right) \cdot \1(r(x) = 0)\\
& \hspace{10cm}
+ \costin \cdot \gamma_{\rm in}(x) \cdot \1(r(x) = 1) \bigg].
\end{align*}
and
\begin{align*}
{ \hat{L}_{\rm rej}(r)  }
&= 
\Ex_{x \sim \Pr^*}\bigg[
\left(
    (1 - \costin - \costout) \cdot \hat{\gamma}_{\rm in}(x) \cdot ( 1 - \max_{y \in [L]} \hat{\Pr}_{\rm in}(y \mid x) ) + \costout \cdot (1 - \hat{\gamma}_{\rm in}(x)) \right) \cdot \1(r(x) = 0)\\
& \hspace{10cm}
+ \costin \cdot \hat{\gamma}_{\rm in}(x) \cdot \1(r(x) = 1) \bigg],
\end{align*}
where we denote $\hat{\gamma}_{\rm in}(x) = \frac{\hat{s}_{\rm ood}(x)}{1 + \hat{s}_{\rm ood}(x)}$.

Notice that $r^*$ minimizes $L(r)$ over all rejectors $r: \mathcal{X} \rightarrow \{0, 1\}$. Similarly, note that $\hat{r}_{\rm BB}$ minimizes $\hat{L}(r)$ over all rejectors $r: \mathcal{X} \rightarrow \{0, 1\}$.

Then the second term can be written as:
\begin{align*}
\text{term}_2 
    &=  
    L_{\rm rej}(\hat{r}_{\rm BB}) - L_{\rm rej}(r^*)\\
    &=  L_{\rm rej}(\hat{r}_{\rm BB}) - 
    \hat{L}_{\rm rej}(r^*)
    +
    \hat{L}_{\rm rej}(r^*)
    -
    L_{\rm rej}(r^*) 
    \\
    &\leq
    L_{\rm rej}(\hat{r}_{\rm BB}) - 
        \hat{L}_{\rm rej}(\hat{r}_{\rm BB})
        +
        \hat{L}_{\rm rej}(r^*)
        -
        L_{\rm rej}(r^*)\\
    &\leq 
    2 \cdot(1 - \costin - \costout) \cdot  \left|\max_{y \in [L]} \Pr_{\rm in}(y \mid x) - \max_{y \in [L]} \hat{\Pr}_{\rm in}(y \mid x) \right|\cdot|\gamma_{\rm in}(x) - \hat{\gamma}_{\rm in}(x)|\\
    &
    \hspace{6cm}+
        2 \cdot \big(
    (1 - \costin - \costout)   + \costout + \costin \big) \cdot |\gamma_{\rm in}(x) - \hat{\gamma}_{\rm in}(x)|
    \\
    &\leq 2 \cdot(1 - \costin - \costout) \cdot  (1) \cdot|\gamma_{\rm in}(x) - \hat{\gamma}_{\rm in}(x)| + 
    2 \cdot  (1) \cdot|\gamma_{\rm in}(x) - \hat{\gamma}_{\rm in}(x)|
    \\
    &\leq 4 \cdot |\gamma_{\rm in}(x) - \hat{\gamma}_{\rm in}(x)|\\
    &= 4 \cdot \left|
    \frac{ \Pr_{\rm in}(x) }{ \Pr_{\rm in}(x) + \Pr_{\rm out}(x) } - \frac{\hat{s}_{\rm ood}(x)}{1 + \hat{s}_{\rm ood}(x)}
    \right|,
\end{align*}
where the third step follows from $\hat{r}_{\rm BB}$ being a minimizer of $\hat{L}_{\rm rej}(r)$, the fourth step uses the fact that $\left|\max_{y \in [L]} \Pr_{\rm in}(y \mid x) - \max_{y \in [L]} \hat{\Pr}_{\rm in}(y \mid x) \right| \leq 1$, and the fifth step uses the fact that $c_{\rm in} + c_{\rm out} \leq 1$.

Combining the bounds on $\text{term}_1$ and $\text{term}_2$ completes the proof.
\end{proof}

\begin{proof}[Proof of Lemma 4.2]
We first note that $f^*(x) \propto \log(\mathbb{P}_{\rm in}(y \mid x))$ and $s^*(x) = \log\big( \frac{ \mathbb{P}^*(z = 1 \mid x) }{ \mathbb{P}^*(z = 0 \mid x) } \big)$.

\textbf{Regret Bound 1}: We start with the first regret bound. We expand the multi-class cross-entropy loss to get:
\begin{align*}
    \mathbb{E}_{( x, y ) \sim \mathbb{P}_{\rm in}}\left[ \ell_{\rm mc}( y, f( x ) )
    \right]  &= 
    \mathbb{E}_{x \sim \mathbb{P}_{\rm in}}\left[ -\sum_{y \in [L]} \mathbb{P}_{\rm in}(y \mid x) \cdot \log\left( p_y( x ) \right)
    \right] \\
    \mathbb{E}_{( x, y ) \sim \mathbb{P}_{\rm in}}\left[ \ell_{\rm mc}( y, f^*( x ) )
    \right]  &= 
    \mathbb{E}_{x \sim \mathbb{P}_{\rm in}}\left[ -\sum_{y \in [L]} \mathbb{P}_{\rm in}(y \mid x) \cdot \log\left( \mathbb{P}_{\rm in}(y \mid x) \right)
    \right].
\end{align*}
The right-hand side of the first bound can then be expanded as:
\begin{align}
    \mathbb{E}_{( x, y ) \sim \mathbb{P}_{\rm in}}\left[ \ell_{\rm mc}( y, f( x ) )
    \right] - 
    \mathbb{E}_{( x, y ) \sim \mathbb{P}_{\rm in}}\left[ \ell_{\rm mc}( y, f^*( x ) )
    \right]
    &= 
    \mathbb{E}_{x \sim \mathbb{P}_{\rm in}}\left[ \sum_{y \in [L]} \mathbb{P}_{\rm in}(y \mid x) \cdot \log\left( \frac{ \mathbb{P}_{\rm in}(y \mid x) }{ p_y( x ) } \right)
    \right],
    \label{eqn:kl-rewritten}
\end{align}
which the KL-divergence between $\mathbb{P}_{\rm in}(y \mid x) $ and $p_y(x)$. 

The KL-divergence between two probability mass functions $p$ and $q$ over $\mathcal{U}$ can be lower bounded by:
\begin{equation}
\text{KL}(p || q) \geq \frac{1}{2} \left( \sum_{u \in \mathcal{U}} |p(u) - q(u)| \right)^2.
\label{eqn:kld-bound}
\end{equation}
Applying \eqref{eqn:kld-bound} to \eqref{eqn:kl-rewritten}, we have:
\begin{align*}
    \sum_{y \in [L]} \mathbb{P}_{\rm in}(y \mid x) \cdot \log\left( \frac{ \mathbb{P}_{\rm in}(y \mid x) }{ p_y( x ) } \right)
    &\geq
    \frac{1}{2}\left(
    \sum_{y \in [L]} \left|\mathbb{P}_{\rm in}(y \mid x) -  p_y( x ) \right|
     \right)^2,
\end{align*}
and therefore:
\begin{align*}
    \mathbb{E}_{( x, y ) \sim \mathbb{P}_{\rm in}}\left[ \ell_{\rm mc}( y, f( x ) )
    \right] - 
    \mathbb{E}_{( x, y ) \sim \mathbb{P}_{\rm in}}\left[ \ell_{\rm mc}( y, f^*( x ) )
    \right]
    &\geq 
    \frac{1}{2}\cdot\mathbb{E}_{x \sim \mathbb{P}_{\rm in}}\left[ \left(
    \sum_{y \in [L]} \left|\mathbb{P}_{\rm in}(y \mid x) -  p_y( x ) \right|
     \right)^2
    \right] \\
    &\geq
    \frac{1}{2}
    \left(
    \mathbb{E}_{x \sim \mathbb{P}_{\rm in}}\left[ 
    \sum_{y \in [L]} \left|\mathbb{P}_{\rm in}(y \mid x) -  p_y( x ) \right|
     \right]
     \right)^2,
\end{align*}
or 
\[
    \mathbb{E}_{x \sim \mathbb{P}_{\rm in}}\left[
    \sum_{y \in [L]} \big|
    \Pr_{\rm in}(y \mid x) - p_y(x) \big| \right]
    ~\textstyle
    \leq 
    \sqrt{2}\sqrt{
    \mathbb{E}_{( x, y ) \sim \mathbb{P}_{\rm in}}\left[ \ell_{\rm mc}( y, f( x ) )
    \right] 
    \,-\,
    \mathbb{E}_{( x, y ) \sim \mathbb{P}_{\rm in}}\left[ \ell_{\rm mc}( y, f^*( x ) )
    \right]
    }.
\]

\textbf{Regret Bound 2}: We expand the binary  sigmoid cross-entropy loss to get:
\begin{align*}
\mathbb{E}_{( x, z ) \sim \mathbb{P}^*}\left[ \ell_{\rm bc}(z , s( x ) )\right]
&=
\mathbb{E}_{x \sim \mathbb{P}^*}\left[ -\mathbb{P}^*(z = 1 \mid x) \cdot \log\left( p_\perp( x ) \right)
\,-\,
\mathbb{P}^*(z = -1 \mid x) \cdot \log\left( 1 - p_\perp( x ) \right)
    \right]\\
\mathbb{E}_{( x, z ) \sim \mathbb{P}^*}\left[ \ell_{\rm bc}(z , s^*( x ) )\right]
&=
\mathbb{E}_{x \sim \mathbb{P}^*}\left[ -\mathbb{P}^*(z = 1 \mid x) \cdot \log\left( \mathbb{P}^*(z = 1 \mid x) \right)
\,-\,
\mathbb{P}^*(z = -1 \mid x) \cdot \log\left( \mathbb{P}^*(z = -1 \mid x) \right)
    \right],
\end{align*}
and furthermore
\begin{align*}
\lefteqn{
    \mathbb{E}_{( x, z ) \sim \mathbb{P}^*}\left[ \ell_{\rm bc}( z, s( x ) )
    \right] 
    \,-\,
    \mathbb{E}_{( x, z ) \sim \mathbb{P}^*}\left[ \ell_{\rm bc}(z , s^*( x ) )\right]}\\
&=
\mathbb{E}_{x \sim \mathbb{P}^*}\left[
\mathbb{P}^*(z = 1 \mid x) \cdot \log\left( \frac{ \mathbb{P}^*(z = 1 \mid x) }{ p_\perp(x) } \right) \,+\,
\mathbb{P}^*(z = -1 \mid x) \cdot \log\left( \frac{ \mathbb{P}^*(z = -1 \mid x) }{ 1 - p_\perp(x) } \right)
\right]
\\
&
\geq 
\mathbb{E}_{x \sim \mathbb{P}^*}\left[
\frac{1}{2}\left( |\mathbb{P}^*(z = 1 \mid x) - p_\perp(x)| +  |\mathbb{P}^*(z = -1 \mid x) - (1 - p_\perp(x))|\right)^2
\right]
\\
&
= 
\mathbb{E}_{x \sim \mathbb{P}^*}\left[
\frac{1}{2}\left( |\mathbb{P}^*(z = 1 \mid x) - p_\perp(x)| +  |(1 - \mathbb{P}^*(z = 1 \mid x)) - (1 - p_\perp(x))|\right)^2
\right]
\\
&=
2\cdot 
\mathbb{E}_{x \sim \mathbb{P}^*}\left[
|\mathbb{P}^*(z = 1 \mid x) - p_\perp(x)|^2
\right]\\
&\geq
2\cdot \left(\mathbb{E}_{x \sim \mathbb{P}^*}\left[
|\mathbb{P}^*(z = 1 \mid x) - p_\perp(x)|
\right]\right)^2,
\end{align*}
where the second step uses the bound in \eqref{eqn:kld-bound} and the last step uses Jensen's inequality. Taking square-root on both sides and noting that $\mathbb{P}^*(z = 1 \mid x) = \frac{\Pr_{\rm in}( x )}{\Pr_{\rm in}( x ) + \Pr_{\rm out}( x )}$ completes the proof.
\end{proof}


\section{Technical details: Coupled loss}
\label{app:coupled-loss}
Our second loss function 
seeks to learn an augmented scorer $\bar{f} \colon \XCal \to \Real^{L+1}$, with the additional score corresponding to a ``reject class'', denoted by $\perp$, and is based on the following simple observation:
define
$$ z_{y'}( x ) = \begin{cases}
(1 - \costin - \costout) \cdot \Pr_{\rm in}( y \mid x ) & \text{ if } y' \in [ L ] \\
(1 - 2 \cdot \costin - \costout) + \costout \cdot \frac{\Pr_{\rm out}( x )}{\Pr_{\rm in}( x )} & \text{ if } y' = \perp,
\end{cases} $$
and let
$\zeta_{y'}( x ) = \frac{ z_{y'}( x ) }{ Z( x ) }$ 
for 
$Z( x ) \defEq {\sum_{y'' \in [ L ] \cup \{ \perp \}} z_{y''}( x ) }$.
Now suppose that one has an estimate
$\hat{\zeta}$ of $\zeta$.
This yields an alternate plug-in estimator of the Bayes-optimal SCOD rule~\eqref{eqn:scod-bayes}:
\begin{equation}
    \label{eqn:reject-coupled}
    \hat{r}( x ) = 1 \iff \max_{y' \in [L]} \hat{\zeta}_{y'}( x ) < \hat{\zeta}_{\perp}( x ).
\end{equation}
One may readily estimate 
$\zeta_{y'}$ 
with a standard multi-class loss $\ell_{\rm mc}$,
with suitable modification:
\begin{equation}
    \label{eqn:css-surrogate-repeat}
    \E{(x,y) \sim \Pr_{\rm in}}{ \ell_{\rm mc}( y, \bar{f}( x ) ) } + (1 - \costin) \cdot \E{x \sim \Pr_{\rm in}}{ \ell_{\rm mc}( \perp, \bar{f}( x ) ) } + \costout \cdot \E{x \sim \Pr_{\rm out}}{ \ell_{\rm mc}( \perp, \bar{f}( x ) ) }.    
\end{equation}
Compared to the decoupled loss~\eqref{eqn:decoupled-surrogate}, the key difference is that the penalties on the rejection logit $\bar{f}_\perp( x )$ involve the classification logits as well.

\section{Technical details: Estimating the OOD mixing weight $\pi_{\rm mix}$}
\label{app:noise_correction}

To obtain the latter, 
we apply a simple transformation as follows.

\begin{lemma} 
Suppose 
$\Pr_{\rm mix} = \pi_{\rm mix} \cdot \Pr_{\rm in} + (1-\pi_{\rm mix}) \cdot \Pr_{\rm out}$ with $\pi_{\rm mix} < 1$.
Then, 
if $\Pr_{\rm in}( x ) > 0$,
$$
    \frac{ \Pr_{\rm out}(x) }{ \Pr_{\rm in}(x) } =  \frac{1}{1-\pi_{\rm mix}} \cdot \left( \frac{ \Pr_{\rm mix}(x) }{ \Pr_{\rm in}(x) } - \pi_{\rm mix} \right).
$$
\label{lem:wild-noise-correction}
\vspace{-0.5cm}
\end{lemma}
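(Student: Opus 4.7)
The statement is a direct algebraic identity that follows from the mixture definition, so the proof plan is essentially a single rearrangement rather than a multi-step argument. The approach is to start from the defining equation $\Pr_{\rm mix}(x) = \pi_{\rm mix} \cdot \Pr_{\rm in}(x) + (1-\pi_{\rm mix}) \cdot \Pr_{\rm out}(x)$, isolate $\Pr_{\rm out}(x)$, and then divide through by $\Pr_{\rm in}(x)$, which is legitimate since we are assuming $\Pr_{\rm in}(x) > 0$.

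Concretely, I would first subtract $\pi_{\rm mix} \cdot \Pr_{\rm in}(x)$ from both sides of the mixture definition to obtain $\Pr_{\rm mix}(x) - \pi_{\rm mix} \cdot \Pr_{\rm in}(x) = (1-\pi_{\rm mix}) \cdot \Pr_{\rm out}(x)$. Then, since $\pi_{\rm mix} < 1$ by hypothesis, I can divide by $(1-\pi_{\rm mix})$ to solve for $\Pr_{\rm out}(x)$ explicitly. Finally, dividing both sides by $\Pr_{\rm in}(x)$ and pulling the factor $1/(1-\pi_{\rm mix})$ outside yields exactly the claimed identity.

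There is no real obstacle here: no measure-theoretic subtlety arises because the identity is pointwise on the set where $\Pr_{\rm in}(x) > 0$, and the condition $\pi_{\rm mix} < 1$ is exactly what is needed to avoid division by zero. The only thing worth noting in passing is that this identity is what justifies the density-ratio estimator step in Algorithm~\ref{algo:plug-in}: a learned score $\hat{s}(x)$ that approximates $\log(\Pr_{\rm in}(x)/\Pr_{\rm mix}(x))$ can be transformed into an estimator of $\Pr_{\rm in}(x)/\Pr_{\rm out}(x)$ by plugging $\exp(-\hat{s}(x))$ into the right-hand side of the lemma (with $\hat{\pi}_{\rm mix}$ estimated on the strictly-inlier set $S^*_{\rm in}$, where $\Pr_{\rm out}(x) = 0$ forces $\Pr_{\rm mix}(x)/\Pr_{\rm in}(x) = \pi_{\rm mix}$). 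So while the lemma itself is a trivial rewrite, spelling it out makes the appendix's algorithmic recipe rigorous.
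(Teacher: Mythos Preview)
Your proposal is correct and takes essentially the same approach as the paper: both arguments are a one-line algebraic rearrangement of the mixture definition (the paper verifies the identity by expanding the right-hand side, while you derive it by isolating $\Pr_{\rm out}(x)$ first, which is an immaterial difference in presentation).
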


The above transformation requires knowing the mixing proportion $\pi_{\rm mix}$ of inlier samples in the unlabeled dataset.
However, as it measures the fraction of OOD samples during deployment,
$\pi_{\rm mix}$ is typically \emph{unknown}.
We may however estimate this with {\bf (A2)}.
Observe that for a strictly inlier example $x \in S^*_{\rm in}$, 
we have $\frac{ \Pr_{\rm mix}(x) }{ \Pr_{\rm in}(x)} = \pi_{\rm mix}$, i.e., $\exp( -\hat{s}(x) ) \approx \pi_{\rm mix}$. 
Therefore, we can estimate 
\begin{align*}
    \hat{s}_{\rm ood}(x) = \left(\frac{1}{1-\hat{\pi}_{\rm mix}} \cdot \left( \exp( -\hat{s}(x) ) - \hat{\pi}_{\rm mix} \right)\right)^{-1}
    \quad
    \text{where}
    \quad
    \hat{\pi}_{\rm mix} = \frac{1}{|S^*_{\rm in}|}\sum_{x \in S^*_{\rm in}} \exp(-\hat{s}(x)).
\end{align*}

We remark here that this problem is roughly akin to class prior estimation for PU learning~\citep{garg2021mixture}, 
and noise rate estimation for label noise~\citep{Patrini:2017}.
As in those literatures, 
estimating $\pi_{\rm mix}$ without any assumptions is challenging.
Our assumption on the existence of a Strict Inlier set $S^*_{\rm in}$ is analogous to assuming the existence of a golden label set in the label noise literature~\citep{Hendrycks:2018}.

\begin{proof}[Proof of Lemma \ref{lem:wild-noise-correction}]
Expanding the right-hand side, we have:
\begin{align*}
    \frac{1}{1-\pi_{\rm mix}} \cdot \left( \frac{ \Pr_{\rm mix}(x) }{ \Pr_{\rm in}(x) } - \pi_{\rm mix} \right)
        &= \frac{1}{1-\pi_{\rm mix}} \cdot \left( \frac{ \pi_{\rm mix} \cdot \Pr_{\rm in}(x) + (1-\pi_{\rm mix}) \cdot \Pr_{\rm out}(x) }{ \Pr_{\rm in}(x) } - \pi_{\rm mix} \right)\\
    &= \frac{ \Pr_{\rm out}(x) }{ \Pr_{\rm in}(x) },
\end{align*}
as desired.
\end{proof}

\section{Technical details: Plug-in estimators with an abstention budget}
\label{app:plug-in-budget}
\label{app:katz-samuels}

Observe that~\eqref{eqn:scod} is equivalent to solving the Lagrangian:
\begin{align}
 \label{eqn:budget-constrainted-ood}
 \min_{h, r} \max_{\lambda} \left[ F( h, r; \lambda ) \right]& \\
 \nonumber
 F( h, r; \lambda ) \defEq ( 1 - \costFN ) \cdot \Pr_{\rm in}( y \neq {h}( x ), r( x ) = 0 ) & + 
 \costin(\lambda) \cdot \Pr_{\rm out}( {r}( x ) = 0 ) + 
 \costout(\lambda) \cdot \Pr_{\rm in}( r( x ) = 1 ) + 
 \nu_\lambda
 \\
 \nonumber
 \left( \costin(\lambda), \costout(\lambda), \nu_\lambda \right) \defEq ( \costFN - \lambda \cdot (1 - \pi^*_{\rm in}), 
 &
 \lambda \cdot \pi^*_{\rm in}, \lambda \cdot (1 - \pi^*_{\rm in}) - \lambda \cdot b_{\rm rej} ).
\end{align}
Solving~\eqref{eqn:budget-constrainted-ood} requires optimising over both $(h, r)$ and $\lambda$.
Suppose momentarily that $\lambda$ is fixed.
Then, $F( h, r; \lambda )$ is exactly a scaled version of 
the soft-penalty objective~\eqref{eqn:scod-soft}.
Thus,
we can use Algorithm~\ref{algo:plug-in} to construct a plug-in classifier that minimizes the above joint risk. 
To find the optimal $\lambda$, 
we only need to implement the surrogate minimisation step in Algorithm~\ref{algo:plug-in} \emph{once} to estimate the relevant probabilities.
We can then construct multiple 
plug-in classifiers for different values of $\lambda$, 
and perform an inexpensive threshold search:
amongst the classifiers satisfying the budget constraint, 
we pick
the one that minimises~\eqref{eqn:budget-constrainted-ood}.

The above requires estimating $\pi^*_{\rm in}$, the fraction of inliers observed during deployment.
Following {\bf(A2)}, one plausible estimate is $\pi_{\rm mix}$, the fraction of inliers in the ``wild'' mixture set $S_{\rm mix}$.

\textbf{Remark.} The previous work of \citet{Katz:2022} for OOD detection also seeks to solve an optimization problem with explicit constraints on abstention rates. 
However, there are some subtle, but important, technical differences between their formulation and ours.

Like us,~\citet{Katz:2022} also seek to jointly learn a classifier and an OOD scorer, with  constraints  on the classification and abstention rates, given access to samples from $\Pr_{\rm in}$ and $\Pr_{\rm mix}$. 
For a joint classifier $h: \XCal \rightarrow [L]$ and rejector $r: \XCal \rightarrow \{0, 1\}$, their formulation can be written as:
\begin{align}
   \lefteqn{
   \min_{h}~
   \Pr_{\rm out}\left( r(x) = 0 \right) }
   \label{eq:ks-original}
   \\
    \text{s.t.}\hspace{20pt}
    & \Pin\left( r(x) = 1 \right) \leq \kappa
    \nonumber
    \\
    & \Pin\left( {h}(x) \ne y,\, r(x) = 0 \right)  \leq \tau,
    \nonumber
\end{align}
for given targets $\kappa, \tau \in (0,1)$. 

While $\Pr_{\rm out}$ is not directly available, 
\citeauthor{Katz:2022} provide a simple solution to solving  \eqref{eq:ks-original} using only access to $\Pr_{\rm mix}$ and $\Pr_{\rm in}$. They show that under some mild assumptions,  replacing  $\Pr_{\rm out}$ with $\Pr_{\rm mix}$ in the above problem does not alter the optimal solution. The intuition behind this is that when the first constraint on the inlier abstention rate is satisfied with equality, we have $\Pr_{\rm mix}\left( r(x) = 0 \right) = \pi_{\rm mix} \cdot (1 - \costin) + (1 - \pi_{\rm mix}) \cdot \Pr_{\rm out}\left( r(x) = 0 \right)$, and minimizing this objective is equivalent to minimizing the OOD objective in \eqref{eq:ks-original}. 

This simple trick of replacing $\Pr_{\rm out}$ with $\Pr_{\rm mix}$ will only work when we have an explicit constraint on the inlier abstention rate, and will not work for the formulation we are interested in \eqref{eqn:budget-constrainted-ood}. This is because in our formulation, we impose a budget on the overall abstention rate (as this is a more intuitive quantity that a practitioner may want to constraint), and do not explicitly control the abstention rate on $\Pr_{\rm in}$. 

In comparison to \citet{Katz:2022}, the plug-in based approach we prescribe is more general, and can be applied to optimize any objective that involves as a weighted combination of the mis-classification error and the abstention rates on the inlier and OOD samples. This includes both the  budget-constrained problem we consider in \eqref{eqn:budget-constrainted-ood}, and the constrained problem of \citeauthor{Katz:2022} in \eqref{eq:ks-original}.

\section{Technical details: Relation of proposed losses to existing losses}
\label{app:relation-existing-losses}
Equation~\ref{eqn:decoupled-surrogate} generalises 
several existing proposals in the SC and OOD detection literature.
In particular,
it
reduces to the
loss proposed in~\citet{verma2022calibrated},
when $\Pr_{\rm in} = \Pr_{\rm out}$,
i.e., when one only wishes to abstain on low confidence ID samples.
Interestingly, 
this also corresponds to the decoupled loss
for OOD detection
in~\citet{Bitterwolf:2022};
crucially, however,
they 
reject only based on whether $\bar{f}_{\perp}( x ) < 0$,
rather than comparing $\bar{f}_{\perp}( x )$ and $\max_{y' \in [L]} \bar{f}_{y'}( x )$.
The latter is essential to match the Bayes-optimal predictor in~\eqref{eqn:scod-bayes}.
Similarly,
the coupled loss in~\eqref{eqn:css-surrogate} reduces to the 
\emph{cost-sensitive softmax cross-entropy}
in~\citet{Mozannar:2020}
when $\costout = 0$,
and the OOD detection 
loss of~\citet{Thulasidasan:2021} when $\costin = 0, \costout = 1$.

\section{Additional experiments}
\label{app:expts}
We provide details about the hyper-parameters and dataset splits used in the experiments, as well as, additional experimental results and plots that were not included in the main text. The in-training experimental results are \textbf{averaged over 5 random trials}.

%
\subsection{Hyper-parameter choices}
\label{app:hparams}

We provide details of the learning rate (LR) schedule and other hyper-parameters used in our experiments.
\begin{center}
\begin{tabular}{>{\raggedright}p{4cm}lcccc}
\toprule 
Dataset & Model & LR & Schedule & Epochs & Batch size\tabularnewline
\toprule
{CIFAR-40/100} 
 & CIFAR ResNet 56 & 1.0 & anneal & 256 & 1024\tabularnewline
\bottomrule
\end{tabular}
\end{center}

We use SGD with momentum as the optimization
algorithm for all models. For annealing schedule, the specified learning
rate (LR) is the initial rate, which is then decayed by a factor of
ten after each epoch in a specified list. For CIFAR, these epochs
are 15, 96, 192 and 224. 

\subsection{Baseline details}
We provide further details about the baselines we compare with. The following baselines are trained on only the inlier data.
\begin{itemize}[itemsep=0pt,topsep=0pt,leftmargin=16pt]
    \item \textit{MSP or Chow's rule}: Train a scorer $f: \cX \rightarrow \R^L$ using CE loss, and threshold the MSP 
    to decide to abstain \citep{Chow:1970, Hendrycks:2017}. 
    \item \textit{MaxLogit}: Same as above, but instead threshold the maximum logit $\max_{y \in [L]} f_y(x)$  \citep{Hendrickx:2021}.
    \item \textit{Energy score}: Same as above, but threshold the energy function $-\log\sum_y \exp(f_y(x))$
    \citep{Liu:2020}.
    \item \textit{ODIN}: Train a scorer $f: \cX \rightarrow \R^L$ using CE loss, and uses a combination of input noise and temperature-scaled MSP to decide when to abstain \cite{Hendrickx:2021}.
    \item \textit{SIRC}: Train a scorer $f: \cX \rightarrow \R^L$ using CE loss, and compute a post-hoc deferral rule that combines the MSP score with either the $L_1$-norm or the residual score of the embedding layer from the scorer $f$ \citep{Xia:2022}.
     \item \textit{CSS}: Minimize the cost-sensitive softmax L2R loss of \citet{Mozannar:2020} using only the inlier dataset to learn a scorer $f \colon \XCal \to \Real^{L + 1}$, augmented with a rejection score $f_\perp( x )$, and abstain iff $f_{\perp}( x ) > \max_{y' \in [ L ]} f_{y'}( x ) + t$, for threshold $t$.
\end{itemize}
The following baselines additional use the unlabeled data containing a mix of inlier and OOD samples.
\begin{itemize}[itemsep=0pt,topsep=0pt,leftmargin=16pt]
    \item \textit{Coupled CE (CCE)}: Train a scorer $f \colon \XCal \to \Real^{L + 1}$, augmented with a rejection score $f_\perp( x )$ by optimizing the CCE loss of \citet{Thulasidasan:2021}, and abstain iff $f_{\perp}( x ) > \max_{y' \in [ L ]} f_{y'}( x ) + t$, for threshold $t$.
    \item \textit{De-coupled CE (DCE)}: Same as above but uses the DCE loss of \citet{Bitterwolf:2022} for training.
    \item \textit{Outlier Exposure (OE):} Train a scorer using the OE loss of \citet{Hendrycks:2019} and threshold the MSP.
\end{itemize}

\subsection{Data split details}
For the CIFAR-100 experiments 
where we use a wild sample containing a mix of ID and OOD examples, we split the original CIFAR-100 training set into two halves, use one half as the inlier sample and the other half to construct the wild sample. For evaluation, we combine the orignal CIFAR-100 test set with the respective OOD test set. In each case, the larger of the ID and OOD dataset is down-sampled to match the desired ID-OOD ratio. The experimental results are \textbf{averaged over 5 random trials}.

For the pre-trained ImageNet experiments, we sample equal number of examples from the ImageNet validation sample and the OOD dataset, and annotate them with the pre-trained model. The number of samples is set to  the smaller of the size of the OOD dataset or 5000.

%
%

\subsection{Comparison to CSS and ODIN baselines}
We present some representative results in Table \ref{tab:css-odin-comparisons} comparing our proposed methods against the cost-sensitive softmax (CSS) of \citet{Mozannar:2020}, a representative learning-to-reject baseline, and the ODIN method of \citet{Hendrickx:2021}, an OOD detection baseline. As expected, the CSS baseline, which does not have OOD detection capabilities is seen to under-perform. The ODIN, baseline, on the other hand, is occasionally seen to be competitive.

\begin{table}[t]
    \centering
    \caption{AUC-RC ($\downarrow$) for CIFAR-100 as ID, and a ``wild'' comprising of 90\% ID and \emph{only} 10\% OOD. 
    The OOD part of the wild set is drawn from the \emph{same} OOD dataset from which the test set is drawn.
    We compare the proposed methods with the cost-sensitive softmax (CSS) learning-to-reject loss of \citet{Mozannar:2020} and the ODIN method of \citet{Hendrickx:2021}. 
    We set $c_{\rm fn} = 0.75$. 
    }
    \begin{tabular}{@{}lccc@{}}
        \toprule
        & \multicolumn{3}{c}{
        ID + OOD training with
        $\Pr^{\rm tr}_{\rm out} = \Pr^{\rm te}_{\rm out}$} 
        \\[2pt]
        Method / $\Pr^{\rm te}_{\rm out}$ & SVHN & Places & OpenImages 
        \\
        \toprule
        CSS & 0.286 & 0.263 & 0.254 \\[3pt]
        ODIN & 0.218 & 0.217 & \textbf{0.217} \\
        \midrule
        Plug-in BB [$L_1$] & \textbf{0.196} & 0.210 &  {0.222}  \\[2pt]
        Plug-in BB [Res] & 0.198 & 0.236 & 0.251  \\ [2pt]
        Plug-in LB* & 0.221 & \textbf{0.199} &  0.225  \\
        \bottomrule
    \end{tabular}
    \label{tab:css-odin-comparisons}
\end{table}

\subsection{Experimental plots}
\label{app:expt-additional-plots}

We present  experimental plots in Figure \ref{fig:expt-results-extra} of the joint risk in Section \ref{sec:expts} as a function of the fraction of samples abstained. We also plot the inlier accuracy, the OOD precision, and the OOD recall as a function of samples abstained. These metrics are described below:
\begin{align*}
    \text{inlier-accuracy}(h, r) &= \frac{
        \sum_{(x,y) \in S_{\rm in}}\1(y = \bar{h}(x), r(x) = 0)
        }
        {
        \sum_{x \in S_{\rm all}}\1( r(x) = 0 )
        }\\
    \text{ood-precision}(h, r) &= \frac{
        \sum_{(x,y) \in S_{\rm out}}\1( r(x) = 1)
        }
        {
        \sum_{x \in S_{\rm all}}\1( r(x) = 1)
        }\\
    \text{ood-recall}(\bar{h}) &= \frac{
        \sum_{x \in S_{\rm out}}\1( r(x) = 1)
        }
        {
        |S_{\rm out}|
        },
\end{align*}
where $S_{\rm all} = \{x: (x, y) \in S_{\rm in}\} \cup S_{\rm out}$ is the combined set of ID and OOD instances. 

One can see a few general trends.
The joint risk decreases with more abstentions; the inlier accuracy increases with abstentions.  
The OOD precision is the highest initially when the abstentions are on the OOD samples, but decreases when the OOD samples are exhausted, and the abstentions are on the inlier samples; the opposite is true for OOD recall.



%
\begin{figure*}[!ht]
    \centering

    \subfigure[$\Pr_{\rm in}$: CIFAR-100;~~~ $\Pr_{\rm out}$: SVHN]{
    \includegraphics[scale=0.27]{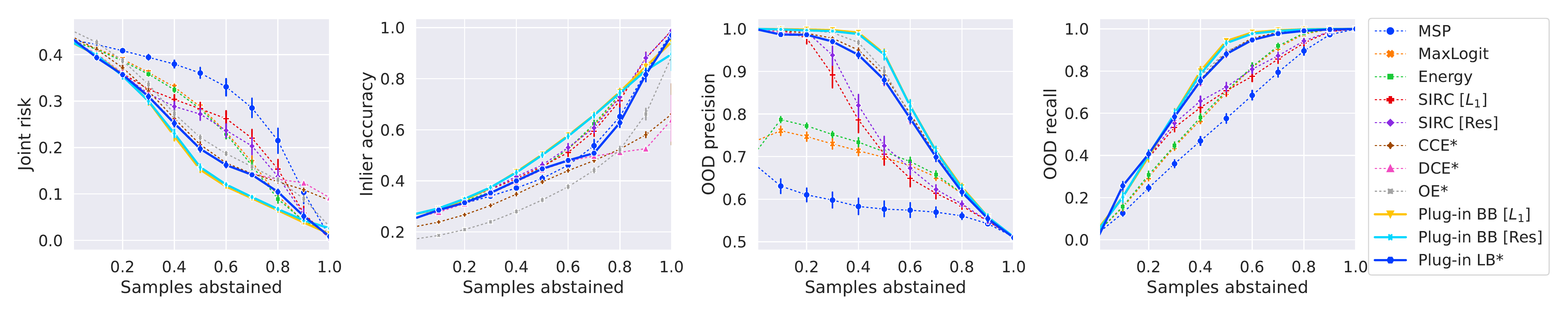}}
    \\[-2pt]
    \subfigure[$\Pr_{\rm in}$: CIFAR-100;~~~ $\Pr_{\rm out}$: Places365]{
    \includegraphics[scale=0.27]{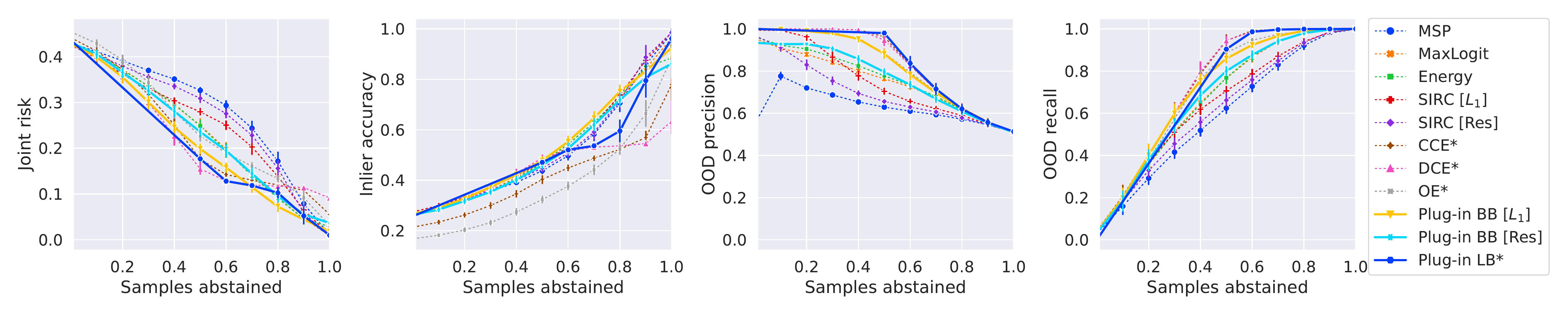}
    }
    \\[-2pt]
    \subfigure[$\Pr_{\rm in}$: CIFAR100;~~~ $\Pr_{\rm out}$: LSUN]{
    \includegraphics[scale=0.27]{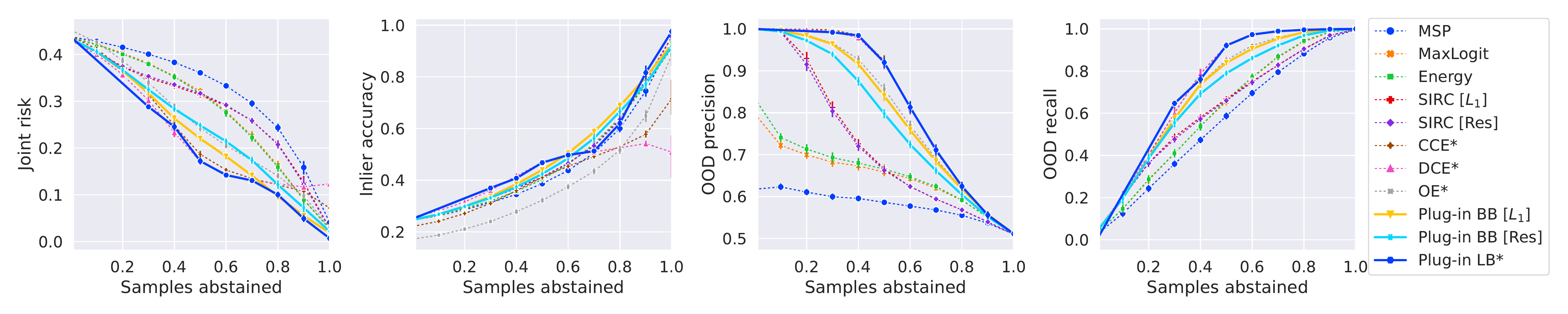}
    }
    \\[-2pt]
    \subfigure[$\Pr_{\rm in}$: CIFAR-100;~~~ $\Pr_{\rm out}$: Texture]{
    \includegraphics[scale=0.27]{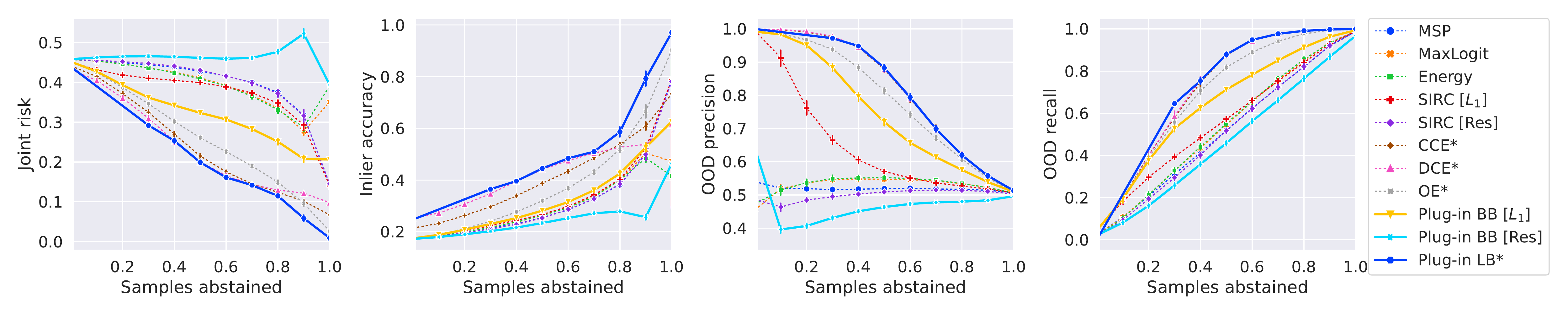}
    }
    \\[-2pt]
    \subfigure[$\Pr_{\rm in}$: CIFAR-100;~~~ $\Pr_{\rm out}$: Open Images]{
    \includegraphics[scale=0.27]{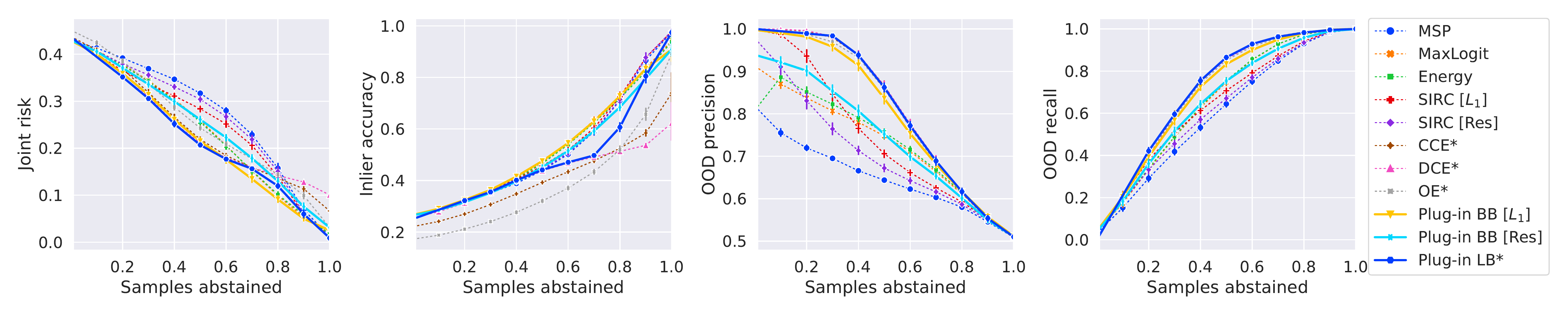}
    }
    \\[-2pt]
    \subfigure[$\Pr_{\rm in}$: CIFAR-100;~~~ $\Pr_{\rm out}$: CelebA]{
    \includegraphics[scale=0.27]{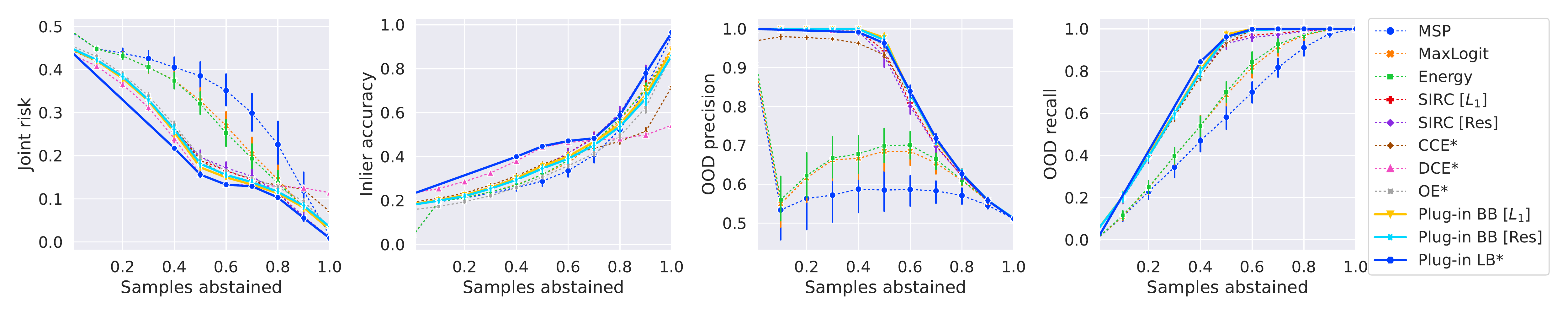}
    }
    \caption{Plots of classification and OOD detection metrics as a function of the fraction of abstained samples (averaged over 5 trials). We use CIFAR-100 as the ID sample, and a mix of CIFAR-100 and each of SVHN, Places265, LSUN, LSUN-R, Texture, Open Images and CelebA as the wild sample, and evaluate on the respective OOD dataset. The wild sample contains 90\% ID and only 10\% OOD samples. The test contains equal proportions of ID and OOD samples. For the joint risk, \emph{lower values are better}. For all other metrics, \textit{higher values are better}. 
    We set $c_{\rm fn} = 0.75$.  }
    \label{fig:expt-results-extra}
\end{figure*}


\subsection{Varying OOD mixing proportion in test set}
We repeat the experiments in Table \ref{tab:auc-rc-cf100-random-100k} on CIFAR-100 and 100K Random Images with varying proportions of OOD samples in the test set, and present the results in Table \ref{tab:auc-rc-cf100-random-100k-varying-ood-mixing-proportion}. One among the proposed plug-in methods continues to perform the best.

\begin{table}[t]
    \centering
    \caption{Area Under the Risk-Coverage Curve (AUC-RC) for  methods trained with CIFAR-100 as the ID sample and a mix of CIFAR-100 and 300K Random Images as the wild sample, and with the proportion of OOD samples in test set varied. 
    The wild set contains 10\% ID and 90\% OOD. 
    Base model is ResNet-56. 
    We set $c_{\rm fn} = 0.75$. 
    A * against a method indicates that it uses both ID and OOD samples for training. 
    \emph{Lower} values are \emph{better}.}
    \scriptsize
    \begin{tabular}{@{}lccccccccccc@{}}
        \toprule
        & \multicolumn{5}{c}{
        Test OOD proportion = 0.25}
        &~~&
        \multicolumn{5}{c}{
        Test OOD proportion = 0.75}
        \\
        Method / $\Pr^{\rm te}_{\rm out}$  & SVHN & Places & LSUN & LSUN-R
        & Texture & ~~ & SVHN & Places & LSUN & LSUN-R & Texture
        \\
        \toprule
        MSP & 0.171 & 0.186 & 0.176 & 0.222 & 0.192 &  & 
0.501 & 0.518 & 0.506 & 0.564 & 0.532 \\[3pt]
MaxLogit & 0.156 & 0.175 & 0.163 & 0.204 & 0.183 &  & 
0.464 & 0.505 & 0.478 & 0.545 & 0.512 \\[3pt]
Energy & 0.158 & 0.177 & 0.162 & 0.206 & 0.181 &  & 
0.467 & 0.502 & 0.477 & 0.538 & 0.509 \\\midrule
SIRC [$L_1$] & 0.158 & 0.181 & 0.159 & 0.218 & 0.180 &  & 
0.480 & 0.513 & 0.485 & 0.560 & 0.509 \\[3pt]
SIRC [Res] & 0.141 & 0.181 & 0.152 & 0.219 & 0.194 &  & 
0.456 & 0.516 & 0.476 & 0.561 & 0.535 \\
\midrule
CCE* & 0.175 & 0.191 & 0.153 & 0.131 & 0.154 &  & 
0.460 & 0.487 & 0.425 & 0.374 & 0.429 \\[3pt]
DCE* & 0.182 & 0.200 & 0.155 & 0.136 & 0.162 &  & 
0.467 & 0.498 & 0.414 & 0.372 & 0.428 \\[3pt]
OE* & 0.179 & 0.174 & 0.147 & 0.117 & 0.148 &  & 
0.492 & 0.487 & 0.440 & 0.371 & 0.440 \\\midrule
Plug-in BB [$L_1$] & 0.127 & \textbf{0.164} & \textbf{0.128} & 0.180 & \textbf{0.134} &  & 
0.395 & \textbf{0.457} & \textbf{0.397} & 0.448 & \textbf{0.414} \\[3pt]
Plug-in BB [Res] & \textbf{0.111} & 0.175 & \textbf{0.129} & 0.182 & 0.248 &  & 
\textbf{0.377} & 0.484 & 0.407 & 0.449 & 0.645 \\[3pt]
Plug-in LB* & 0.160 & 0.169 & 0.133 & \textbf{0.099} & \textbf{0.132} &  & 
0.468 & 0.489 & 0.418 & \textbf{0.351} & 0.430\\
        \bottomrule
    \end{tabular}
    \vspace{-3pt}
    \label{tab:auc-rc-cf100-random-100k-varying-ood-mixing-proportion}
\end{table}

\subsection{Varying OOD cost parameter}
We repeat the experiments in Table \ref{tab:auc-rc-cf100-random-100k} on CIFAR-100 and 100K Random Images with varying values of cost parameter $c_{\rm fn}$, and present the results in Table \ref{tab:auc-rc-cf100-random-100k-varying-ood-cost}. One among the proposed plug-in methods continues to perform the best, although the gap between the best and second-best methods increases with $c_{\rm fn}$.

\begin{table}[t]
    \centering
    \caption{Area Under the Risk-Coverage Curve (AUC-RC) for  methods trained with CIFAR-100 as the ID sample and a mix of CIFAR-100 and 300K Random Images as the wild sample, and for different values of cost parameter $c_{\rm fn}$. 
    The wild set contains 10\% ID and 90\% OOD. 
    Base model is ResNet-56. 
    }
    \scriptsize
    \begin{tabular}{@{}lccccccccccc@{}}
        \toprule
        & \multicolumn{5}{c}{
        $c_{\rm fn} = 0.5$}
        &~~&
        \multicolumn{5}{c}{
        $c_{\rm fn} = 0.9$}
        \\
        Method / $\Pr^{\rm te}_{\rm out}$  & SVHN & Places & LSUN & LSUN-R
        & Texture & ~~ & SVHN & Places & LSUN & LSUN-R & Texture
        \\
        \toprule
        MSP & 0.261 & 0.271 & 0.265 & 0.299 & 0.278 &  & 
0.350 & 0.374 & 0.360 & 0.448 & 0.394 \\[3pt]
MaxLogit & 0.253 & 0.271 & 0.259 & 0.293 & 0.277 &  & 
0.304 & 0.350 & 0.318 & 0.410 & 0.360 \\[3pt]
Energy & 0.254 & 0.273 & 0.262 & 0.293 & 0.277 &  & 
0.303 & 0.349 & 0.317 & 0.407 & 0.359 \\\midrule
SIRC [$L_1$] & 0.252 & 0.270 & 0.257 & 0.298 & 0.267 &  & 
0.319 & 0.368 & 0.327 & 0.440 & 0.358 \\[3pt]
SIRC [Res] & 0.245 & 0.270 & 0.251 & 0.297 & 0.282 &  & 
0.286 & 0.371 & 0.311 & 0.440 & 0.397 \\\midrule
CCE* & 0.296 & 0.307 & 0.283 & 0.269 & 0.286 &  & 
0.282 & 0.318 & 0.233 & 0.179 & 0.240 \\[3pt]
DCE* & 0.303 & 0.317 & 0.285 & 0.270 & 0.292 &  & 
0.289 & 0.331 & 0.225 & 0.177 & 0.238 \\[3pt]
OE* & 0.287 & 0.283 & 0.270 & 0.255 & 0.272 &  & 
0.327 & 0.315 & 0.252 & 0.173 & 0.251 \\\midrule
Plug-in BB [$L_1$] & 0.237 & \textbf{0.258} & \textbf{0.239} & 0.267 & \textbf{0.244} &  & 
0.207 & \textbf{0.280} & \textbf{0.207} & 0.266 & \textbf{0.226} \\[3pt]
Plug-in BB [Res] & \textbf{0.228} & {0.266} & \textbf{0.241} & 0.269 & 0.321 &  & 
\textbf{0.185} & 0.322 & 0.218 & 0.266 & 0.599 \\[3pt]
Plug-in LB* & 0.256 & {0.265} & 0.243 & \textbf{0.222} & \textbf{0.245} &  & 
0.299 & 0.326 & 0.234 & \textbf{0.165} & 0.246 \\\bottomrule
    \end{tabular}
    \vspace{-3pt}
    \label{tab:auc-rc-cf100-random-100k-varying-ood-cost}
\end{table}

\subsection{Confidence intervals}
In Table \ref{tab:auc-rc-cf100-random-100k-std}, we report 95\% confidence intervals for the experiments on CIFAR-100 and 100K Random Images
 from Table \ref{tab:auc-rc-cf100-random-100k}. In each case, the differences between the best performing plug-in method and the baselines are \emph{statistically significant}.
\begin{table}[t]
    \centering
    \caption{Area Under the Risk-Coverage Curve (AUC-RC) for  methods trained with CIFAR-100 as the ID sample and a mix of CIFAR-100 and 300K Random Images as the wild sample, with 95\% \textbf{confidence intervals} included. 
    The wild set contains 10\% ID and 90\% OOD. 
    The test sets contain 50\% ID and 50\% OOD samples. 
    Base model is ResNet-56. 
    We set $c_{\rm fn} = 0.75$. 
    }
    \scriptsize
    \begin{tabular}{@{}lccccc@{}}
        \toprule
        Method / $\Pr^{\rm te}_{\rm out}$  & SVHN & Places & LSUN & LSUN-R
        & Texture
        \\
        \toprule
        MSP & 0.317 $\pm$ 0.023 & 0.336 $\pm$ 0.010 & 0.326 $\pm$ 0.005 & 0.393 $\pm$ 0.018 & 0.350 $\pm$ 0.004 \\[3pt]
MaxLogit & 0.286 $\pm$ 0.012 & 0.321 $\pm$ 0.011 & 0.299 $\pm$ 0.009 & 0.365 $\pm$ 0.016 & 0.329 $\pm$ 0.013 \\[3pt]
Energy & 0.286 $\pm$ 0.012 & 0.320 $\pm$ 0.013 & 0.296 $\pm$ 0.008 & 0.364 $\pm$ 0.015 & 0.326 $\pm$ 0.014 \\\midrule
SIRC [$L_1$] & 0.294 $\pm$ 0.021 & 0.331 $\pm$ 0.010 & 0.300 $\pm$ 0.007 & 0.387 $\pm$ 0.017 & 0.326 $\pm$ 0.006 \\[3pt]
SIRC [Res] & 0.270 $\pm$ 0.019 & 0.332 $\pm$ 0.009 & 0.289 $\pm$ 0.007 & 0.384 $\pm$ 0.019 & 0.353 $\pm$ 0.003 \\[3pt]
CCE* & 0.288 $\pm$ 0.017 & 0.315 $\pm$ 0.018 & 0.252 $\pm$ 0.004 & 0.213 $\pm$ 0.001 & 0.255 $\pm$ 0.004 \\[3pt]
DCE* & 0.295 $\pm$ 0.015 & 0.326 $\pm$ 0.028 & 0.246 $\pm$ 0.004 & 0.212 $\pm$ 0.001 & 0.260 $\pm$ 0.005 \\[3pt]
OE* & 0.313 $\pm$ 0.015 & 0.304 $\pm$ 0.006 & 0.261 $\pm$ 0.001 & 0.204 $\pm$ 0.002 & 0.260 $\pm$ 0.002 \\\midrule
Plug-in BB [$L_1$] & 0.223 $\pm$ 0.004 & \textbf{0.286 $\pm$ 0.013} & \textbf{0.227 $\pm$ 0.007} & 0.294 $\pm$ 0.021 & \textbf{0.240 $\pm$ 0.006} \\[3pt]
Plug-in BB [Res] & \textbf{0.205 $\pm$ 0.002} & 0.309 $\pm$ 0.009 & 0.235 $\pm$ 0.005 & 0.296 $\pm$ 0.012 & 0.457 $\pm$ 0.008 \\[3pt]
Plug-in LB* & 0.290 $\pm$ 0.017 & 0.306 $\pm$ 0.016 & 0.243 $\pm$ 0.003 & \textbf{0.186 $\pm$ 0.001} & 0.248 $\pm$ 0.006 \\
        \bottomrule
    \end{tabular}
    \vspace{-3pt}
    \label{tab:auc-rc-cf100-random-100k-std}
\end{table}

\subsection{AUC and FPR95 metrics for OOD scorers}
\label{app:ranking-metrics}
Table \ref{tab:ood-auc-fpr95} reports the AUC-ROC and FPR@95TPR metrics for the OOD scorers used by different methods, treating OOD samples as positives and ID samples as negatives. Note that the CCE, DCE and OE methods which are trained with both ID and OOD samples are seen to perform the best on these metrics. However, this superior performance in OOD detection doesn't often translate to good performance on the SCOD problem (as measured by AUC-RC). This is because these methods abstain solely based on the their estimates of the ID-OOD density ratio, and do not trade-off between accuracy and OOD detection performance.

\begin{table}[t]
    \centering
    \caption{AUC-ROC (($\uparrow$)) and FPR@95TPR ($\downarrow$) metrics for OOD scorers used by different  methods trained. We use CIFAR-100 as the ID sample and a mix of 50\% CIFAR-100 and 50\% 300K Random Images as the wild sample. 
    Base model is ResNet-56. 
    We set $c_{\rm fn} = 0.75$ in the plug-in methods. The CCE, DCE and OE methods which are trained with both ID and OOD samples are seen to perform the best on these metrics. However, this superior performance in OOD detection doesn't often translate to good performance on the SCOD problem (as measured by AUC-RC in Table \ref{tab:auc-rc-cf100-random-100k}).
    }
    \scriptsize
    \begin{tabular}{@{}lccccccccccc@{}}
        \toprule
        & \multicolumn{5}{c}{
        OOD AUC-ROC}
        &~~&
        \multicolumn{5}{c}{
        OOD FPR95}
        \\
        Method / $\Pr^{\rm te}_{\rm out}$  & SVHN & Places & LSUN & LSUN-R
        & Texture & ~~ & SVHN & Places & LSUN & LSUN-R & Texture
        \\
        \toprule
        MSP & 0.629 & 0.602 & 0.615 & 0.494 & 0.579 &  & 
0.813 & 0.868 & 0.829 & 0.933 & 0.903 \\[3pt]
MaxLogit & 0.682 & 0.649 & 0.692 & 0.564 & 0.634 &  & 
0.688 & 0.846 & 0.754 & 0.916 & 0.864 \\[3pt]
Energy & 0.685 & 0.654 & 0.698 & 0.568 & 0.645 &  & 
0.680 & 0.843 & 0.742 & 0.915 & 0.850 \\\midrule
SIRC [$L_1$] & 0.699 & 0.621 & 0.700 & 0.516 & 0.663 &  & 
0.788 & 0.871 & 0.819 & 0.930 & 0.882 \\[3pt]
SIRC [Res] & 0.777 & 0.613 & 0.735 & 0.513 & 0.566 &  & 
0.755 & 0.870 & 0.800 & 0.929 & 0.900 \\\midrule
CCE* & 0.772 & \textbf{0.725} & 0.878 & 0.995 & 0.883 &  & 
0.647 & 0.775 & 0.520 & 0.022 & 0.570 \\[3pt]
DCE* & 0.770 & 0.709 & \textbf{0.905} & \textbf{0.998} & \textbf{0.888} &  & 
0.693 & 0.807 & \textbf{0.466} & \textbf{0.007} & \textbf{0.562} \\[3pt]
OE* & 0.699 & \textbf{0.725} & 0.861 & \textbf{0.998} & 0.873 &  & 
0.797 & 0.792 & 0.689 & \textbf{0.004} & 0.706 \\\midrule
Plug-in BB [$L_1$] & 0.897 & 0.718 & 0.896 & 0.684 & 0.876 &  & 
0.473 & \textbf{0.716} & 0.496 & 0.717 & 0.580 \\[3pt]
Plug-in BB [Res] & \textbf{0.963} & 0.667 & 0.885 & 0.680 & 0.432 &  & 
\textbf{0.251} & 0.777 & 0.559 & 0.726 & 0.996 \\[3pt]
Plug-in LB* & 0.710 & 0.683 & 0.860 & \textbf{0.997} & 0.853 &  & 
0.749 & 0.801 & 0.653 & 0.009 & 0.697 \\\bottomrule
    \end{tabular}
    \vspace{-3pt}
    \label{tab:ood-auc-fpr95}
\end{table}

\subsection{Results on CIFAR-40 ID sample}
Following  \citet{Kim:2021}, we present in Table \ref{tab:auc-rc-cf40-id-only} results of experiments where we use CIFAR-40 (a subset of CIFAR-100 with 40 classes) as the ID-only training dataset, and we evaluate on CIFAR-60 (the remainder with 60 classes), SVHN, Places, LSUN and LSUN-R as OOD datasets.

%
\begin{table}[!t]
    \centering
    \caption{Area Under the Risk-Coverage Curve (AUC-RC) for different methods with CIFAR-40 as the inlier dataset and the training set comprising of only inlier samples, when evaluated on the following OOD datasets: CIFAR60, SVHN, Places, LSUN-C and LSUN-R. The test sets contain 50\% ID samples and 50\% OOD samples. We set $c_{\rm fn} = 0.75$. The last three rows contain results for the proposed methods.}
    \begin{tabular}{@{}lccccc@{}}
        \toprule
        & \multicolumn{5}{c}{Test OOD dataset}\\
        Method & CIFAR60 & SVHN & Places & LSUN-C & LSUN-R \\
        \toprule
        MSP & 0.262 & 0.238 & 0.252 & 0.282 & 0.243 \\[3pt]
        MaxLogit & 0.272 & 0.223 & \textbf{0.242} & 0.252 & 0.231 \\[3pt]
        Energy & 0.266 & 0.221 & 0.244 & 0.248 & \textbf{0.230} \\
        \midrule
        SIRC [$\|z\|_1$] & 0.263 & 0.226 & 0.249 & 0.266 & 0.241 \\[2pt]
        SIRC [Res]  & \textbf{0.258} & 0.209 & 0.250 & 0.244 & 0.241 \\[3pt]
        \midrule
        SIRC [$\|z\|_1$, Bayes-opt] &  0.290 & {0.195} & \textbf{0.243} & \textbf{0.191} & \textbf{0.228} \\[1pt]
        SIRC [Res, Bayes-opt] & 0.309 & \textbf{0.175} & 0.279 & 0.204 & 0.247 \\
        \bottomrule
    \end{tabular}
    \label{tab:auc-rc-cf40-id-only}
\end{table}

\subsection{Additional results on pre-trained ImageNet models}
Following \citet{Xia:2022}, we present additional results with pre-trained models with ImageNet-200 (a subset of ImageNet with 200 classes) as the inlier dataset in Table \ref{tab:auc-rc-imagenet-id-only-res50}. The base model is a ResNet-50.

\begin{table}[t]
    \centering
    \scriptsize
    \caption{AUC-RC ($\downarrow$)  for methods trained with ImageNet-200 as the inlier dataset and \emph{without} OOD samples. The base model is a pre-trained ResNet-50 model. 
    \emph{Lower} values are \emph{better}.
    }
    \vspace{-2pt}
    \begin{tabular}{@{}lccccccccc@{}}
        \toprule
        & \multicolumn{8}{c}{ID-only training}\\
        Method / $\Pr^{\rm te}_{\rm out}$ & Places & LSUN & CelebA & Colorectal &
        iNaturalist-O & Texture &
        ImageNet-O & Food32\\
        \toprule
        MSP & 0.183 & 0.186 & 0.156 & 0.163 & 0.161 & 0.172 & 0.217 & 0.181  \\[3pt]
MaxLogit & \textbf{0.173} & 0.184 & 0.146 & 0.149 & 0.166 & \textbf{0.162} & \textbf{0.209} & 0.218  \\[3pt]
Energy & 0.176 & 0.185 & 0.145 & 0.146 & 0.172 & 0.166 & 0.211 & 0.225  \\
\midrule
SIRC [$L_1$] & 0.185 & 0.195 & 0.155 & 0.165 & 0.166 & 0.172 & 0.214 & 0.184  \\[3pt]
SIRC [Res] & {0.180} & 0.179 & 0.137 & 0.140 & 0.151 & 0.167 & 0.219 & 0.174 \\
\midrule
Plug-in BB [$L_1$] & 0.262 & 0.261 & 0.199 & 0.225 & 0.228 & 0.270 & 0.298 & 0.240 \\[3pt]
Plug-in BB [Res] & 0.184 & \textbf{0.172} & \textbf{0.135} & \textbf{0.138} & \textbf{0.145} & 0.194 & 0.285 & \textbf{0.164}  \\
        \bottomrule
    \end{tabular}
    \vspace{3pt}
    \begin{tabular}{@{}lcccc@{}}
        \toprule
        & \multicolumn{4}{c}{ID-only training}\\
        Method / $\Pr^{\rm te}_{\rm out}$ & Near-ImageNet-200
        & Caltech65 & Places32 & Noise\\
        \toprule
        MSP &  0.209 & 0.184 & 0.176 & 0.188 \\[3pt]
MaxLogit &  0.220 & \textbf{0.171} & \textbf{0.170} & 0.192 \\[3pt]
Energy &  0.217 & 0.175 & \textbf{0.169} & 0.190 \\
\midrule
SIRC [$L_1$] & \textbf{0.205} & 0.182 & 0.174 & 0.191 \\[3pt]
SIRC [Res] &  \textbf{0.204} & 0.177 & 0.173 & \textbf{0.136} \\
\midrule
Plug-in BB [$L_1$] &  0.264 & 0.242 & 0.256 & 0.344 \\[3pt]
Plug-in BB [Res] & 0.247 & 0.202 & \textbf{0.171} & \textbf{0.136} \\
        \bottomrule
    \end{tabular}
    \label{tab:auc-rc-imagenet-id-only-res50}
\end{table}

\section{Illustrating the failure of MSP for OOD detection}
\label{app:failure-msp}

\subsection{Illustration of MSP failure for open-set classification}

Figure~\ref{fig:chow-fail-example} shows a graphical illustration of the example discussed in Example~\ref{ex:msp-failure},
wherein the MSP baseline can fail for open-set classification.

\begin{figure*}[!t]
    \centering
    
    \resizebox{\linewidth}{!}{
        \subfigure[]{
        \includegraphics[scale=0.25]{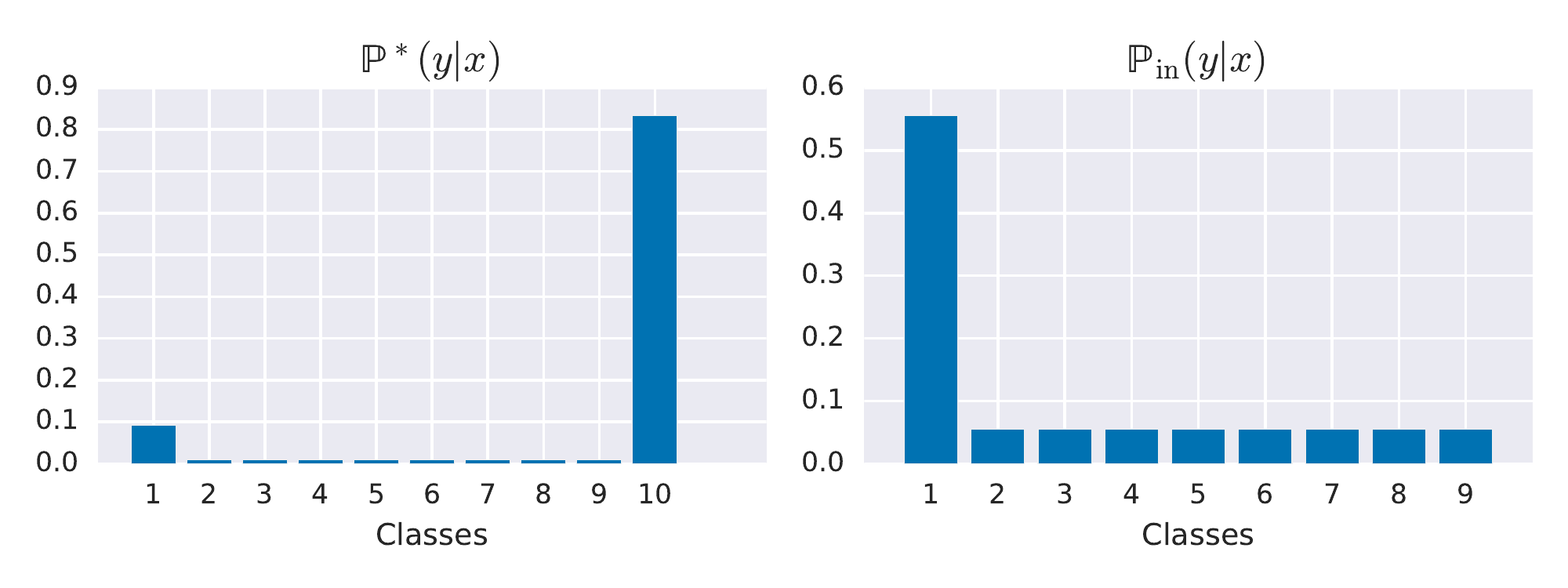}
        }
        \subfigure[]{
        \includegraphics[scale=0.25]{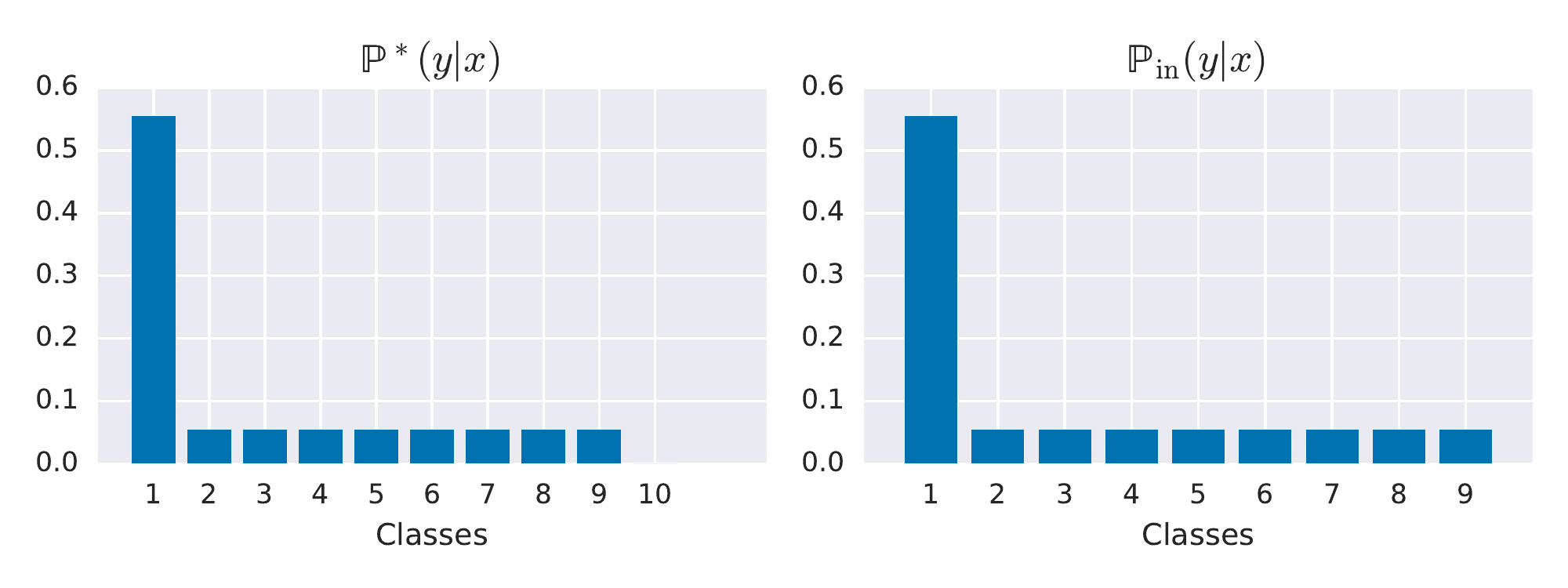}
        }
    }
    \vspace{-10pt}
    \caption{Examples of two open-set classification settings (a) and (b) with $L=10$ classes, where the inlier class distributions $\PTr( y \mid x )= \frac{\mathbb{P}_{\rm te}( y \mid x )}{\mathbb{P}_{\rm te}( y \neq 10 \mid x )}$ over the first 9 classes are identical, but the unknown class density $\Pr^*(10|x)$ is significantly different. Consequently, the MSP baseline, which relies only on the inlier class probabilities, will output the same rejection decision for both settings, whereas the Bayes-optimal classifier, which rejects by thresholding $\Pr^*(10|x)$, may output different decisions for the two settings.  }
    \label{fig:chow-fail-example}
\end{figure*}


%
\begin{figure*}[!t]
    \centering
    
    \resizebox{\linewidth}{!}{
        \subfigure[Uniform outlier distribution $\Pr_{\rm out}$.]{
            \includegraphics[scale=0.1825]{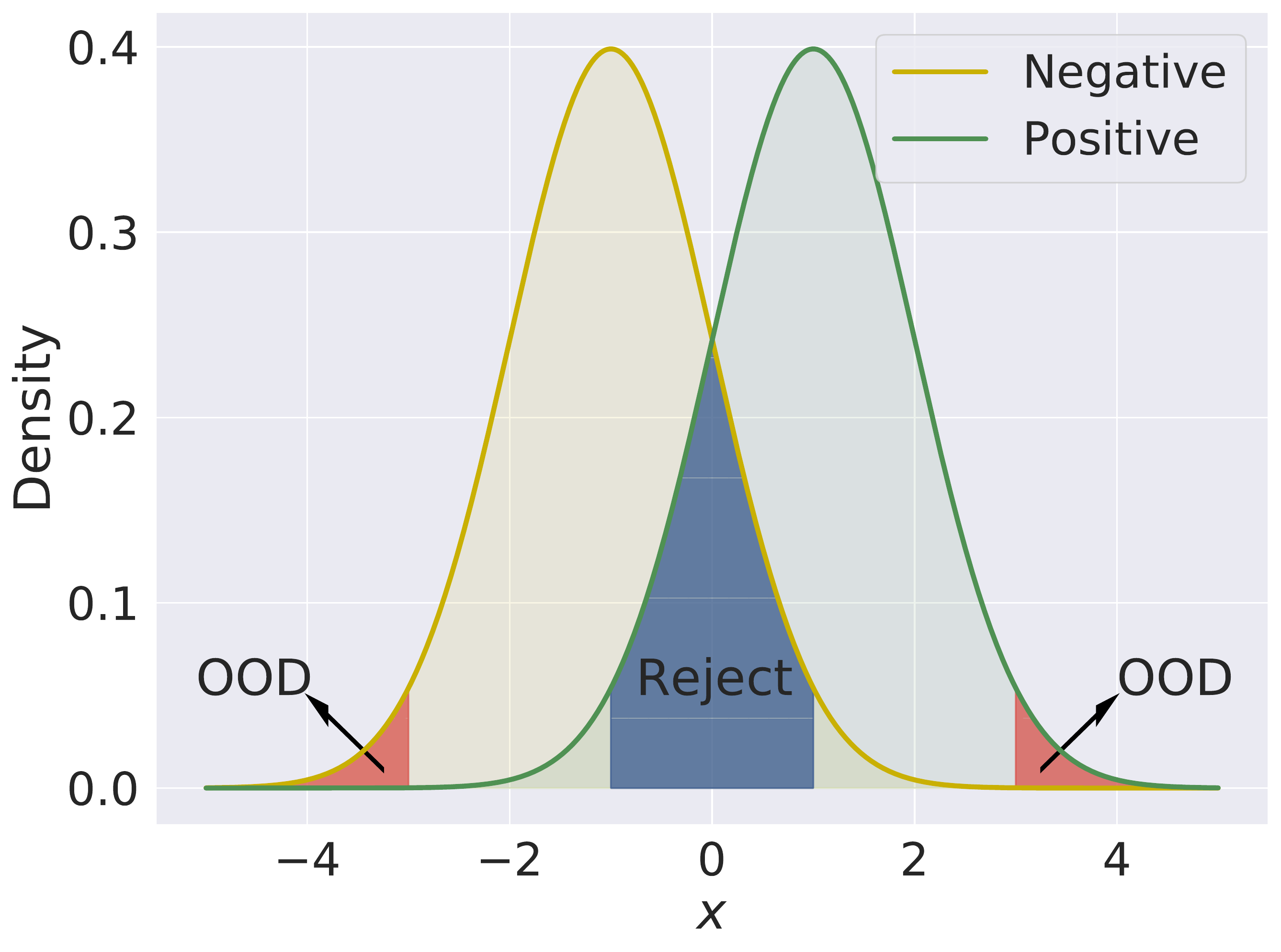}%
        }%
        \qquad
        \subfigure[Open-set classification.]{
            \label{fig:chow-fail-example-open-set}
            \includegraphics[scale=0.27]{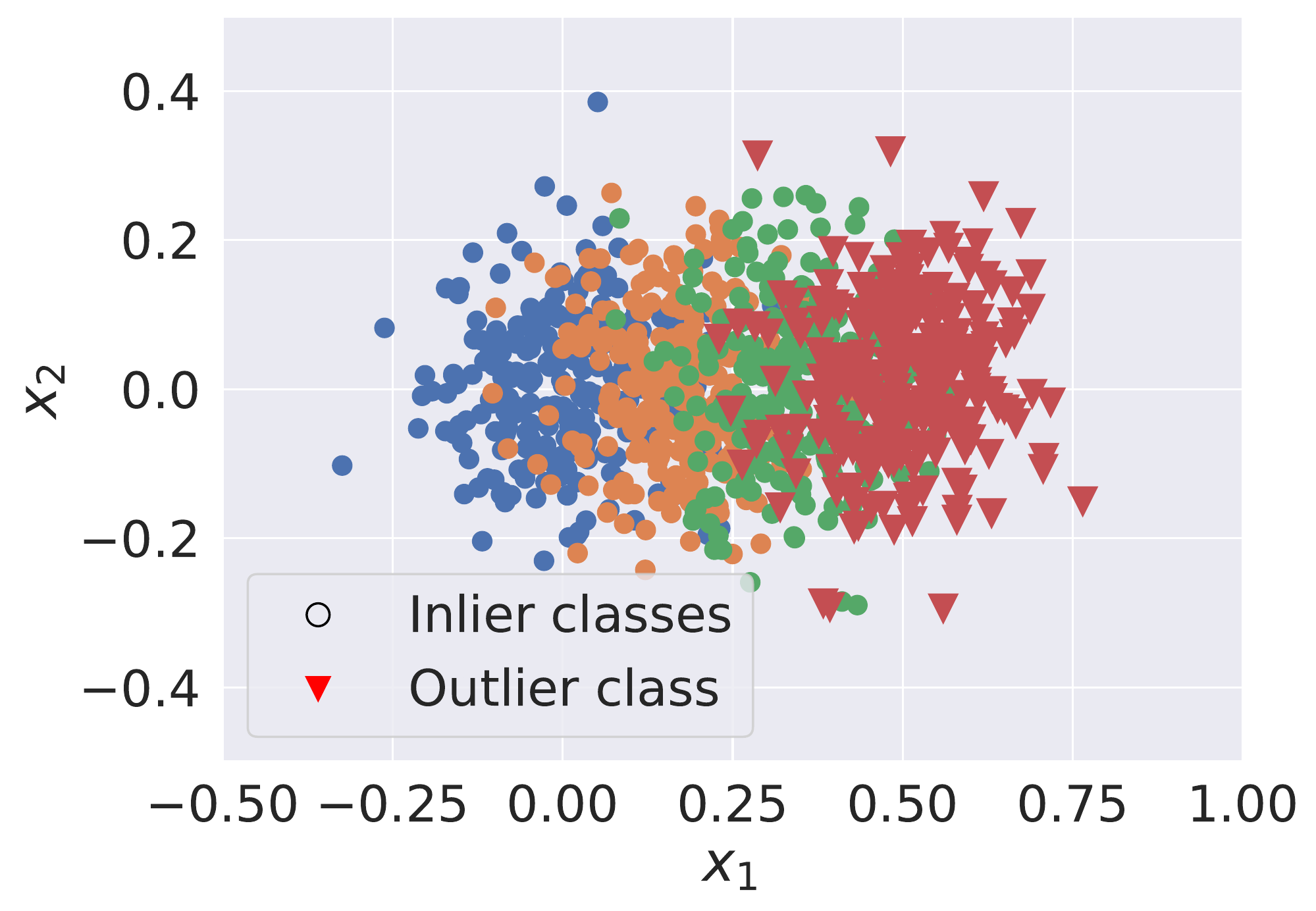}
            \includegraphics[scale=0.27]{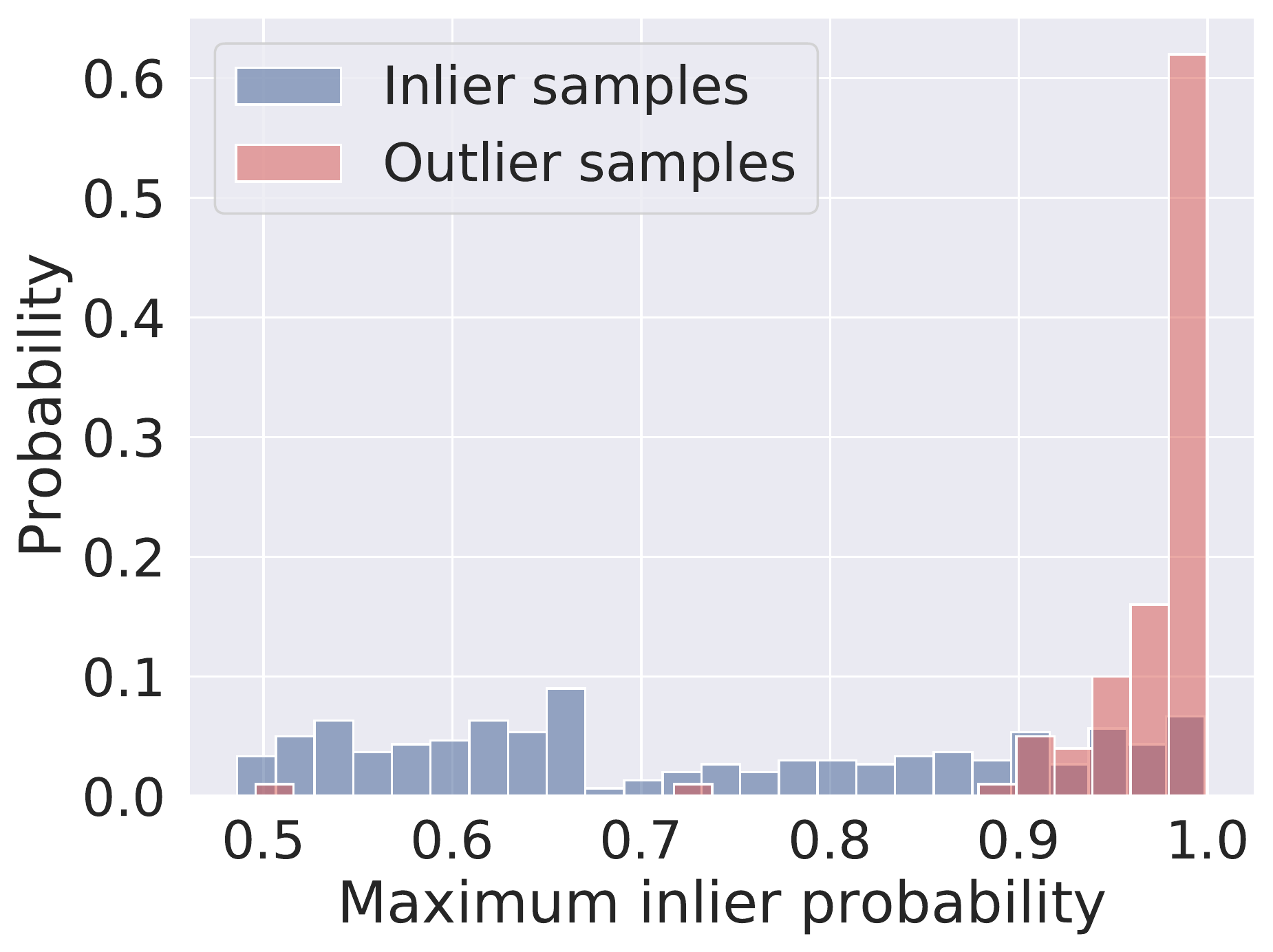}
        }%
    }
    
    \vspace{-5pt}
    \caption{
Example of two settings 
where the maximum softmax probability (MSP) baseline fails for OOD detection.
Setting {\bf (a)} considers
\emph{low-density OOD detection},
where
positive and negative samples drawn from a one-dimensional Gaussian distribution.
Samples \emph{away} from the origin will have $\Pr( x ) \sim 0$, 
and are thus outliers under the Bayes-optimal OOD detector.
However,
the MSP baseline will deem
samples \emph{near} the origin to be outliers,
as these have maximal 
$\max_{y} \Pr( y \mid x )$.
This illustrates the distinction between abstentions favoured by L2R 
(low label certainty)
and 
OOD detection 
(low density).
Setting {\bf (b)} considers    
\emph{open-set classification}
where there are $L = 4$ total classes, with the fourth class 
(denoted by ${\color{red} \blacktriangledown}$)
assumed to comprise outliers not seen during training.
    Each class-conditional is an isotropic Gaussian (left).
    Note that
    the maximum \emph{inlier} class-probability $\PTr( y \mid x )$ scores OOD samples significantly \emph{higher} than ID samples (right).    
    Thus, 
    the MSP baseline,
    which
    declares samples with low $\max_{y} \PTr( y \mid x )$ as outliers,
    will perform poorly.
    }
    \label{fig:chow-fail-synthetic-example}
    \vspace{-5pt}
\end{figure*}

\subsection{Illustration of maximum logit failure for open-set classification}
\label{app:expts-max-logit}

For the same setting as Figure~\ref{fig:chow-fail-synthetic-example},
we show 
in Figure~\ref{fig:max_inlier_logit_synthetic}
the maximum logit computed over the inlier distribution.
As with the maximum probability, the outlier samples tend to get a higher score than the inlier samples.

{For the same reason, rejectors that threshold the margin between the highest and the second-highest probabilities, instead of the maximum class probability, can also fail.
The use of other SC methods such as the cost-sensitive softmax cross-entropy~\citep{Mozannar:2020} may not be successful either, because the optimal solutions for these methods have the same form as MSP.}

\begin{figure}[!t]
    \centering
    \includegraphics[scale=0.325]{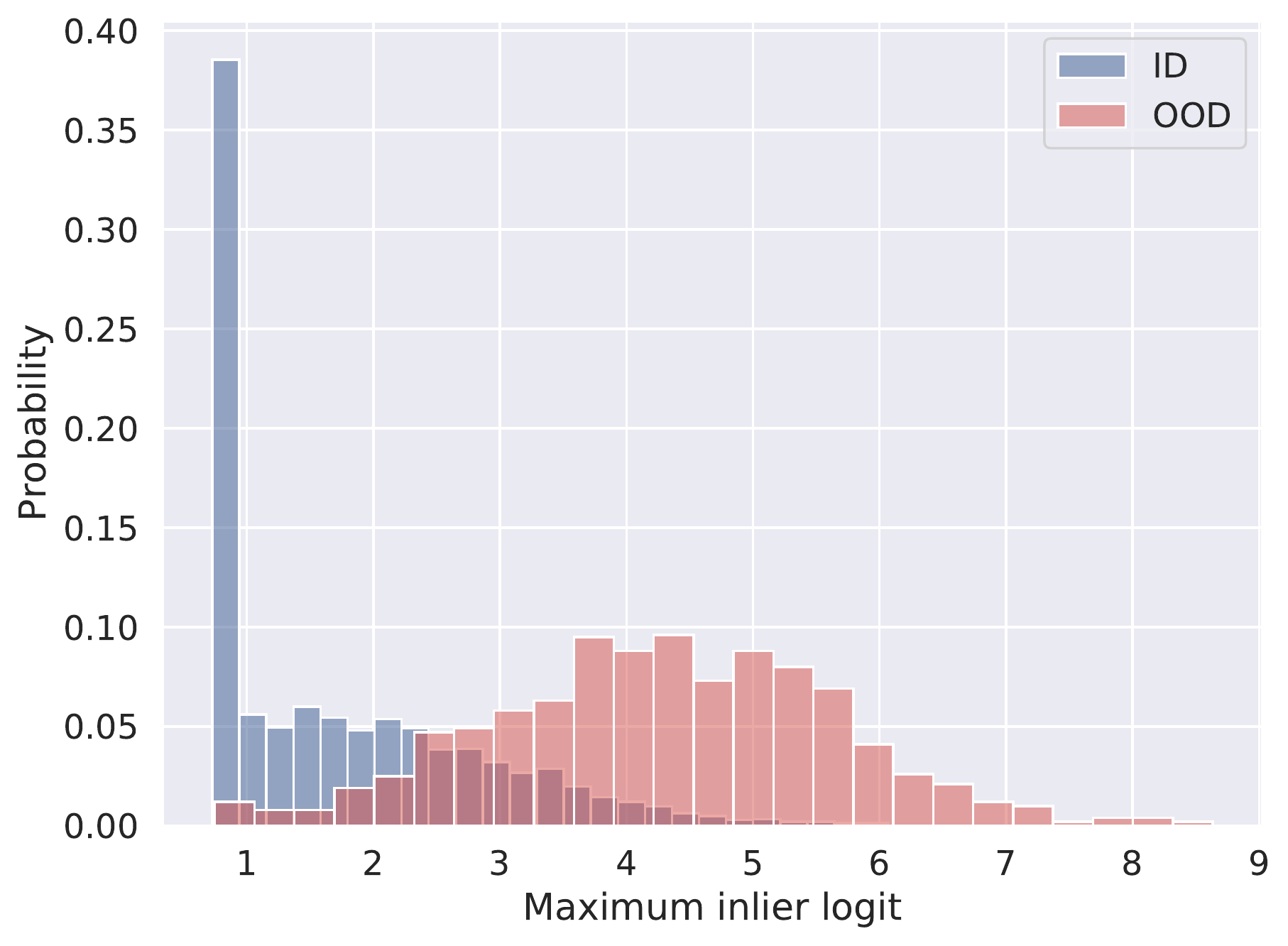}
    \caption{For the same setting as Figure~\ref{fig:chow-fail-synthetic-example},
we show 
the maximum logit computed over the inlier distribution.
As with the maximum probability, the outlier samples tend to get a higher score than the inlier samples.}
    \label{fig:max_inlier_logit_synthetic}
\vspace{-10pt}
\end{figure}

\section{Illustrating the impact of abstention costs}

\subsection{Impact of varying abstention costs $\costin, \costout$}
\label{app:expts-impact-alpha-beta}

Our joint objective that allows for abstentions on both ``hard'' and ``outlier'' samples is controlled by parameters $\costin, \costout$.
These reflect the costs on not correctly abstaining on samples from either class of anomalous sample.
Figure~\ref{fig:varying_alpha} and~\ref{fig:varying_beta} show the impact of varying these parameters while the other is fixed, for the synthetic open-set classification example of Figure~\ref{fig:chow-fail-example-open-set}.
The results are intuitive:
varying $\costin$ tends to favour abstaining on samples that are at the class boundaries,
while varying $\costout$ tends to favour abstaining on samples from the outlier class.
Figure~\ref{fig:varying_alpha_beta} confirms that when \emph{both} $\costin, \costout$ are varied, we achieve abstentions on both samples at the class boundaries, and samples from the outlier class.

\begin{figure}[!t]
    \centering
    
    \resizebox{\linewidth}{!}{
    \includegraphics[scale=0.325]{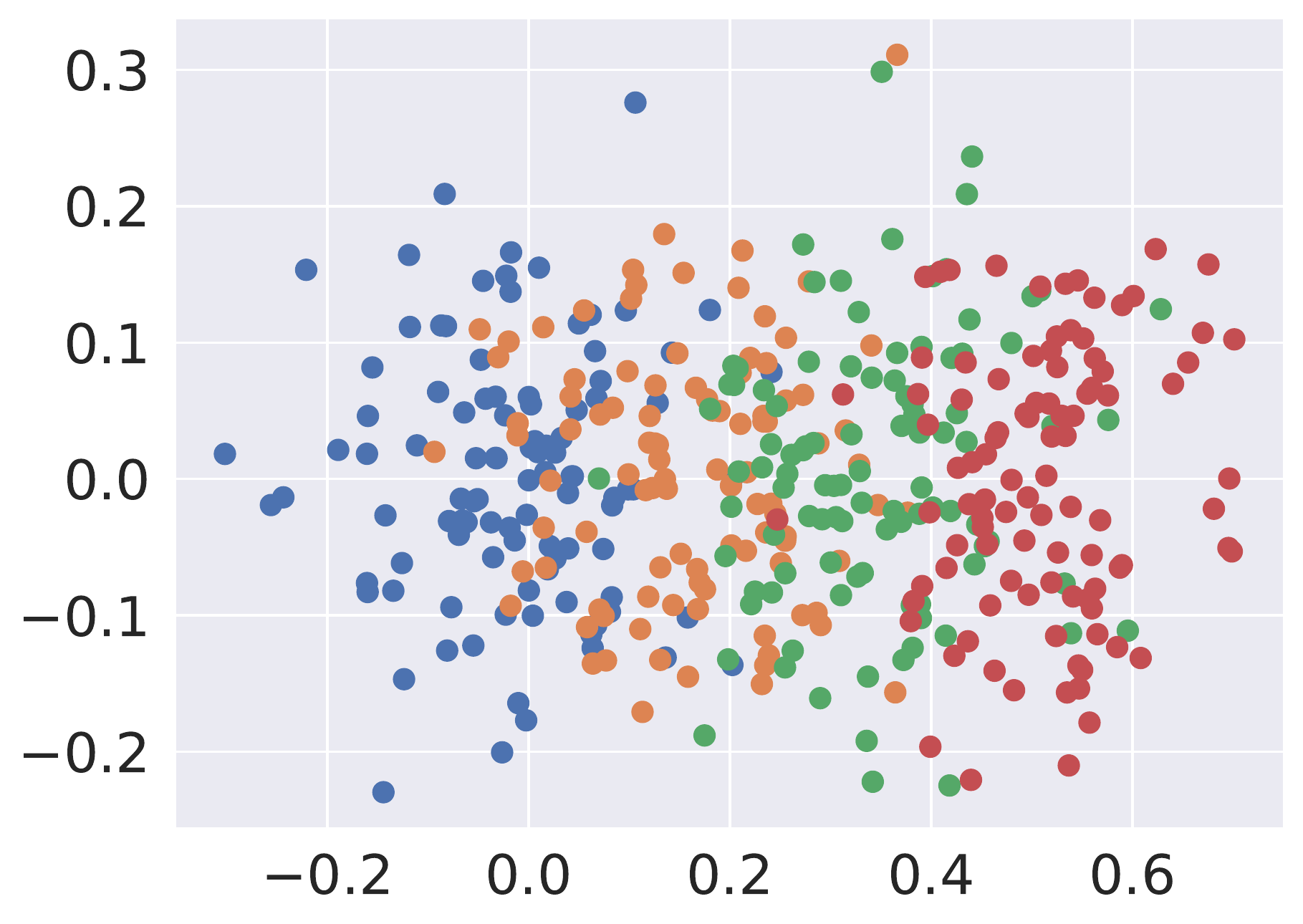}
    \includegraphics[scale=0.325]{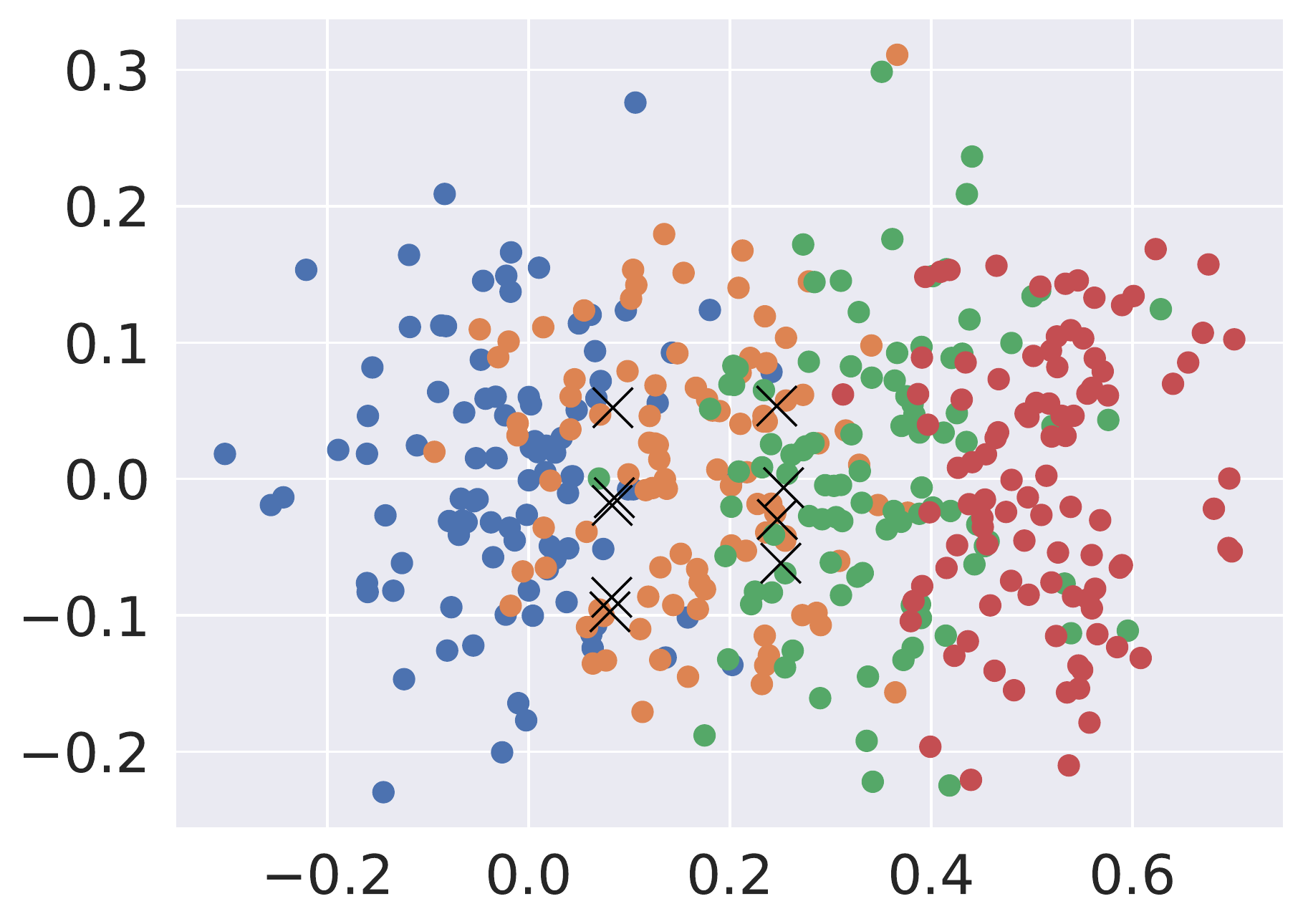}
    \includegraphics[scale=0.325]{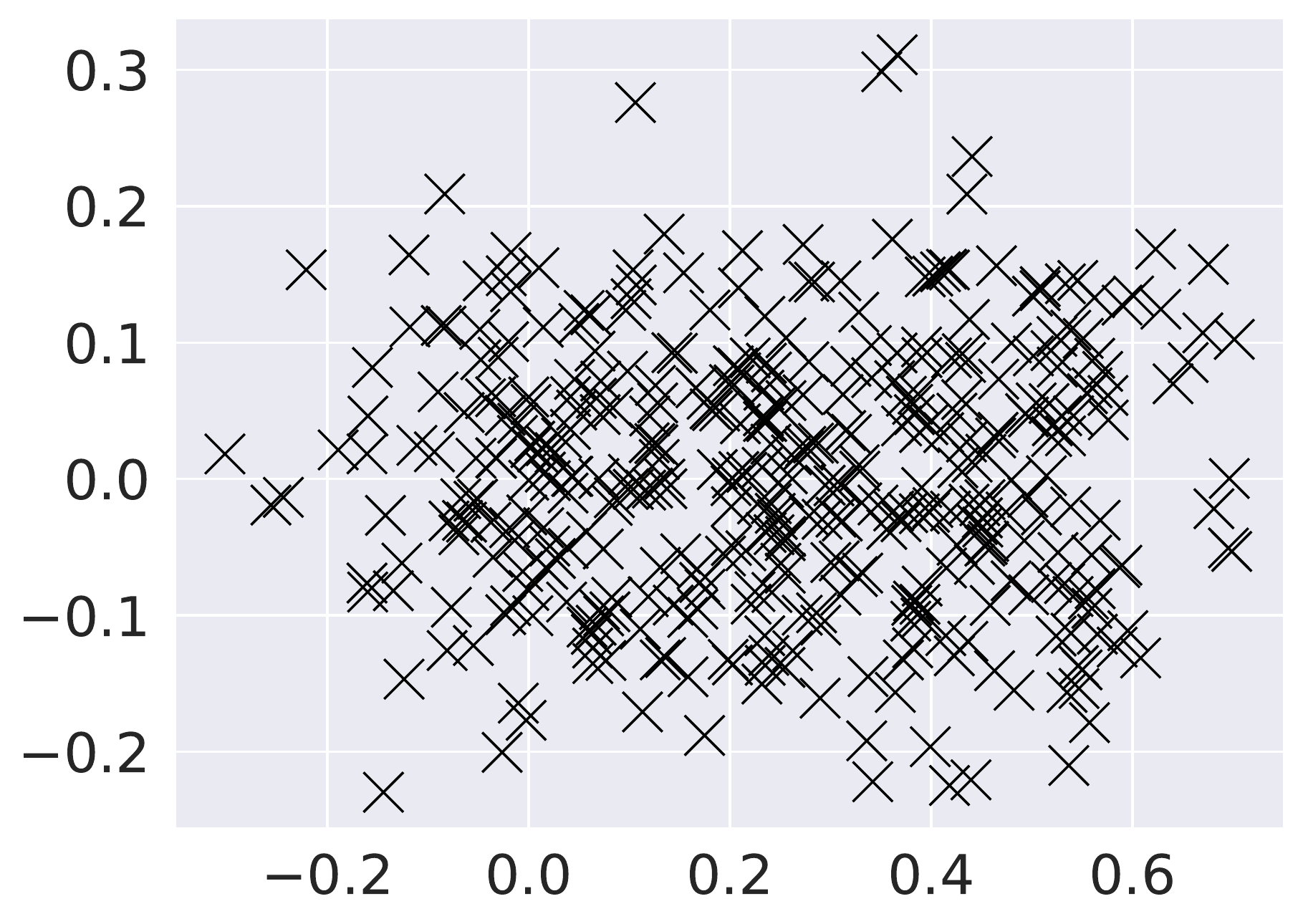}
    }
    
    \caption{Impact of varying $\costin$ for a fixed $\costout = 0.0$.
    The left plot shows the standard dataset, with $\costin = 1.0$.
    For intermediate $\costin = 0.5$ (middle), we abstain 
    (denoting by $\times$)
    only on the samples at the class boundaries.
    For $\costin = 0.0$ (right), we abstain on all samples.}
    \label{fig:varying_alpha}
\end{figure}

\begin{figure}[!t]
    \centering
    
    \resizebox{\linewidth}{!}{
    \includegraphics[scale=0.325]{figs/varying_rejection_threshold_a1.0_b0.0.pdf}
    \includegraphics[scale=0.325]{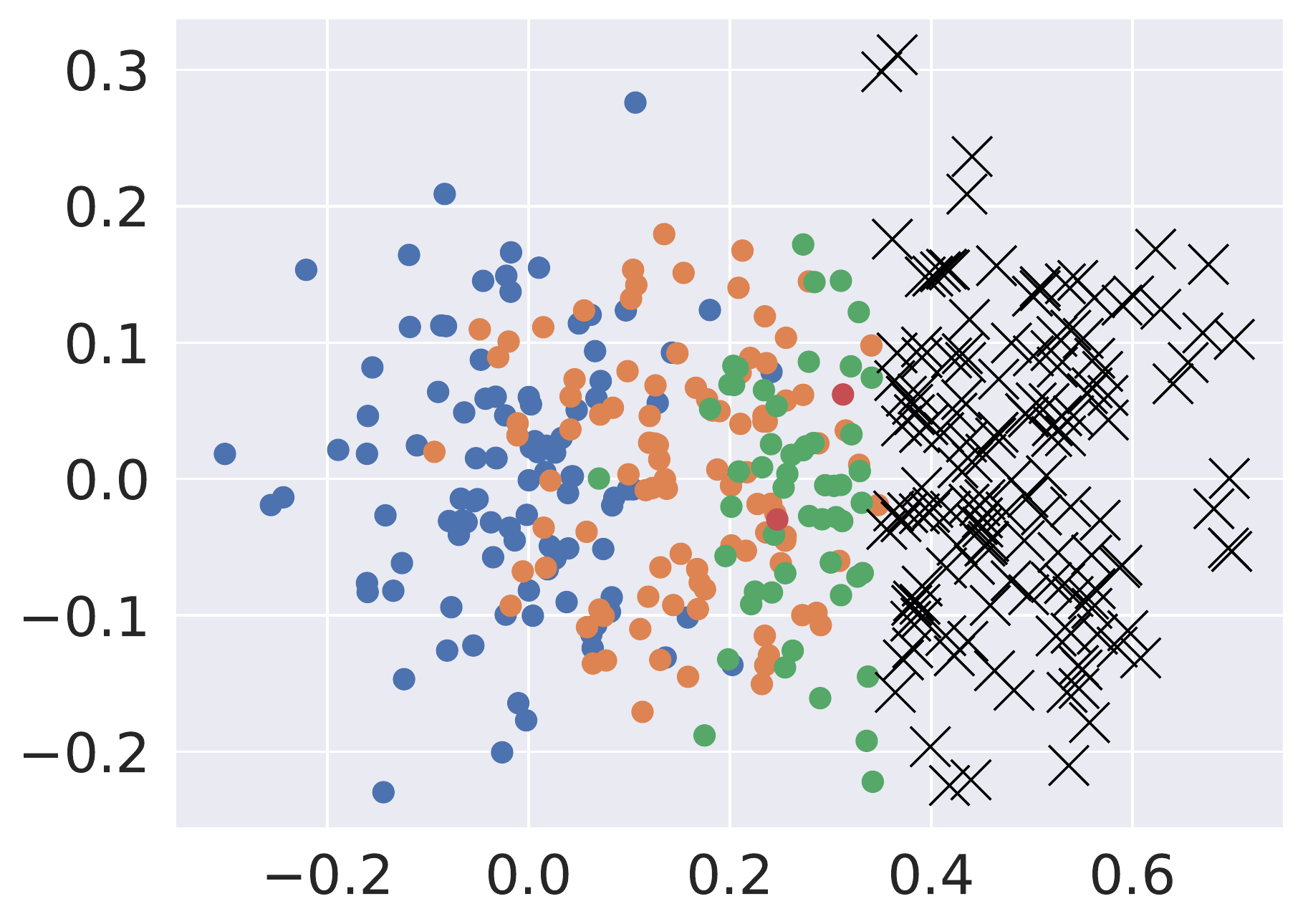}
    \includegraphics[scale=0.325]{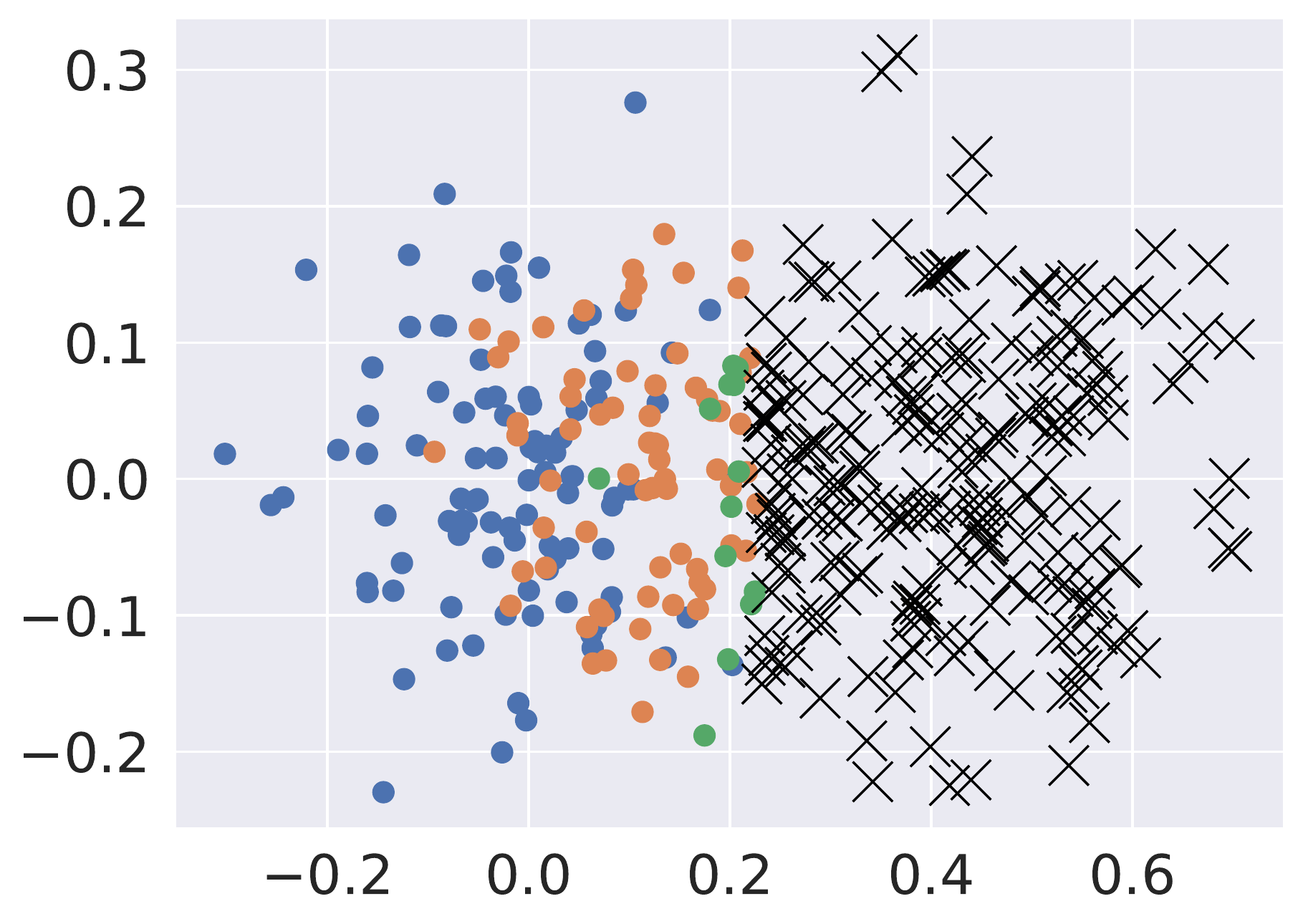}
    }
    
    \caption{Impact of varying $\costout$ for a fixed $\costin = 1.0$.
    The left plot shows the standard dataset, with $\costout = 0.0$.
    For intermediate $\costout = 1.0$ (middle), we abstain 
    (denoting by $\times$)
    only on the samples from the outlier class.
    For larger $\costout = 10.0$ (right), we start abstaining on inlier samples as well.}
    \label{fig:varying_beta}
\end{figure}

\begin{figure}[!t]
    \centering
    
    \resizebox{\linewidth}{!}{
    \includegraphics[scale=0.325]{figs/varying_rejection_threshold_a1.0_b0.0.pdf}
    \includegraphics[scale=0.325]{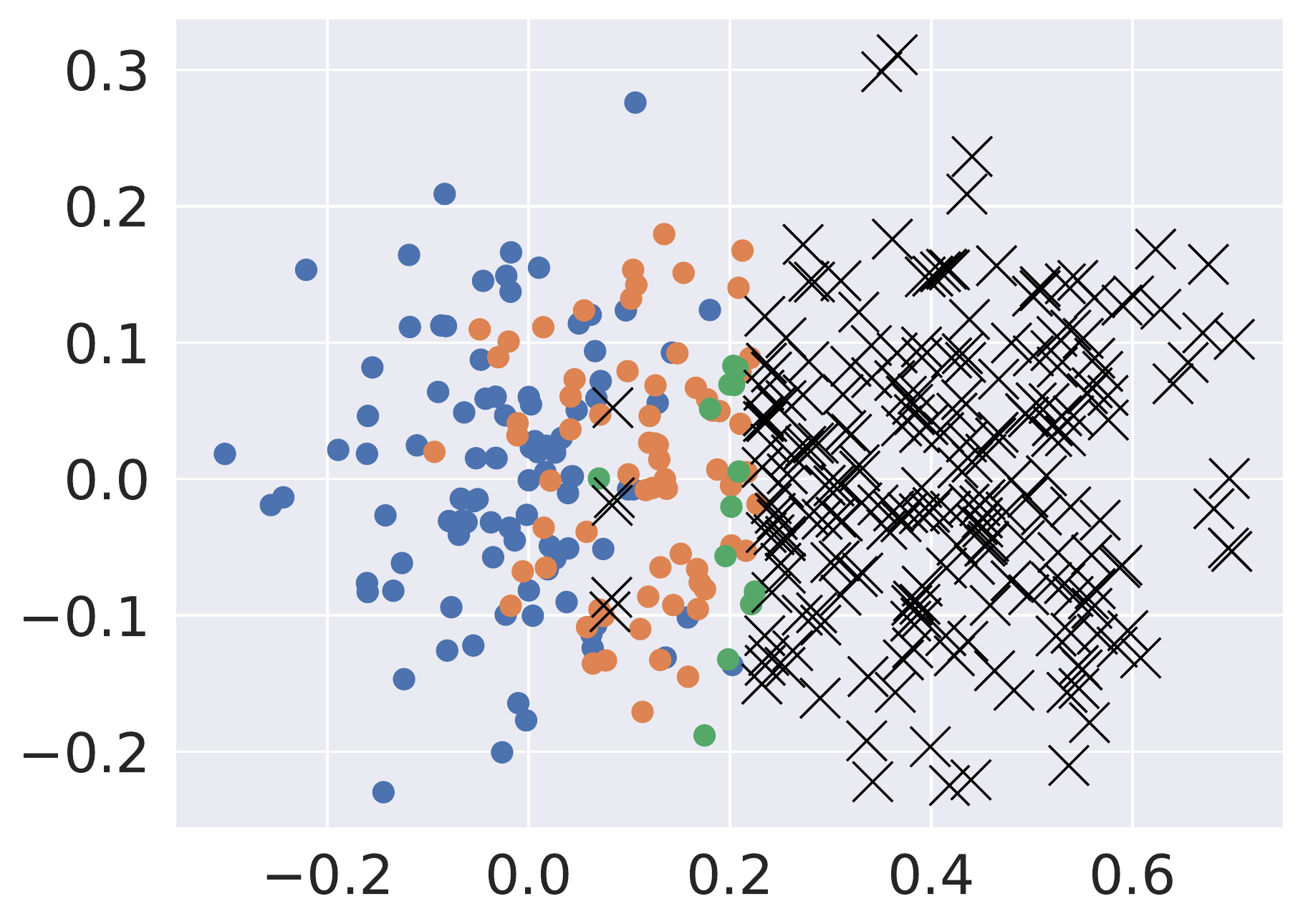}
    \includegraphics[scale=0.325]{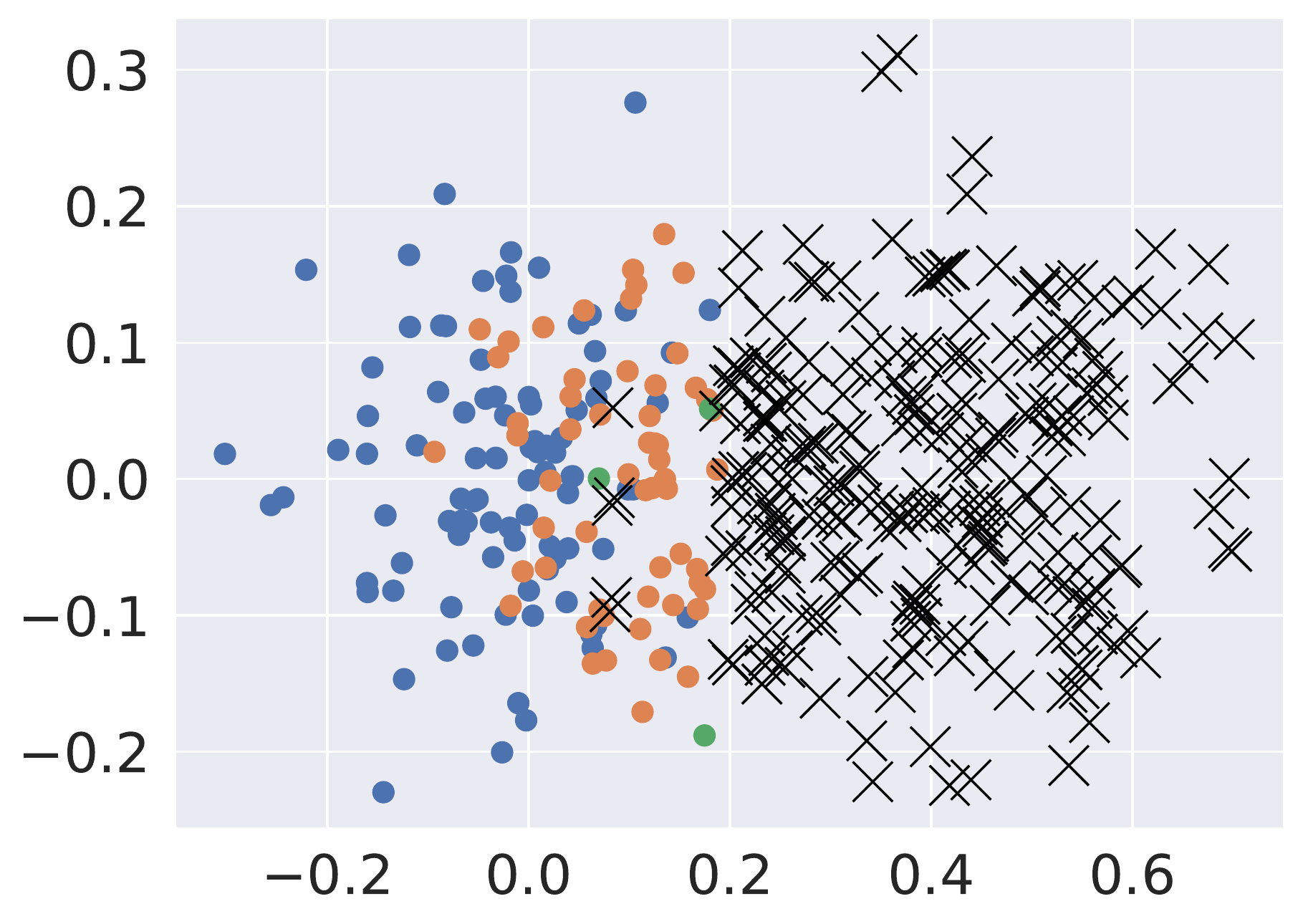}
    }
    
    \caption{Impact of varying both $\costin$ and $\costout$.
    The left plot shows the standard dataset, with $\costin = 1.0, \costout = 0.0$.
    Setting $\costin = 0.5, \costout = 1.0$ (middle)
    and $\costin = 0.5, \costout = 10.0$ (right)
    is shown to favour abstaining 
    (denoting by $\times$)
    on \emph{both} the samples at class boundaries, and the outlier samples.}
    \label{fig:varying_alpha_beta}
    \vspace{-10pt}
\end{figure}

\subsection{Impact of $\costout$ on OOD Detection Performance}
\label{app:expts-impact-beta}

For the same setting as Figure~\ref{fig:chow-fail-synthetic-example},
we consider the OOD detection performance of the
score 
$s( x ) = \max_{y \in [L]} \Pr_{\rm in}( y \mid x ) - \costout \cdot \frac{\Pr_{\rm in}( x )}{\Pr_{\rm out}( x )}$ 
as $\costout$ is varied.
Note that thresholding of this score determines the Bayes-optimal classifier. 
Rather than pick a fixed threshold, we use this score to compute the AUC-ROC for detecting whether a sample is from the outlier class, or not.
As expected, as $\costout$ increases
---
i.e., there is greater penalty on not rejecting an OOD sample
---
the AUC-ROC improves.

\begin{figure}[!t]
    \centering
    \includegraphics[scale=0.3]{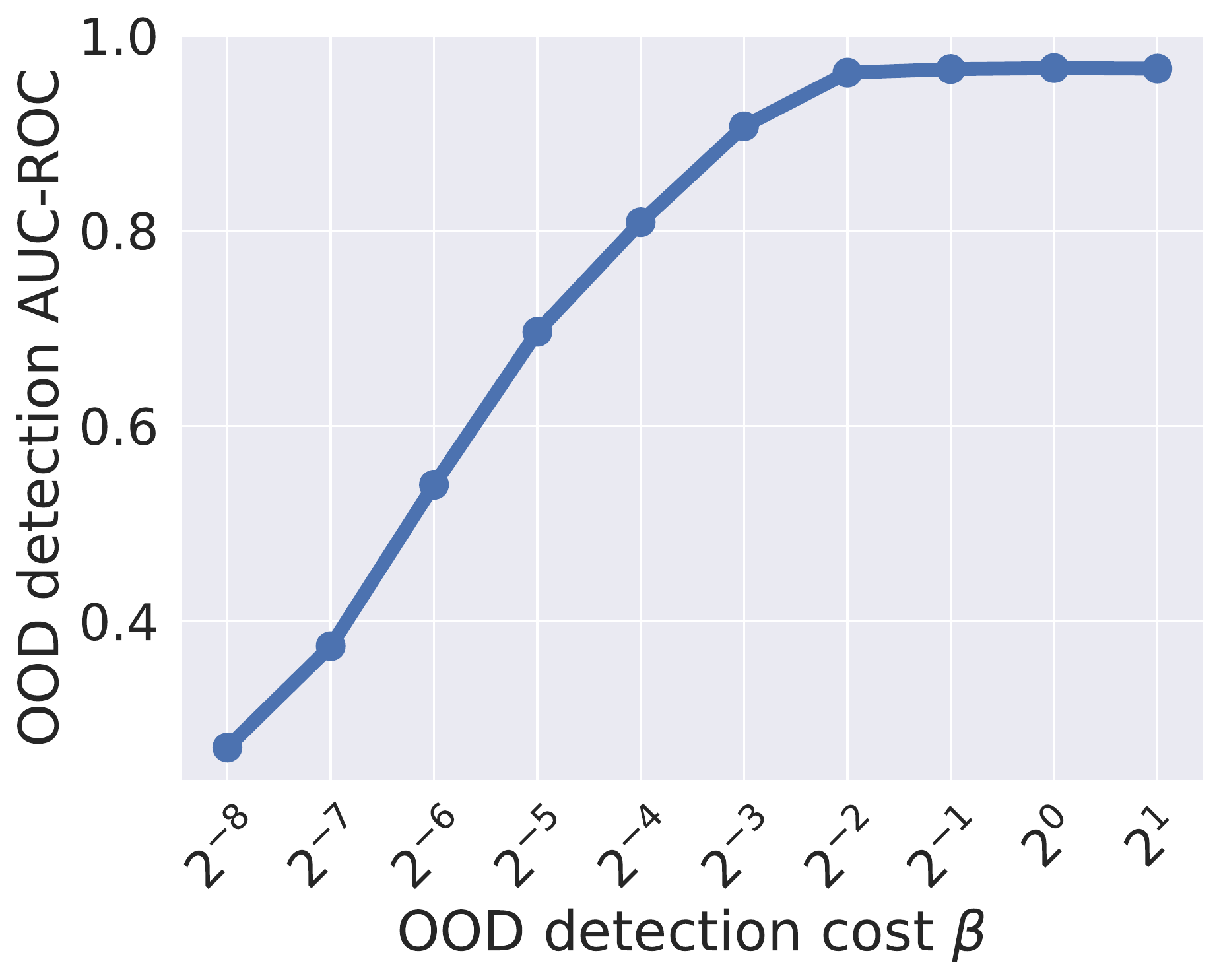}
    \caption{For the same setting as Figure~\ref{fig:chow-fail-synthetic-example},
we consider the OOD detection performance of the
score $s( x ) = \max_{y \in [L]} \Pr_{\rm in}( y \mid x ) - \costout \cdot \frac{\Pr_{\rm in}( x )}{\Pr_{\rm out}( x )}$ as $\costout$ is varied.
Specifically, we use this score to compute the AUC-ROC for detecting whether a sample is from the outlier class, or not.
As expected, as $\costout$ increases, the AUC-ROC improves.}
\vspace{-5pt}
    \label{fig:auc_instance_chow_synthetic}
\end{figure}

\section{Limitations and broader impact}
\label{app:limitations}
Recall that our proposed plug-in rejectors  seek to optimize for overall classification and OOD detection accuracy while keeping the total fraction of abstentions within a limit. However, the improved overall accuracy may come at the cost of poorer performance on smaller sub-groups. For example, \citet{jonesselective} show that Chow's rule or the MSP scorer ``can magnify existing accuracy disparities between various groups within a population, especially in the presence of spurious correlations''. 
It would be of interest to carry out a similar study with the two plug-in based rejectors proposed in this paper, and to  understand how both their inlier classification accuracy and their OOD detection performance varies across sub-groups. It would also be of interest to explore variants of our proposed rejectors that mitigate such disparities among sub-groups.  

Another limitation of our proposed plug-in rejectors is that they are only as good as the estimators we use for the density ratio $\frac{\Pr_{\rm in}(x)}{\Pr_{\rm out}(x)}$. When our estimates of the density ratio are not accurate, the plug-in rejectors are seen to often perform worse than the SIRC baseline that use the same estimates. Exploring better ways  for estimating the density ratio is an important direction for future work.

Beyond SCOD, the proposed rejection strategies are also applicable to the growing literature on adaptive inference \cite{Liu:2020}.  With the wide adoption of large-scale machine learning models with billions of parameters,  it is becoming increasingly important that we are able to perform speed up the inference time for these models. To this end, adaptive inference strategies have gained popularity, wherein one varies the amount of compute the model spends on an example, by for example, exiting early on ``easy'' examples.
The proposed approaches for SCOD may be adapted to equip early-exit models to not only exit early on high-confidence ``easy'' samples, but also exit early on samples that are deemed to be outliers. In the future, it would be interesting to explore the design of such early-exit models that are equipped with an OOD detector to aid in their routing decisions.
\vspace{-5pt}
\end{document}